\documentclass{article}

% if you need to pass options to natbib, use, e.g.:
%     \PassOptionsToPackage{numbers, compress}{natbib}
% before loading neurips_2023

% ready for submission
%\usepackage{neurips_2023}
%\linespread{0.935}

% to compile a preprint version, e.g., for submission to arXiv, add add the
% [preprint] option:
%     \usepackage[preprint]{neurips_2023}

% to compile a camera-ready version, add the [final] option, e.g.:
\usepackage[final]{neurips_2023}

% to avoid loading the natbib package, add option nonatbib:
% \usepackage[nonatbib]{neurips_2023}

\usepackage[utf8]{inputenc} % allow utf-8 input
\usepackage[T1]{fontenc}    % use 8-bit T1 fonts
\usepackage[hidelinks]{hyperref}       % hyperlinks
\usepackage{url}            % simple URL typesetting
\usepackage{booktabs}       % professional-quality tables
\usepackage{amsfonts}       % blackboard math symbols
\usepackage{nicefrac}       % compact symbols for 1/2, etc.
\usepackage{microtype}      % microtypography
\usepackage{xcolor}         % colors

\usepackage{graphicx} 

\usepackage{graphicx} 
\usepackage{algorithm}
\usepackage{algpseudocode}
\usepackage{amsmath,amssymb}
\usepackage{amsthm}
\usepackage[font=small,skip=0pt]{caption}
\theoremstyle{definition}
\newtheorem{definition}{Definition}
\usepackage{bbm}
\usepackage{todonotes}
\usepackage{soul}
\usepackage{enumitem}
\usepackage{subfigure}
\usepackage{microtype,wrapfig}

\algnewcommand\algorithmicforeach{\textbf{for each}}
\algdef{S}[FOR]{ForEach}[1]{\algorithmicforeach\ #1\ \algorithmicdo}

\DeclareMathOperator*{\E}{\mathbb{E}}

\newtheorem{prop}{Proposition}
\newtheorem{theorem}{Theorem}

\algrenewcommand\algorithmicindent{0.5em}

\title{Iterative Reachability Estimation for Safe~Reinforcement Learning}

\author{%
  Milan~Ganai\\
  UC San Diego\\
  \texttt{mganai@ucsd.edu} \\
   \And
  Zheng~Gong\\
  UC San Diego\\
  \texttt{zhgong@ucsd.edu} \\
  \And
  Chenning~Yu\\
  UC San Diego\\
  \texttt{chy010@ucsd.edu} \\
  \AND
  Sylvia~Herbert\\
  UC San Diego\\
  \texttt{sherbert@ucsd.edu} \\
 \And
  Sicun~Gao\\
  UC San Diego\\
  \texttt{sicung@ucsd.edu} \\
  % \And
  % Coauthor \\
  % Affiliation \\
  % Address \\
  % \texttt{email} \\
}

\begin{document}

\maketitle

\begin{abstract}
    Ensuring safety is important for the practical deployment of reinforcement learning (RL). Various challenges must be addressed, such as handling stochasticity in the environments, providing rigorous guarantees of persistent state-wise safety satisfaction, and avoiding overly conservative behaviors that sacrifice performance. We propose a new framework, Reachability Estimation for Safe Policy Optimization (RESPO), for safety-constrained RL in general stochastic settings. In the feasible set where there exist violation-free policies, we optimize for rewards while maintaining persistent safety. Outside this feasible set, our optimization produces the safest behavior by guaranteeing entrance into the feasible set whenever possible with the least cumulative discounted violations. We introduce a class of algorithms using our novel reachability estimation function to optimize in our proposed framework and in similar frameworks such as those concurrently handling multiple hard and soft constraints. We theoretically establish that our algorithms almost surely converge to locally optimal policies of our safe optimization framework. We evaluate the proposed methods on a diverse suite of safe RL environments from Safety Gym, PyBullet, and MuJoCo, and show the benefits in improving both reward performance and safety compared with state-of-the-art baselines.

\end{abstract}

\section{Introduction}
\vspace{-5pt}
\label{section:Intro}

Safety-Constrained Reinforcement Learning is important for a multitude of real-world safety-critical applications~\cite{garcia2015comprehensive, gu2022review}. These situations require guaranteeing persistent safety and achieving high performance while handling environment uncertainty. Traditionally, methods have been proposed to obtain optimal control by solving within the Constrained Markov Decision Process (CMDP) framework, which constrains expected cumulative violation of the constraints along trajectories to be under some threshold. Trust-region approaches~\cite{YangProj, zhang2020first, yang2022cup} follow from the Constrained Policy Optimization algorithm~\cite{achiam2017constrained} which tries to guarantee monotonic performance improvement while satisfying the cumulative constraint requirement. However, these approaches are generally very computationally expensive, and their first or second order Taylor approximations may have repercussions on performance~\cite{gu2022review}. Several primal-dual approaches~\cite{tessler, chow2017risk, ma2021feasible} have been proposed with better computational efficiency and use multi-timescale frameworks~\cite{borkar2009stochastic} for asymptotic convergence to a local optimum. Nonetheless, these approaches inherit CMDP's lack of crucial rigorous safety guarantees that ensure persistent safety. Particularly, maintaining a cost budget for hard constraints like collision avoidance is insufficient. Rather, it's crucial to prioritize obtaining safe, minimal-violation policies when possible.

There have been a variety of proposed control theoretic functions that provide the needed guarantees on persistent safety for such hard constraints, including Control Barrier Functions (CBFs) \cite{cbfsurvey} and Hamilton-Jacobi (HJ) reachability analysis \cite{fisac2015reach, hjreachabilityoverview}. CBFs are energy-based certification functions, whose zero super-level set is control invariant \cite{ames2014rapidly,ames_robust}. The drawbacks are the conservativeness of the feasible set and the lack of a systematic way to find these functions for general nonlinear systems. Under certain assumptions, approximate energy-based certificates can be learned with sufficient samples~\cite{NLC, Chang2021, qin2022quantifying, Ganai2023, yang2023model, wang2023enforcing}. HJ reachability analysis focuses on finding the set of initial states such that there exists some control to avoid and/or reach a certain region. It computes a value function, whose zero sub-level set is the largest feasible set, together with the optimal control. The drawback is the well known `curse of dimensionality,' as HJ reachability requires solving a corresponding Hamilton-Jacobi-Bellman Partial Differential Equation (HJBPDE) on a discrete grid recursively. Originally, both approaches assume the system dynamics and environment are known, and inputs are bounded by some polytopic constraints, while recent works have applied them to model-free settings \cite{molnar2021model, akametalu2018minimum, fisac2019bridging}. For both CBF and HJ reachability, controllers that optimize performance and guarantee safety can be synthesized in different ways online~\cite{cbfqptac}. There also exist hard-constraint approaches like~\cite{xiong2023provably} have theoretical safety guarantees without relying on control theory; however, they require learning or having access to the transition-function dynamics.

Consequently, it can be beneficial to combine CMDP-based approaches to find optimal control and the advantages of control-theoretic approaches to enable safety. The recent framework of Reachability Constrained Reinforcement Learning (RCRL)~\cite{yu2022reachability} proposed an approach that guarantees persistent safety and optimal performance when the agent is in the feasible set. However, RCRL is \emph{limited to learning deterministic policies in deterministic environments}. 
In the general RL setting, stochasticity incurs a variety of challenges including how to define membership in the feasible set since it is no longer a binary question but rather a probabilistic one. 
Another issue is that for states outside optimal feasible set, RCRL optimization \emph{cannot guarantee entrance into the feasible set when possible}. This may have undesirable consequences including indefinitely violating constraints with no recovery.

In light of the above challenges, we propose a new framework and class of algorithms called Reachability Estimation for Safe Policy Optimization (\textbf{RESPO}). Our main contributions are:

%\begin{itemize}[leftmargin=*]
%\renewcommand{\labelitemi}{\textbullet}
$\bullet$\hspace{0.3cm} We introduce reachability estimation methods for general stochastic policies that predict the likelihood of constraint violations in the future. Using this estimation, a policy can be optimized such that \textbf{(i)} when in the feasible set, it maintains persistent safety and optimizes for reward performance, and \textbf{(ii)} when outside the feasible set, it produces the safest behavior and enters the feasible set if possible. We also demonstrate how our optimization can incorporate multiple hard and soft constraints while even prioritizing the hard constraints.

$\bullet$\hspace{0.3cm} We provide a novel class of actor-critic algorithms based on learning our reachability estimation function, and prove our algorithm converges to a locally optimal policy of our proposed optimization.

$\bullet$\hspace{0.3cm} We perform comprehensive experiments showing that our approach achieves high performance in complex, high-dimensional Safety Gym, PyBullet-based, and MuJoCo-based environments compared to other state-of-the-art safety-constrained approaches while achieving small or even $0$ violations. We also show our algorithm's performance in environments with multiple hard and soft constraints.

\section{Related Work}
\vspace{-5pt}
\subsection{Constrained Reinforcement Learning}
\vspace{-5pt}
Safety-constrained reinforcement learning has been addressed through various proposed learning mechanisms solving within an optimization framework. CMDP~\cite{CMDP, brunke2022safe} is one framework that augments MDPs with the cost function: the goal is to maximize expected reward returns while constraining cost returns below a manually chosen threshold. Trust-region methods~\cite{YangProj, zhang2020first, yang2022cup, achiam2017constrained} and primal-dual based approaches~\cite{tessler, ma2021feasible, ray2019benchmarking, duan2022adaptive} are two classes of CMDP-based approaches. Trust-region approaches generally follow from Constrained Policy Optimization (CPO)~\cite{achiam2017constrained}, which approximates the constraint optimization problem with surrogate functions for the reward objective and safety constraints, and then performs projection steps on the policy parameters with backtracking line search. Penalized Proximal Policy Optimization~\cite{Zhang2022PenalizedPP} improves upon past trust-region approaches via the clipping mechanism, similar to Proximal Policy Optimization's~\cite{PPO} improvement over Trust Region Policy Optimization~\cite{TRPO}. Constraint-Rectified Policy Optimization~\cite{xu2021crpo} takes steps toward improving reward performance if constraints are currently satisfied else takes steps to minimize constraint violations. PPO Lagrangian~\cite{ray2019benchmarking} is a primal-dual method combining Proximal Policy Optimization (PPO)~\cite{PPO} with lagrangian relaxation of the safety constraints and has relatively low complexity but outperforms CPO in constraint satisfaction. \cite{tessler} uses a penalized reward function like PPO Lagrangian, and demonstrates convergence to an optimal policy through the multi-timescale framework, introduced in~\cite{borkar2009stochastic}. \cite{qin2021density} proposes mapping the value function constraining problem as constraining densities of state visitation. Overall, while these algorithms provide benefits through various learning mechanisms, they inherit the problems of the CMDP framework. Specifically, they do not provide rigorous guarantees of persistent safety and so are generally unsuitable for state-wise constraint optimization problems.

\subsection{Hamilton-Jacobi Reachability Analysis}
 
HJ reachability analysis is a rigorous approach that verifies safety or reachability for dynamical systems. Given any deterministic nonlinear system and target set, it computes the viscosity solution for the HJBPDE, whose zero sub-level set is the backward reachable set (BRS), meaning there exists some control such that the states starting from this set will enter the target set in future time (or stay away from the target set for all future time). However, the curse of dimensionality and the assumption of knowing the dynamics and environment in advance restrict its application, especially in the model-free RL setting where there is no access to the entire observation space and dynamics at any given time. Decomposition~\cite{decomposition}, warm-starting~\cite{herbert2019reachability}, sampling-based reachability \cite{LewJansonEtAl2022}, and Deepreach~\cite{bansal2021deepreach} have been proposed to solve the curse of dimensionality. \cite{akametalu2018minimum, fisac2019bridging} proposed methods to bridge HJ analysis with RL by modifying the HJBPDE. The work of~\cite{yu2022reachability} takes framework into a deterministic hard-constraint setting. There are additionally model-based HJ reachability approach for reinforcement learning-based controls. \cite{yu2022safe} is a model-based extension of~\cite{yu2022reachability}. Approaches like~\cite{kochdumper2023provably, gu2022constrained, selim2022safe} use traditional HJ reachability for RL control while learning or assuming access to the system dynamics model. In stochastic systems, finding the probability of reaching a target while avoiding certain obstacles are key problems. \cite{abate2008probabilistic, summers2010verification} constructed the theoretic framework based on dynamic programming and consider finite and infinite time reach-avoid problems. \cite{kariotoglou2016computational, vinod2021stochastic, chapman2021risk} propose computing the stochastic reach-avoid set together with the probability in a tractable manner to address the curse of dimensionality.

\vspace{-5pt}
\section{Preliminaries}

\subsection{Markov Decision Processes}
\label{section:MDP}
Markov Decision Processes (MDP) are defined as $\mathcal{M}:=\langle \mathcal{S}, \mathcal{A}, P, r, h, \gamma\rangle$. $\mathcal{S}$ and $\mathcal{A}$ are the state and action spaces respectively. $P:\mathcal{S}\times\mathcal{A}\times\mathcal{S}\mapsto[0,1]$ is the transition function capturing the environment dynamics. $r:\mathcal{S}\times\mathcal{A} \mapsto \mathbb{R}$ is the reward function associated with each state-action pair, $h:\mathcal{S}\mapsto \mathbb{R}^+_0$ is the safety loss function that maps a state to a non-negative real value, which is called the constraint value, or simply cost. $H_{\min}$ is the minimum \emph{non-zero} value of function $h$; $H_{\max}$ is upper bound on function $h$. $\gamma$ is a discount factor in the range $(0,1)$. $\mathcal{S}_I$ is initial state set, $d_0$ is initial state distribution, and $\pi(a|s)$ is a stochastic policy that is parameterized by the state and returns an action distribution from which an action can be sampled and affects the environment defined by the MDP. In unconstrained RL, the goal is to learn an optimal policy $\pi^*$ maximizing expected discounted sum of rewards, i.e. $\pi^*=\arg\max_\pi \E_{s\sim d_0} V^\pi (s)$, where $V^\pi(s):=\E_{\tau \sim \pi,P(s)} [\sum_{s_t\in \tau} \gamma^{t} r(s_t,a_t)]$. Note: $\tau \sim \pi,P(s)$ indicates sampling trajectory $\tau$ for horizon $T$ starting from state $s$ using policy $\pi$ in MDP with transition model $P$, and $s_t\in \tau$ is the $t^{th}$ state in trajectory $\tau$. 

\vspace{-5pt}
\subsection{Constrained Markov Decision Process}
CMDP attempts to optimize the reward returns $V^\pi(s)$ under the constraint that the cost return is below some manually chosen threshold $\chi$. Specifically:
\vspace{-5pt}
\begin{equation}
\tag{CMDP}
\label{eqn:CMDP}
    \max_\pi \E_{s\sim d_0}[V^\pi(s)],\text{ subject to }  \E_{s\sim d_0}[V^\pi_c(s)] \leq \chi,
\end{equation}

\vspace{-10pt}
where cost return function $V^\pi_c(s)$ is often defined as $V^\pi_c(s):=\E_{\tau \sim \pi,P(s)} [\sum_{s_t \in \tau} \gamma^t h(s_t)]$. 

While many approaches have been proposed to solve within this framework, CMDPs have several difficulties: $1.$ cost threshold $\chi$ often requires much tuning while using prior knowledge of the environment; and $2.$ CMDP often permits some positive average cost which is incompatible with state-wise hard constraint problems, since $\chi$ is usually chosen to be above 0.

\vspace{-5pt}
\section{Stochastic Hamilton-Jacobi Reachability for Reinforcement Learning}

Classic HJ reachability considers finding the largest feasible set for deterministic environments. In this section, we apply a similar definition in~\cite{abate2008probabilistic,summers2010verification} and define the stochastic reachability problem.
\subsection{Persistent Safety and HJ Reachability for Stochastic Systems}
\label{section:Feasibility}

The instantaneous safety can be characterized by the safe set $\mathcal S_s$, which is the zero level set of the safety loss function $h: \mathcal S \mapsto \mathbb R^+_0$. The unsafe (i.e. violation) set $\mathcal S_v$ is the complement of the safe set. 
\begin{definition} Safe set and unsafe set: $\mathcal S_{s}:= \{ s \in \mathcal S: h(s) = 0 \},  \mathcal S_{v}:= \{ s \in \mathcal S: h(s) > 0 \}$.

\end{definition} 
\vspace{-5pt}
We will write $\mathbbm 1_{s\in \mathcal S_v}$ as the {\em instantaneous violation indicator function}, which is $1$ if the current state is in the violation set and $0$ otherwise. Note that the safety loss function $h$ is different from the instantaneous violation indicator function since $h$ captures the magnitude of the violation at the state.

It is insufficient to only consider instantaneous safety. When the environment and policy are both deterministic, we easily have a unique trajectory for starting from each state (i.e. the future state is uniquely determined) under Lipschitz environment dynamics. In classic HJ reachability literature~\cite{hjreachabilityoverview}, for a deterministic MDP's transition model $P_d$ and deterministic policy $\pi_d$, the set of states that guarantees persistent safety is captured by the zero sub-level set of the following value function: 
\begin{definition}
    Reachability value function $V_h^\pi : \mathcal S \mapsto \mathbb R^+_0$ is: $V_h^\pi (s) := \max_{s_t\in \tau \sim \pi_d,P_d(s)} h(s_t)$.
\end{definition}
\vspace{-5pt}

However, when there's a stochastic environment with transition model $P(\cdot | s,a)$ and policy $\pi(\cdot | s)$, the future states are not uniquely determined. This means for a given initial state and policy, there may exist many possible trajectories starting from this state. In this case, instead of defining a binary function that only indicates the existence of constraint violations, we define the reachability estimation function (REF), which captures the probability of constraint violation:

\begin{definition} \label{def:REF} The reachability estimation function (REF) $ \phi^\pi : \mathcal S \mapsto [0,1]$ is defined as: 
\vspace{-3pt}
\begin{align*}
    \phi^\pi (s) := \E _{\tau \sim \pi,P(s)} \max _{s_t\in \tau} \mathbbm{1}_{(s_t|s_0 = s, \pi)\in \mathcal S_v}.
\end{align*}
\end{definition}
\vspace{-10pt}

In a specific trajectory $\tau$, the value $ \max _{s_t\in \tau} \mathbbm{1}_{(s_t|s_0 = s, \pi)\in \mathcal S_v}$ will be 1 if there exist constraint violations and 0 if there exists no violation, which is binary. Taking expectation over this binary value for all the trajectories, we get the desired probability. We define optimal REF based on an optimally safe policy $\pi^* = \arg\min_\pi V^\pi_c(s)$ (note that this policy may not be unique).
\begin{definition} \label{def:optimal REF} The optimal reachability estimation function $ \phi^* : \mathcal S \mapsto [0,1]$ is: $\phi^* (s) := \phi^{\pi^*} (s)$.
\end{definition}
\vspace{-5pt}

Interestingly, we can utilize the fact the instantaneous violation indicator function produces binary values to learn the REF function in a bellman recursive form. The following will be used later:
\begin{theorem}
\label{thm:BellmanREF}
    The REF can be reduced to the following recursive Bellman formulation:
    \vspace{-3pt}
    \begin{align*}
    \phi^\pi (s) = \max \{ \mathbbm{1}_{s\in \mathcal S_v} , \E_{s' \sim \pi,P(s)} \phi^\pi(s') \},
    \end{align*}
    
    \vspace{-10pt}
    where $s' \sim \pi,P(s)$ is a sample of the immediate successive state (i.e., $s' \sim P(\cdot| s , a\sim \pi(\cdot | s))$) and the expectation is taken over all possible successive states. The proof can be found in the appendix. 
\end{theorem}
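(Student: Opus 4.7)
The plan is to prove the Bellman recursion by a case split on whether the initial state $s$ lies in the violation set $\mathcal S_v$, using the fact that the instantaneous violation indicator is binary (so the maximum over a trajectory behaves like a logical disjunction) together with the Markov property of the policy and transition model.

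First I would rewrite the definition as $\phi^\pi(s) = \mathbb E_{\tau\sim\pi,P(s)}\bigl[\max_{t\ge 0} \mathbbm{1}_{s_t\in\mathcal S_v}\bigr]$ where $s_0 = s$, and observe that because the indicator is $\{0,1\}$-valued, the max equals $\mathbbm{1}_{\exists t\ge 0:\, s_t\in\mathcal S_v}$. I would then split into two cases. In the case $s\in\mathcal S_v$, the event $\{\exists t\ge 0:\, s_t\in\mathcal S_v\}$ holds deterministically because $s_0=s$, so $\phi^\pi(s)=1$; on the right-hand side, $\mathbbm{1}_{s\in\mathcal S_v}=1$ and since $\phi^\pi\le 1$ always, the max with $\mathbb E_{s'}\phi^\pi(s')$ is also $1$, matching.

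In the case $s\notin\mathcal S_v$, the first-step indicator $\mathbbm{1}_{s_0\in\mathcal S_v}$ is zero, so the maximum over the trajectory equals the maximum over the suffix $s_1,s_2,\dots$. Here I would invoke the Markov property: conditioning on $s_1 = s'$, the distribution of the suffix trajectory under $\pi,P$ is exactly the distribution of a fresh trajectory starting from $s'$. Applying the tower property of conditional expectation,
\begin{align*}
\phi^\pi(s) &= \mathbb E_{s'\sim\pi,P(s)}\bigl[\mathbb E_{\tau'\sim\pi,P(s')}\max_{s_t'\in\tau'}\mathbbm{1}_{s_t'\in\mathcal S_v}\bigr] = \mathbb E_{s'\sim\pi,P(s)}[\phi^\pi(s')].
\end{align*}
Since $\phi^\pi(s')\ge 0$ and $\mathbbm{1}_{s\in\mathcal S_v}=0$, the right-hand side $\max\{0,\mathbb E_{s'}\phi^\pi(s')\}$ equals this same quantity, again matching.

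The main obstacle, though minor, is making the Markov/tower-property step precise: one must carefully argue that the law of the suffix $(s_1,s_2,\dots)$ conditioned on $s_1$ coincides with the law of a trajectory freshly sampled from $s_1$ under the same $\pi$ and $P$, which is what lets us pull the inner expectation inside and recognize a copy of $\phi^\pi(s')$. Once that identification is made, the rest is bookkeeping with the binary indicator and the non-negativity of $\phi^\pi$.
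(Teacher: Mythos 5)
Your proof is correct and follows essentially the same route as the paper's: the appendix proof pulls the first-step indicator out of the trajectory maximum, exchanges the max with the expectation by exactly the binary case split you make explicit (if $s\in\mathcal S_v$ both sides are $1$; otherwise both reduce to the expectation over the suffix), and then identifies the suffix expectation with $\E_{s'\sim\pi,P(s)}\phi^\pi(s')$ via the same Markov/tower argument. The only difference is presentational — you organize it as two cases up front, while the paper writes it as a chain of equalities with the case analysis given as a remark justifying the third line.
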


\begin{definition}\label{def:feasible set} 
The feasible set of a policy $\pi$ based on $\phi^\pi(s)$ is defined as: $\mathcal S_{f}^{\pi} := \{ s \in \mathcal S: \phi^\pi (s) = 0  \}$.
\end{definition}
Note, the feasible set for a specific policy is the set of states starting \emph{from} which no violation is reached, and the safe set is the set of states \emph{at} which there is no violation. We will use the phrase likelihood of being feasible to mean the likelihood of not reaching a violation, i.e. $1-\phi^\pi(s)$.

\subsection{Comparison with RCRL} 
The RCRL approach~\cite{yu2022reachability} uses reachability to optimize and maintain persistent safety in the feasible set. Note, in below formulation, $\mathcal S_f$ is the optimal feasible set, i.e. that of a policy $\arg\min_\pi V^\pi_h(s)$. The RCRL formulation is:
\vspace{-2pt}
\begin{equation}
\tag{RCRL}
%\begin{split}
\label{eqn:RCRL}
    \max_\pi \E_{s\sim d_0} [V^{\pi}(s)\cdot \mathbbm{1}_{s\in \mathcal{S}_f} - V^{\pi}_h(s)\cdot \mathbbm{1}_{s\notin \mathcal{S}_f}],\text{ subject to } V^{\pi}_h (s)\leq 0, \forall s\in \mathcal{S}_I \cap \mathcal{S}_f.
%\end{split}
\end{equation}

\vspace{-10pt}
The equation \ref{eqn:RCRL} considers two different optimizations. When in the optimal feasible set, the optimization produces a persistently safe policy maximizing rewards. When outside this set, the optimization produces a control minimizing the maximum future violation, i.e. $\arg\min_\pi V^{\pi}_h(s)$. \emph{However, this does not ensure (re)entrance into the feasible set even if such a control exists.}

\ref{eqn:RCRL} performs constraint optimization on $V^\pi_h$ with a neural network (NN) lagrange multiplier with state input~\cite{ma2021feasible}. When learning to optimize a Lagrangian dual function, the NN lagrange multiplier should converge to small values for states in the optimal feasible set and converge to large values for other states. Nonetheless, learning $V_h$ provides a weak signal during training: if there is an improvement in safety along the trajectory not affecting the maximum violation, $V^\pi_h$ remains the same for all states before the maximum violation in the trajectory. These improvements in costs can be crucial in guiding the optimization toward a safer policy. And optimizing with $V_h(s)$ can result in accumulating an unlimited number of violations smaller than the maximum violation. Also, a major issue with this approach is that \emph{it's limited to deterministic MDPs and policies} because its reachability value function in the Bellman formulation does not directly apply to the stochastic setting. However, in general {\em{stochastic}} settings, estimating feasibility cannot be binary since for a large portion of the state space, even under the optimal policy, the agent may enter the unsafe set with a non-zero probability, rendering such definition too conservative and impractical. 

\vspace{-5pt}
\section{Iterative Reachability Estimation for Safe Reinforcement Learning}
In this paper, we formulate a general optimization framework for safety-constrained RL and propose a new algorithm to solve our constraint optimization by using our novel reachability estimation function. We present the deterministic case in Section~\ref{section:DeteriministicOPF} and build our way to the stochastic case in Section~\ref{section:StochasticOPF}. We present our novel algorithm to solve these optimizations, involving our new reachability estimation function, in Section~\ref{section:PReach}. We introduce convergence analysis in Section~\ref{sec:ConvergenceAnalysis}.

\vspace{-5pt}
\subsection{Iterative Reachability Estimation for Deterministic Settings}
\label{section:DeteriministicOPF}
All state transitions and policies happen with likelihood $0$ or $1$ for the deterministic environment. Therefore, the probability of constraint violation for policy $\pi$ from state $s$, i.e., $\phi^\pi(s)$, is in the set $\{0, 1\}$. According to Definition~\ref{def:optimal REF}, if there exists some policy $\pi$ such that $\phi^\pi(s)=0$, we have $\phi^*(s)=0$. Otherwise, $\phi^*(s)=1$. Notice that this captures definitive membership in the optimal feasible set $\phi^*(s)=\mathbbm{1}_{s\in S^{\pi_s}_f}$, which is the feasible set of some safest policy $\pi_s=\arg\min_\pi V^\pi_c (s)$. Now, we divide our optimization in two parts: the infeasible part and the feasible part.

For the infeasible part, we want the agent to incur the least cumulative damage (discounted sum of costs) and, if possible, (re)enter the feasible set. Different from previous Reachability-based RL optimizations, by using the discounted sum of costs $V^\pi_c(s)$ we consider both magnitude and frequency of violations, thereby improving learning signal. The infeasible portion takes the form:
\vspace{1pt}
\begin{align}\label{eqn:infeasibleOF}
    \max_\pi \E_{s\sim d_0}[-V^\pi_c(s)].
\end{align}

\vspace{-8pt}
For the feasible part, we want the policy to ensure the agent stays in the feasible set and maximize reward returns. This produces a constraint optimization where the cost value function is constrained:
\vspace{-2pt}
\begin{equation}
\label{eqn:feasibleOF}
\begin{split}
    \max_\pi \E_{s\sim d_0}\bigl[V^\pi(s)\bigr],\text{ subject to } V^{\pi}_c (s) = 0, \forall s\in \mathcal{S}_I.
\end{split}
\end{equation}

\vspace{-5pt}
The following propositions justify using $V^\pi_c$ as the constraint. Both proofs are in the appendix.
\begin{prop}
The cost value function $V_c^\pi (s)$ is zero for state $s$ if and only if the persistent safety is guaranteed for that state under the policy $\pi$. 
\end{prop}
\vspace{-5pt}

We define here $\mathcal S_f:=\mathcal S^{\pi_s}_f$, the feasibilty set of some safest policy. Now, the above two optimizations can be unified with the use of the feasibility function $\phi^*(s)$:
\vspace{-1pt}
\begin{equation}
\label{eqn:DeterministicOF}
\begin{split}
    \max_\pi \E_{s\sim d_0}\bigl[V^\pi(s)\cdot(1-\phi^*(s)) - V^\pi_c(s)\cdot\phi^*(s) \bigr], \text{ subject to } V^{\pi}_c (s) = 0, \forall s\in \mathcal{S}_I \cap \mathcal{S}_f.
\end{split}
\end{equation}

\vspace{-9pt}
Unlike other reachability based optimizations like RCRL, one particular advantage in Equation~\ref{eqn:DeterministicOF} is, with some assumptions, the guaranteed entrance back into feasible set with minimum cumulative discounted violations whenever a possible control exists. More formally, assuming infinite horizon:

\begin{prop}
    If $\exists\pi$ that produces trajectory $\tau=\{(s_i),i\in\mathbbm{N}, s_1=s\}$ in deterministic MDP $\mathcal{M}$ starting from state $s$, and $\exists m\in \mathbbm{N}, m<\infty$ such that $s_m\in S^\pi_f$, then $\exists \epsilon>0$ where if discount factor $\gamma\in (1-\epsilon,1)$, then the optimal policy $\pi^*$ of Equation~\ref{eqn:DeterministicOF} will produce a trajectory $\tau'=\{(s'_j),j\in\mathbbm{N}, s'_1=s\}$, such that $\exists n\in \mathbbm{N}, n<\infty$, $s'_n\in S^{\pi^*}_f$ and $V^{\pi^*}_c(s)=\min_{\pi'} V^{\pi'}_c(s)$.

\end{prop}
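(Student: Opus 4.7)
The plan is to exploit the deterministic setting, in which $\phi^{\pi}(s)\in\{0,1\}$ for every deterministic $\pi$ (by Definition~\ref{def:REF}) and hence $\phi^{*}(s)\in\{0,1\}$, and then split on the value of $\phi^{*}(s)$. If $\phi^{*}(s)=0$, the optimally safe policy attains zero cost from $s$, so $s\in\mathcal S_{f}$ and the constraint $V^{\pi}_{c}(s)=0$ in Equation~\ref{eqn:DeterministicOF} is active at $s$. Every optimizer $\pi^{*}$ therefore satisfies $V^{\pi^{*}}_{c}(s)=0=\min_{\pi'}V^{\pi'}_{c}(s)$, and by the previous proposition this forces $\pi^{*}$ never to visit $\mathcal S_{v}$ from $s$, so $s\in\mathcal S^{\pi^{*}}_{f}$ and $n=1$ suffices.

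The substantive case is $\phi^{*}(s)=1$. Here the indicator weights in Equation~\ref{eqn:DeterministicOF} collapse the integrand at $s$ to $-V^{\pi}_{c}(s)$ and deactivate the constraint at $s$, so any optimizer must minimize $V^{\pi}_{c}(s)$; this yields $V^{\pi^{*}}_{c}(s)=\min_{\pi'}V^{\pi'}_{c}(s)$ immediately, which is one of the two required conclusions. I would then feed in the hypothesis: the assumed $\pi$ only visits $\mathcal S_{v}$ at indices strictly before $m$, so
\[
V^{\pi^{*}}_{c}(s)\;\le\;V^{\pi}_{c}(s)\;=\;\sum_{t=1}^{m-1}\gamma^{t}\,h(s_{t})\;\le\;(m-1)H_{\max},
\]
a $\gamma$-independent upper bound that will serve as the contradiction barrier.

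All that remains is to show $\pi^{*}$'s trajectory enters $\mathcal S^{\pi^{*}}_{f}$ at some finite step $n$, equivalently that $\pi^{*}$ visits $\mathcal S_{v}$ only finitely often. I would argue by contradiction: suppose the set of visit times is infinite. Taking the standard assumption of a finite state space together with a stationary optimizer obtained from the Bellman equation $V^{*}_{c}(s)=h(s)+\gamma\min_{a}V^{*}_{c}(f(s,a))$, the deterministic trajectory under $\pi^{*}$ is eventually periodic, with pre-period $T^{*}\le|\mathcal S|$ and cycle length $L^{*}\le|\mathcal S|$, and infinitely many violations force at least one cycle state to lie in $\mathcal S_{v}$. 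Summing the recurring discounted contributions of that in-cycle violation gives
\[
V^{\pi^{*}}_{c}(s)\;\ge\;\frac{H_{\min}\,\gamma^{T^{*}+j^{*}}}{1-\gamma^{L^{*}}}
\]
for some offset $j^{*}<L^{*}$, and the right-hand side diverges as $\gamma\to 1^{-}$ because $1-\gamma^{L^{*}}\to 0$. Choosing $\epsilon>0$ small enough that this lower bound strictly exceeds $(m-1)H_{\max}$ on the whole interval $(1-\epsilon,1)$ produces the contradiction and closes the proof.

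The step I expect to be the main obstacle is precisely this eventual-periodicity argument. It is routine in finite-state deterministic MDPs, but the framework of the paper also targets continuous $\mathcal S$, where trajectories under stationary policies need not cycle. In that more general setting the periodicity step would have to be replaced by an argument showing that any policy with infinitely many visits to $\mathcal S_{v}$ incurs discounted violation mass that cannot decay faster than $(1-\gamma)^{-1}$ in $\gamma$, for example by invoking Lipschitz dynamics to produce a uniform positive dwell time inside $\mathcal S_{v}$ around each visit, so that the blow-up of the lower bound as $\gamma\to 1$ still drives the contradiction.
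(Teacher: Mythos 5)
Your argument is correct and, at its core, follows the same strategy as the paper's proof: bound the discounted cost of a policy that reaches its feasible set within $m-1$ steps by a $\gamma$-uniform quantity of order $(m-1)H_{\max}$ (the paper uses $H_{\max}(1-\gamma^{m-1})/(1-\gamma)$), bound from below the cost of any trajectory that never becomes violation-free by a geometric series of $H_{\min}$-violations, and let $\gamma\to 1^-$ so the latter exceeds the former. The differences lie in the supporting steps. First, you split on $\phi^*(s)\in\{0,1\}$ and dispatch the feasible case via the constraint and Proposition 1, whereas the paper splits on $m=1$ versus $m>1$ and uses minimality of $V^{\pi^*}_c$; these are interchangeable. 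Second, and more substantively, you control the recurrence of violations by eventual periodicity of a deterministic stationary policy on a \emph{finite} state space (cycle length $L^*\le|\mathcal{S}|$), yielding the lower bound $H_{\min}\gamma^{T^*+j^*}/(1-\gamma^{L^*})$, while the paper instead posits a finite maximum gap $w$ between successive violations along the never-entering trajectory and gets $H_{\min}\gamma^{w}/(1-\gamma^{w})$. Your version buys rigor at the cost of an explicit finiteness assumption not present in the statement (though it is consistent with the Finite MDP setting of the convergence analysis); the paper's version avoids naming such an assumption, but its claim that $w$ is finite is precisely the point you flag as the obstacle: excluding an all-safe suffix does not by itself exclude unboundedly growing gaps, so that step is only airtight under conditions of the kind you identify (finite $\mathcal{S}$ with a stationary deterministic optimizer, or a uniform dwell-time/gap bound from Lipschitz dynamics). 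Both proofs also share the same implicit reliance on the optimizer of Equation~\ref{eqn:DeterministicOF} minimizing $V^\pi_c(s)$ pointwise at the given initial state, so you lose nothing there.
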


\vspace{-5pt}
\subsection{Iterative Reachability Estimation for Stochastic Settings}
\label{section:StochasticOPF}

In stochastic environments, for each state, there is some likelihood of entering into the unsafe states under any policy. Thus, we adopt the probabilistic reachability Definitions~\ref{def:REF} and~\ref{def:optimal REF}. Rather than using the binary indicator in the optimal feasible set to demarcate the feasibility and infeasibility optimization scenarios, we use the likelihood of infeasibility of the safest policy. In particular, for any state $s$, the optimal likelihood that the policy will enter the infeasible set is $\phi^*(s)$ from Definition~\ref{def:optimal REF}.

We again divide the full optimization problem in stochastic settings into infeasible and feasible ones similar to Equations~\ref{eqn:infeasibleOF} and~\ref{eqn:feasibleOF}. However, we consider the infeasible formulation with likelihood the current state is in a safest policy's infeasible state, or $\phi^*(s)$. Similarly, we account for the feasible optimization formulation with likelihood the current state is in a safest policy's feasible set, $1-\phi^*(s)$. The complete Reachability Estimation for Safe Policy Optimization (RESPO) can be rewritten as:

\vspace{-13pt}
\begin{equation}
\tag{RESPO}
\label{eqn:StochasticOF}
\begin{split}
\max_\pi \E_{s\sim d_0} [V^{\pi}(s)\cdot (1-\phi^*(s)) - V^{\pi}_c(s)\cdot \phi^*(s)] \text{, s.t., }V^{\pi}_c (s) = 0, \text{ w.p. } 1-\phi^*(s), \forall s\in S_I.
\end{split}
\end{equation}

\vspace{-5pt}

In sum, the RESPO framework provides several benefits when compared with other constrained Reinforcement Learning and reachability-based approaches. Notably, 1) it maintains persistent safety when in the feasible set unlike CMDP-based approaches, 2) compared with other reachability-based approaches, RESPO considers performance optimization in addition to maintaining safety, 3) it maintains the behavior of a safest policy in the infeasible set and even reenters the feasible set when possible, 4) RESPO employs rigorously defined reachability definitions even in stochastic settings.

\subsection{Overall Algorithm}
\label{section:PReach}
We describe our algorithms by breaking down the novel components. Our algorithm predicts reachability membership to guide the training toward optimizing the right portion of the optimization equation (i.e., feasibility case or infeasibility case). Furthermore, it exclusively uses the discounted sum of costs as the safety value function -- we can avoid having to learn the reachability value function while having the benefit of exploiting the improved signal in the cost value function.

\paragraph{Optimization in infeasible set versus feasible set.} If the agent is in the infeasible set, this is the simplest case. We want to find the optimal policy that maximizes $-V^\pi_c(s)$. This would be the only term that needs to be considered in optimization.

On the other hand, if the agent is in the feasible set, we must solve the constraint optimization $\max_\pi V^\pi(s)$ subject to $V^\pi_c(s) = 0$. This could be solved via a Lagrangian-based method:

\vspace{-15pt}
\[
\min_\pi \max_\lambda L(\pi, \lambda) = \min_\pi \max_\lambda\bigg(\E_{s\sim d_0} [-V^\pi(s) + \lambda V^\pi_c(s)]\bigg).
\]

\vspace{-10pt}
Now what remains is obtaining the reachability estimation function $\phi^*$. First, we address the problem of acquiring optimal likelihood of being feasible. It is nearly impossible to accurately know before training if a state is in a safest policy's infeasible set. We propose learning a function guaranteed to converge to this REF (with some discount factor for $\gamma$-contraction mapping) by using the recursive Bellman formulation proved in Theorem~\ref{thm:BellmanREF}.

We learn a function $p(s)$ to capture the probability $\phi^*(s)$. It is trained like a reachability function:

\vspace{-15pt}
\[
p(s) = \max\{ \mathbbm 1_{s\in S_v}, \gamma \cdot p(s')\},
\]

\vspace{-8pt}
where $S_v$ is the violation set, $s'$ is the next sampled state, and $\gamma$ is a discount parameter $0\ll\gamma<1$ to ensure convergence of $p(s)$. Furthermore, and crucially, we ensure the learning rate of this REF is on a slower time scale than the policy and its critics but faster than the lagrange multiplier.

Bringing the concepts covered above, we present our full optimization equation:

\vspace{-18pt}
\begin{equation}
\label{eqn:L_loss}
\min_\pi \max_\lambda L(\pi, \lambda) = 
\min_\pi \max_\lambda\bigg(\E_{s\sim d_0}\biggl[ [-V^\pi(s) + \lambda\cdot V^\pi_c(s)]\cdot (1-p(s)) + V^\pi_c(s)\cdot p(s)\biggr]\bigg).
\end{equation}

\vspace{-8pt}
We show the design of our algorithm \textbf{RESPO} in an actor-critic framework in Algorithm~\ref{alg:Algorithm}. Note that the $V$ and $V_c$ have corresponding $Q$ functions: $V^\pi(s)=\E_{a\sim\pi(\cdot | s)}Q(s,a)$ and $V^\pi_c(s)=\E_{a\sim\pi(\cdot | s)}Q_c(s,a)$. The gradients' definitions are found in the appendix. We use operator $\Gamma_\Theta$ to indicate the projection of vector $\theta\in \mathbbm R^n$ to the closest point in compact and convex set $\Theta\subseteq\mathbbm R^n$. Specifically, $\Gamma_\Theta=\arg\min_{\hat{\theta}\in \Theta}|| \hat{\theta} - \theta||^2$. $\Gamma_\Omega$ is similarly defined.

\begin{algorithm}[t]
\caption{RESPO Actor Critic}
\label{alg:Algorithm}
\begin{algorithmic}[1]
\Require Randomly initialized policy $\pi_\theta$'s parameters $\theta_0$, reward critic $Q$'s parameters $\eta_0$, cost critic $Q_c$'s parameters $\kappa_0$, REF $p$'s parameters $\xi_0$, Lagrange multiplier $\lambda$'s parameters $\omega_0$, horizon $T$
\Require Convex projection operators $\Gamma_\Theta$ and $\Gamma_\Omega$, and reward and cost critic learning rate $\zeta_1(k)$, policy learning rate $\zeta_2(k)$, REF learning rate $\zeta_3(k)$, lagrange multiplier learning rate $\zeta_4(k)$
\For {$k=0,1,2,...$}
\For {$i=0,1,2,...$}
\State Sample trajectories $\tau_i: \{(s_j,a_j,s'_j,r_j,h_j)\} \sim \pi_{\theta}$
\State \textbf{Rew. Update} $\eta_{k+1}=\eta_{k}-\zeta_1(k)\nabla_\eta Q(s_t,a_t)\cdot [Q(s_t, a_t) - (r(s_t, a_t) + \gamma Q(s_{t+1}, a_{t+1}))]$
\State \textbf{Cost Update} $\kappa_{k+1}=\kappa_{k}-\zeta_1(k)\nabla_\kappa Q_c(s_t,a_t)\cdot [Q_c(s_t, a_t) - (h(s_t) + \gamma Q_c(s_{t+1}, a_{t+1}))]$
\State \textbf{Policy Update} $\theta_{k+1}=$
\State $\Gamma_\Theta \bigg(\theta_k - \zeta_2(k)\gamma^t \bigg[-Q(s_t,a_t) [1 - p(s_t)]
    +Q_c(s_t,a_t) [\lambda (1 - p(s_t)) + p(s_t)]\bigg] \nabla_\theta \log\pi_\theta (a_t | s_t)\bigg)$
\State \textbf{REF Update} $\xi_{k+1}=\xi_{k}-\zeta_3(k)\nabla_\xi p(s_t)\cdot [p(s_t) - \max\{\mathbbm 1_{h(s_t)>0}, \gamma p(s_{t+1})\}]$
\State \textbf{Lagrange multiplier Update} $\omega_{k+1}=\Gamma_\Omega \big(\omega_k - \zeta_4(k)Q_c(s_t,a_t)(1-p(s_t)) \nabla_\omega \lambda\big)$ 
\EndFor
\EndFor
\end{algorithmic}
\end{algorithm}

\subsection{Convergence Analysis}
\label{sec:ConvergenceAnalysis}
We provide convergence analysis of our algorithm for Finite MDPs (finite bounded state and action space sizes, maximum horizon $T$, reward bounded by $R_{\max}$, and cost bounded by $H_{\max}$) under reasonable assumptions. We demonstrate our algorithm almost surely finds a locally optimal policy for our RESPO formulation, based on the following assumptions:\\ $\bullet$ \hspace{0.3cm}\textbf{A1}\emph{ (Step size):} Step sizes follow schedules $\{\zeta_1(k)\}$, $\{\zeta_2(k)\}$, $\{\zeta_3(k)\}$, $\{\zeta_4(k)\}$ where:

\vspace{-20pt}
\begin{equation*}
    \sum_{k} \zeta_i(k) = \infty \text{ and } \sum_{k} \zeta_i(k)^2 < \infty, \forall i\in \{1,2,3,4\}, \hspace{0.2 cm}\text{ and }\hspace{0.2 cm} \zeta_j(k) = o(\zeta_{j-1}(k)), \forall j\in\{2,3,4\}. 
\end{equation*}

\vspace{-12pt}
The reward returns and cost returns critic value functions must follow the fastest schedule $\zeta_1(k)$, the policy must follow the second fastest schedule $\zeta_2(k)$, the REF must follow the second slowest schedule $\zeta_3(k)$, and finally, the lagrange multiplier should follow the slowest schedule $\zeta_4(k)$.\\
$\bullet$ \hspace{0.3cm}\textbf{A2}\emph{ (Strict Feasibility):} $\exists\pi(\cdot | \cdot; \theta)$ such that $\forall s\in \mathcal{S}_I$ where $\phi^*(s)=0$, $V^{\pi_\theta}_c(s)\leq 0$.\\
$\bullet$ \hspace{0.3cm}\textbf{A3}\emph{ (Differentiability and Lipschitz Continuity):} For all state-action pairs $(s,a)$, we assume value and cost Q functions $Q(s,a;\eta),Q_c(s,a;\kappa)$, policy $\pi(a|s; \theta)$, and REF $p(s,a;\xi)$ are continuously differentiable in $\eta,\kappa,\theta,\xi$ respectively. Furthermore, $\nabla_\omega \lambda_\omega$ and, for all state-action pairs $(s,a)$, $\nabla_\theta \pi(a|s;\theta)$ are Lipschitz continuous functions in  $\omega$ and  $\theta$ respectively.

The detailed proof of the following result is provided in the appendix.

\begin{theorem}
    Given Assumptions \textbf{A1}-\textbf{A3}, the policy updates in Algorithm~\ref{alg:Algorithm} 
    will almost surely converge to a locally optimal policy for our proposed optimization in Equation~\ref{eqn:StochasticOF}. 
\end{theorem}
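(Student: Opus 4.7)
The plan is to apply the multi-timescale stochastic approximation framework of Borkar, treating Algorithm~\ref{alg:Algorithm} as four coupled stochastic recursions running on increasingly slow timescales. Under \textbf{A1} this separation of step sizes means that when analyzing a slower iterate, the faster ones appear fully equilibrated while the slower ones appear frozen, so the analysis decouples into four ODE limits that I would then chain together to obtain convergence to a local optimum of Equation~\ref{eqn:StochasticOF}.

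First, on the fastest timescale $\zeta_1(k)$, the critic updates for $Q$ and $Q_c$ are standard TD(0) recursions for the frozen policy $\pi_{\theta_k}$. Under \textbf{A3}, Kushner--Clark-type arguments show the iterates track linear ODEs whose asymptotically stable equilibria satisfy the Bellman fixed-point equations, so $Q(\cdot,\cdot;\eta_k)\to Q^{\pi_\theta}$ and $Q_c(\cdot,\cdot;\kappa_k)\to Q^{\pi_\theta}_c$ almost surely. On timescale $\zeta_2(k)$, the policy iterate $\theta_k$ then evolves with converged critics and quasi-static $(\xi,\omega)$. The projected update becomes an unbiased stochastic gradient on the penalized Lagrangian
\[
J(\theta;p,\lambda)=\E_{s\sim d_0}\bigl[(-V^{\pi_\theta}(s)+\lambda V^{\pi_\theta}_c(s))(1-p(s))+V^{\pi_\theta}_c(s)\,p(s)\bigr],
\]
whose projected gradient ODE $\dot\theta=\Gamma_\Theta(-\nabla_\theta J)$ has asymptotically stable set equal to the local stationary points $\Theta^\star(\xi,\omega)$ of $J$.

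On timescale $\zeta_3(k)$, the REF iterate $\xi_k$ sees $\theta_k\in\Theta^\star(\xi_k,\omega_k)$. By Theorem~\ref{thm:BellmanREF}, its target $\max\{\mathbbm 1_{s\in\mathcal S_v},\gamma p(s')\}$ defines a $\gamma$-contraction in sup-norm, so the Robbins--Monro recursion converges almost surely to $\phi^{\pi_{\theta}}$, the REF of the currently converged policy. Because the infeasible-set weighting in $J$ reduces locally to minimizing $V^\pi_c$, the policy at a local optimum locally minimizes $V^\pi_c$ on $\{s:\phi^{\pi}(s)>0\}$, making the tracked $p$ match $\phi^*$ from Definition~\ref{def:optimal REF} in that neighborhood. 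Finally, on the slowest timescale $\zeta_4(k)$, the multiplier $\omega_k$ tracks the ODE $\dot\omega\propto\E_{s\sim d_0}[V^{\pi_\theta}_c(s)(1-p(s))]\nabla_\omega\lambda$, which by \textbf{A2} (strict feasibility) and compactness of $\Omega$ drives $\lambda$ to a KKT multiplier enforcing $V^{\pi_\theta}_c(s)=0$ wherever $1-p(s)>0$. Chaining the four limits yields a joint limit point that is exactly a local KKT solution of Equation~\ref{eqn:StochasticOF}.

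The main obstacle will be handling the circular coupling between the $\zeta_2$ and $\zeta_3$ recursions: $\Theta^\star(\xi,\omega)$ depends on $p(\cdot;\xi)$ through the weighting in $J$, while $\phi^{\pi_\theta}$ depends on $\theta$. I would resolve this via a two-timescale lemma showing that $(\theta_k,\xi_k)$ jointly track the slow manifold $\{(\theta,\xi):\theta\in\Theta^\star(\xi,\omega),\;p(\cdot;\xi)=\phi^{\pi_\theta}\}$, with non-emptiness guaranteed by continuity of $\phi^\pi$ in $\pi$ (finite MDP, bounded $H_{\max}$) and a Kakutani-type fixed-point argument on the compact projection sets. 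A secondary technical wrinkle is the non-smoothness of the $\max$ operator in the Bellman target for $p$; this is absorbed by the fact that the operator remains a sup-norm $\gamma$-contraction, which is sufficient for almost-sure convergence of the associated stochastic approximation without requiring differentiability.
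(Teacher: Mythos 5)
Your overall route is the same as the paper's: a four-timescale Borkar-type stochastic approximation with critics fastest, then policy, then REF, then multiplier; convergence of the critics via the Bellman $\gamma$-contraction; convergence of $p$ via the contraction property from Theorem~\ref{thm:BellmanREF}; projected-ODE/Lyapunov arguments for $\theta$ and $\omega$; and a saddle-point/KKT conclusion. The one genuine gap is your Step-3 claim that, because the converged policy ``locally minimizes $V^\pi_c$ on the infeasible region,'' the tracked $p$ ``matches $\phi^*$ in that neighborhood.'' What the contraction argument actually yields is convergence of $p$ to $\phi^{\pi^\diamond}$, the REF of the policy that the $(\theta,\xi)$ loop has equilibrated to; that policy is a local optimum of the $p$-weighted Lagrangian at the current frozen $\lambda_\omega$, not of $\min_\pi V^\pi_c$, so you cannot identify $\phi^{\pi^\diamond}$ with $\phi^*$ of Definition~\ref{def:optimal REF} without a further argument. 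The paper closes exactly this hole by noting that the weighted objective penalizes $Q_c$ with coefficient $\lambda_\omega + p/(1-p) \ge \lambda_\omega$, so $\pi^\diamond$ is at least as safe as a local optimum of the plain Lagrangian, and only in the limit $\lambda_\omega \to \lambda_{\max} \to \infty$ does its feasible set approach that of a safest policy; the final local-optimality statement is then obtained from the local saddle-point property of $(\theta^*,\xi^*,\omega^*)$ rather than from an exact identification $p=\phi^*$. As written, your KKT step silently assumes $p=\phi^*$, which is precisely what makes the limit a local optimum of Equation~\ref{eqn:StochasticOF} rather than of some other weighted problem, so you need either the limiting-$\lambda_{\max}$ argument or a substitute for it.

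A secondary remark: the circular $\theta$--$\xi$ coupling you single out as the main obstacle is dispatched in the paper purely by the timescale ordering in \textbf{A1} --- on the $\zeta_3$ scale the policy has already equilibrated at $\theta^*(\xi_k,\omega_k)$, so $p$ is trained against a quasi-static policy and the standard two-timescale lemma (Borkar, Chapter 6, Lemma 1 and Theorem 2) applies directly; no Kakutani-type fixed-point construction of a slow manifold is needed. That extra machinery would not hurt, but it is not where the difficulty of the theorem lies, and spending effort there while leaving the $\phi^{\pi^\diamond}$ versus $\phi^*$ step informal inverts the actual burden of the proof.
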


\section{Experiments}
\begin{figure}[t]
\centering
\includegraphics[width=0.13\linewidth]{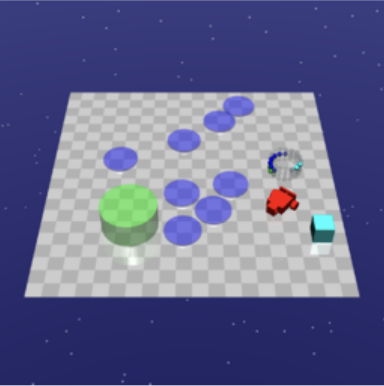}
\includegraphics[width=0.13\linewidth]{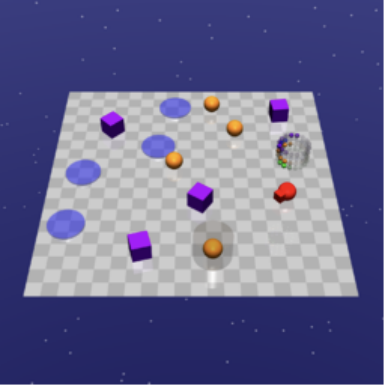}
\includegraphics[width=0.16\linewidth]{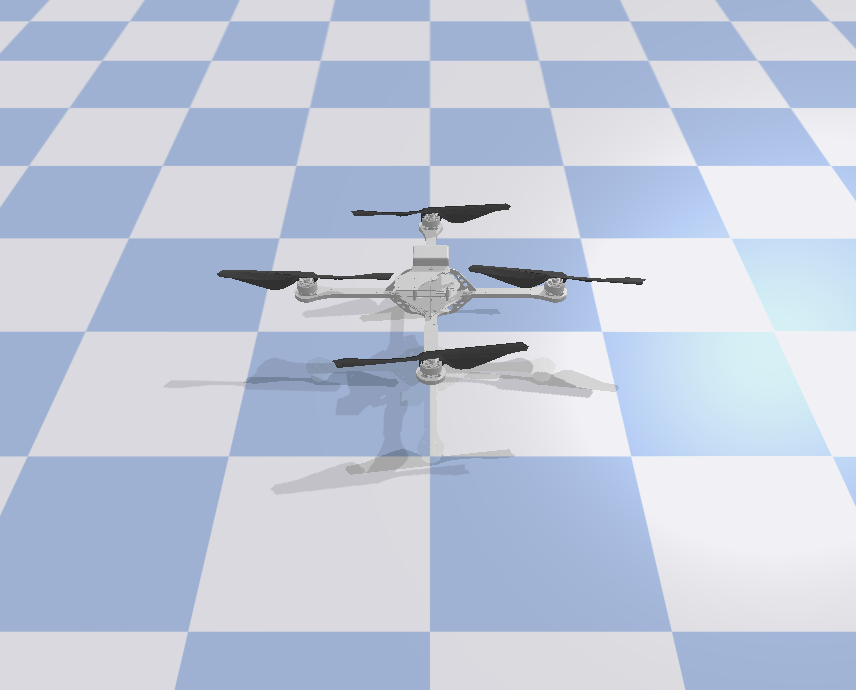}
\includegraphics[width=0.16\linewidth]{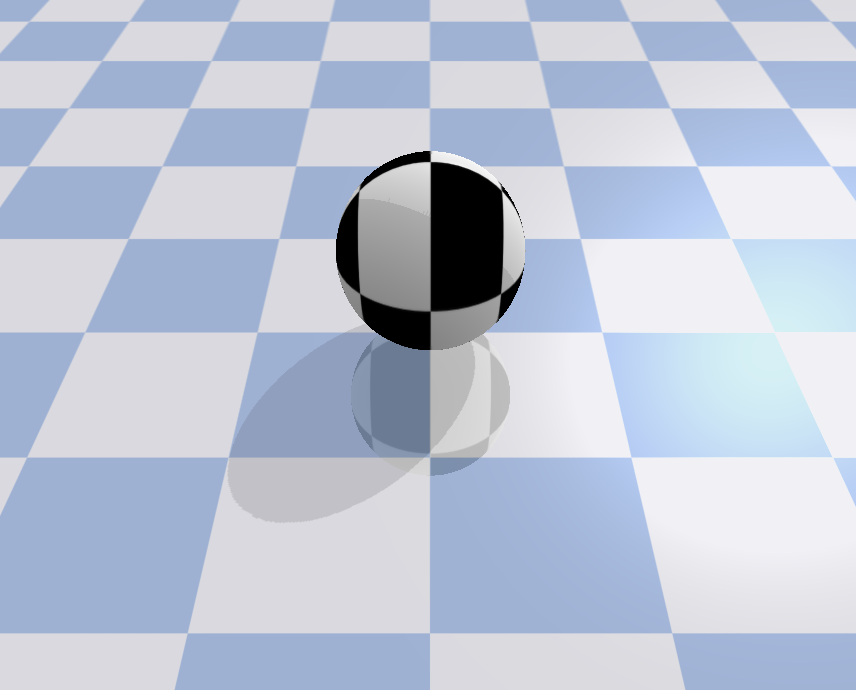}
\includegraphics[width=0.16\linewidth]{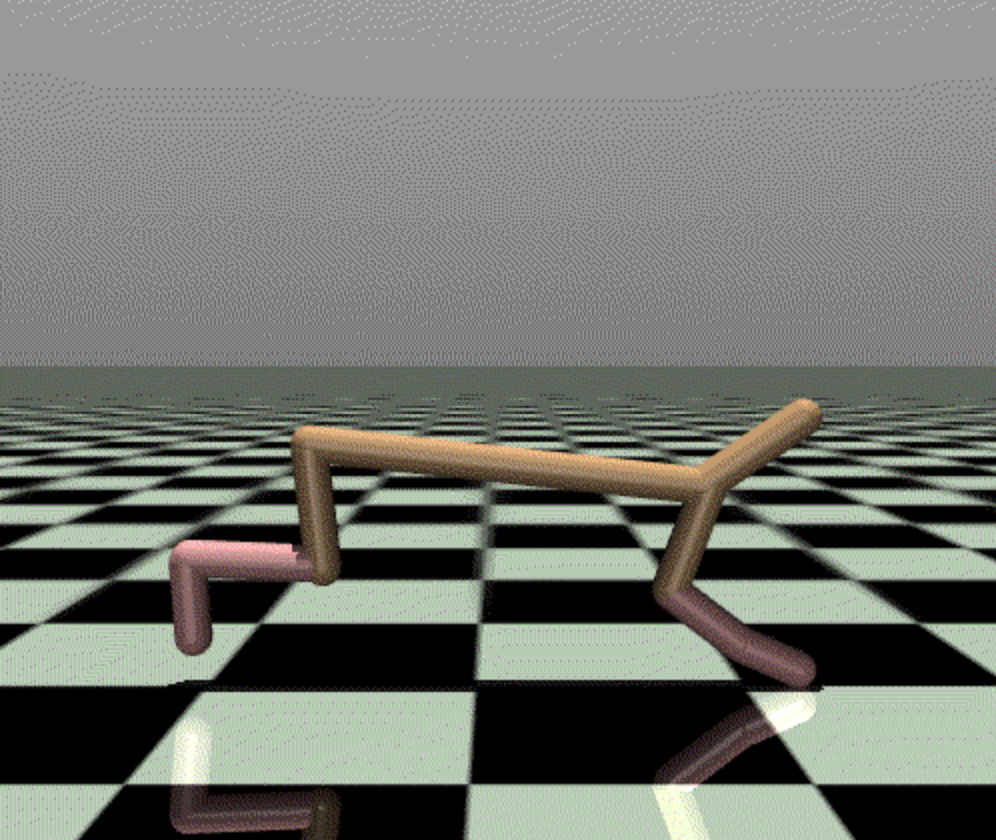}
\includegraphics[width=0.16\linewidth]{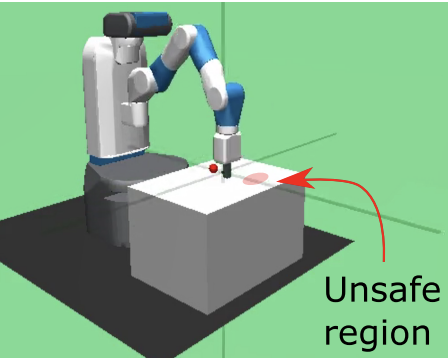}
\vspace*{1mm}
\caption{\small We compare the performance of our algorithm with other SOTA baselines in Safety Gym (left two figures), Safety PyBullet (middle two figures), and Safety MuJoCo (right two figures).}
\vspace*{-10pt}
\end{figure}

\textbf{Baselines.} The baselines we compare are CMDP-based or solve for hard constraints. The CMDP baselines are Lagrangian-based Proximal Policy Optimization (\textbf{PPOLag}) based on~\cite{tessler}, Constraint-Rectified Policy Optimization (\textbf{CRPO})~\cite{xu2021crpo}, Penalized Proximal Policy Optimization (\textbf{P3O})~\cite{Zhang2022PenalizedPP}, and Projection-Based Constrained Policy Optimization (\textbf{PCPO})~\cite{YangProj}. The hard constraints baselines are \textbf{RCRL}~\cite{yu2022reachability}, \textbf{CBF} with constraint $\dot{h}(s) + \nu\cdot h(s)\leq 0$, and Feasibile Actor-Critic (\textbf{FAC})~\cite{ma2021feasible}. We classify \textbf{FAC} among the hard constraint approaches because we make its cost threshold $\chi=0$ in order to better compare using NN lagrange multiplier with our REF approach in \textbf{RESPO}. We include the unconstrained Vanilla \textbf{PPO}~\cite{PPO} baseline for reference.

\textbf{Benchmarks.} We compare \textbf{RESPO} with the baselines in a diverse suite of safety environments. We consider high-dimensional environments in Safety Gym~\cite{ray2019benchmarking} (namely PointButton and CarGoal), Safety PyBullet~\cite{gronauer2022bullet} (namely DroneCircle and BallRun), and Safety MuJoCo~\cite{MuJoCo}, (namely Safety HalfCheetah and Reacher). We also show our algorithm in a multi-drone environment with \emph{multiple hard and soft constraints}. More detailed experiment explanations and evaluations are in the appendix.

\subsection{Main Experiments in Safety Gym, Safety PyBullet, and MuJoCo}
\label{section:SGE}
We compare our algorithm with SOTA benchmarks on various high-dimensional (up to $76$D observation space), complex environments in the stochastic setting, i.e., where the environment and/or policy are stochastic. Particularly, we examine environments in Safety Gym, Safety PyBullet, and Safety MuJoCo. The environments provide reward for achieving a goal behavior or location, while the cost is based on tangible (e.g., avoiding quickly moving objects) and non-tangible (e.g., satisfying speed limit) constraints. Environments like PointButton require intricate behavior where specific buttons must be reached while avoiding multiple moving obstacles, stationary hazards, and wrong buttons.

Overall, \textbf{RESPO} achieves the best balance between optimizing reward and minimizing cost violations across all the environments. Specifically, our approach generally has the highest reward performance (see the red lines from the top row of Figure~\ref{fig:SafetyGym}) among the safety-constrained algorithms while maintaining reasonably low to $0$ cost violations (like in HalfCheetah). When \textbf{RESPO} performs the second highest, the highest-performing safety algorithm always incurs several times more violations than \textbf{RESPO} -- for instance, \textbf{RCRL} in PointButton or \textbf{PPOLag} in Drone Circle. Non-primal-dual CMDP approaches, namely \textbf{CRPO}, \textbf{P3O}, and \textbf{PCPO} generally satisfy their cost threshold constraints, but their reward performances rarely exceed that of \textbf{PPOLag}. \textbf{RCRL} generally has extremes of high reward and high cost, like in BallRun, or low reward and low cost, like in CarGoal. \textbf{FAC} and \textbf{CBF} generally have conservative behavior that sacrifices reward performance to minimize cost. 

\begin{figure}[!h]
\centering
\includegraphics[width=0.24\linewidth]{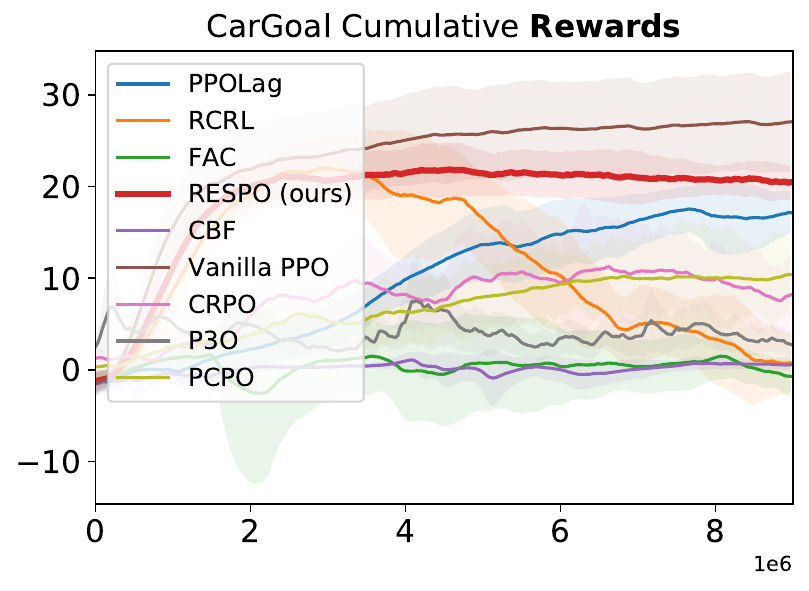}
\includegraphics[width=0.24\linewidth]{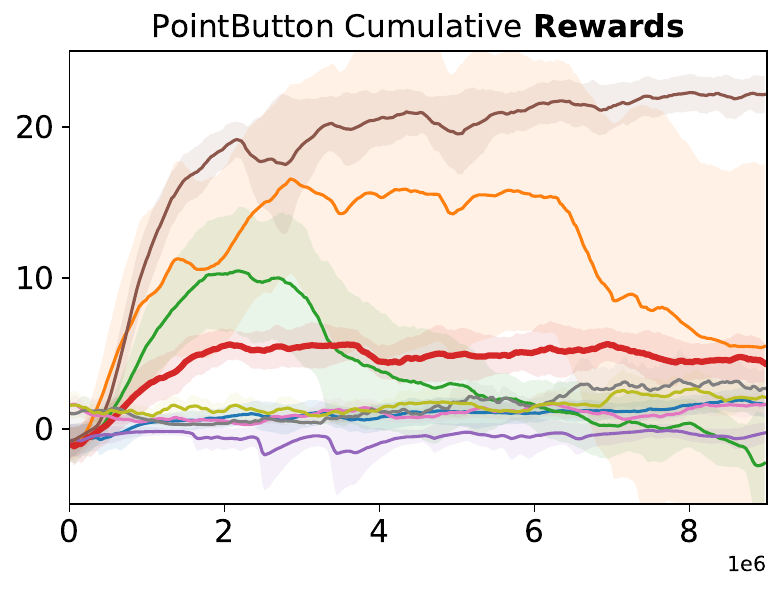}
\includegraphics[width=0.24\linewidth]{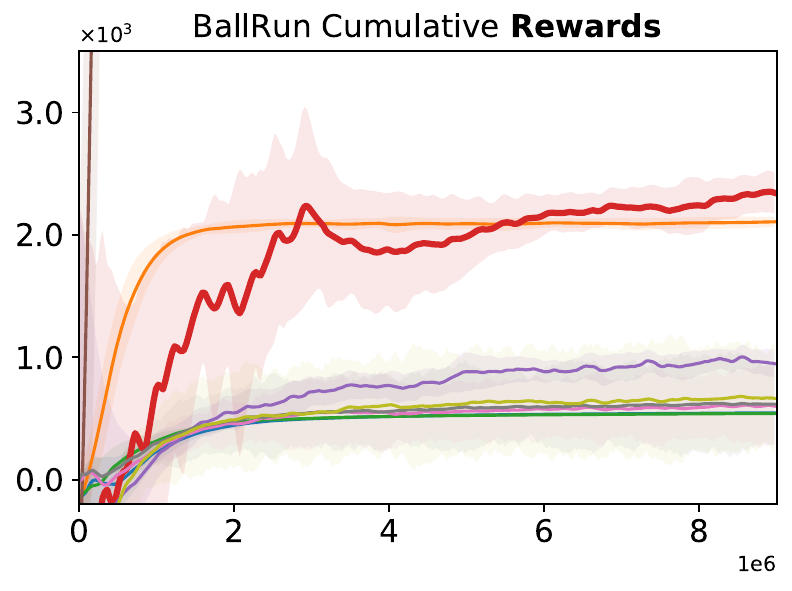}
\includegraphics[width=0.24\linewidth]{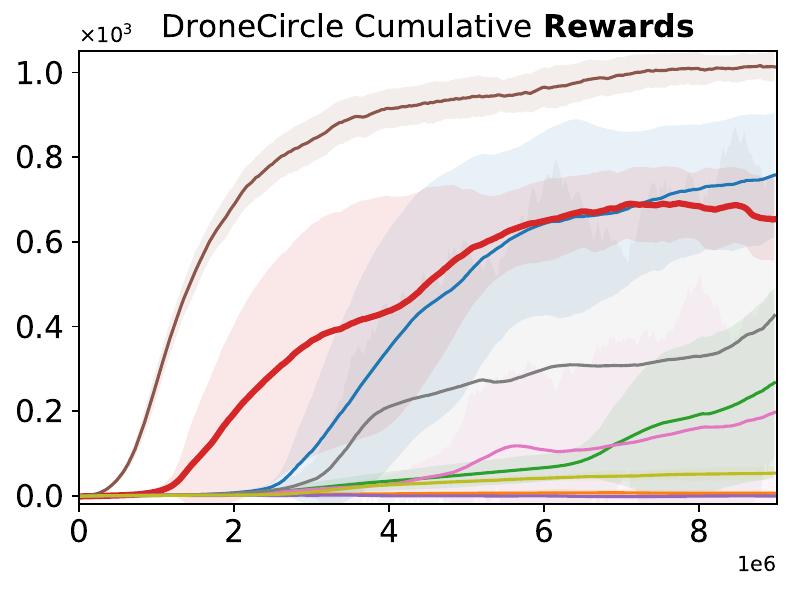}
\includegraphics[width=0.24\linewidth]{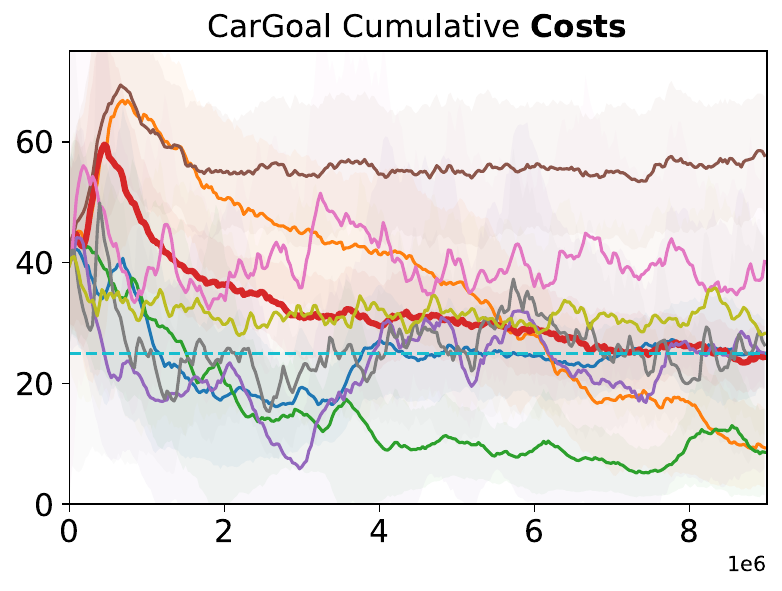}
\includegraphics[width=0.24\linewidth]{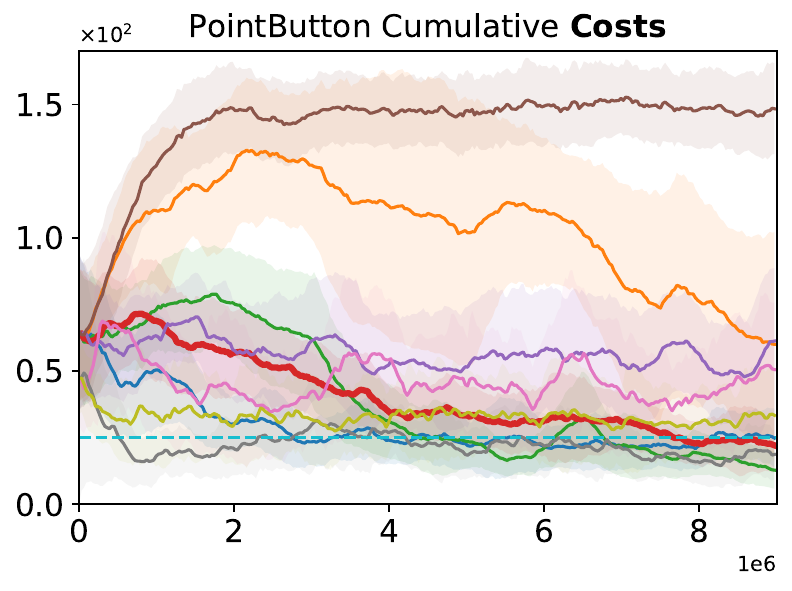}
\includegraphics[width=0.24\linewidth]{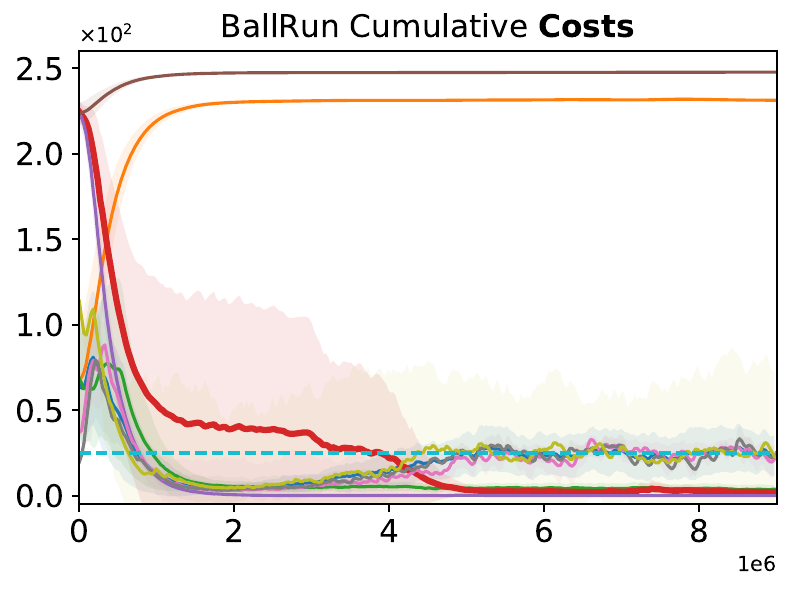}
\includegraphics[width=0.24\linewidth]{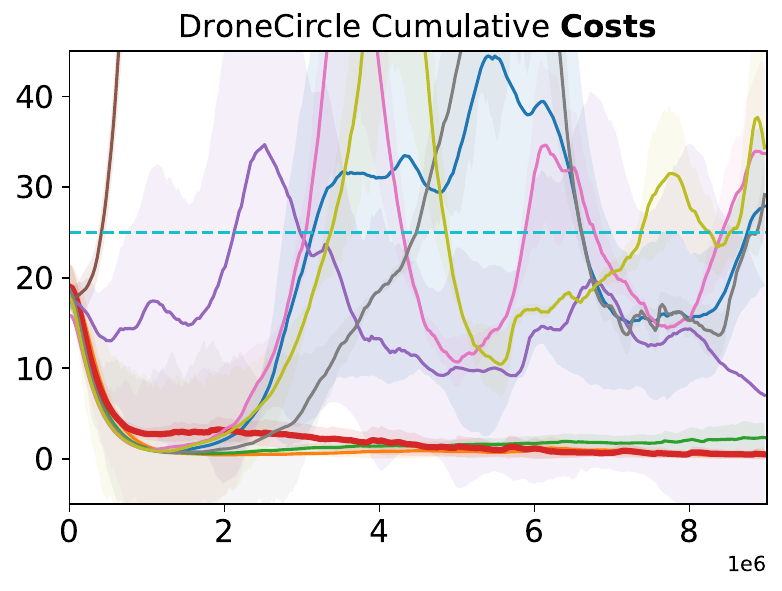}
\caption{\small Comparison of RESPO with baselines in Safety Gym and PyBullet environments. The plots in the first row show performance measured in rewards (higher is better); those in second row show cost (lower is better). RESPO (red curves) achieves the best balance of maximizing reward and minimizing cost. When other methods achieve higher rewards than RESPO, they achieve much higher costs as well. E.g., in PointButton, RCRL has slightly higher rewards, but accumulates over $3\times$ violations than RESPO. Note, Vanilla PPO is unconstrained.
}
\label{fig:SafetyGym}
\label{fig:PyBullet}
\end{figure}

\subsection{Hard and Soft Constraints}

\begin{figure}[t]
\centering
\includegraphics[width=0.24\linewidth]{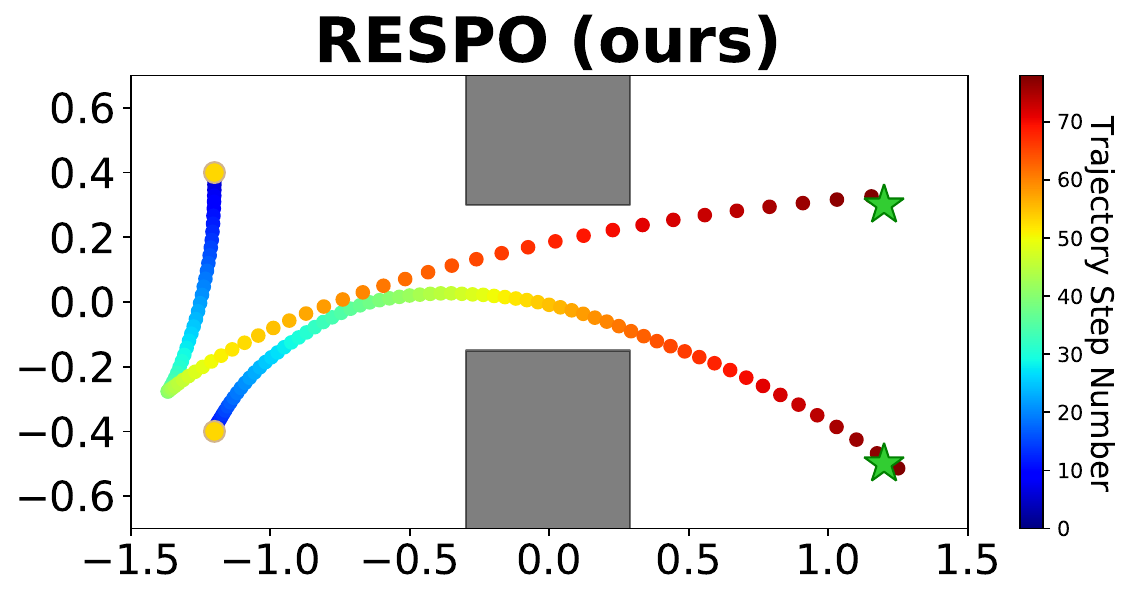}
\includegraphics[width=0.24\linewidth]{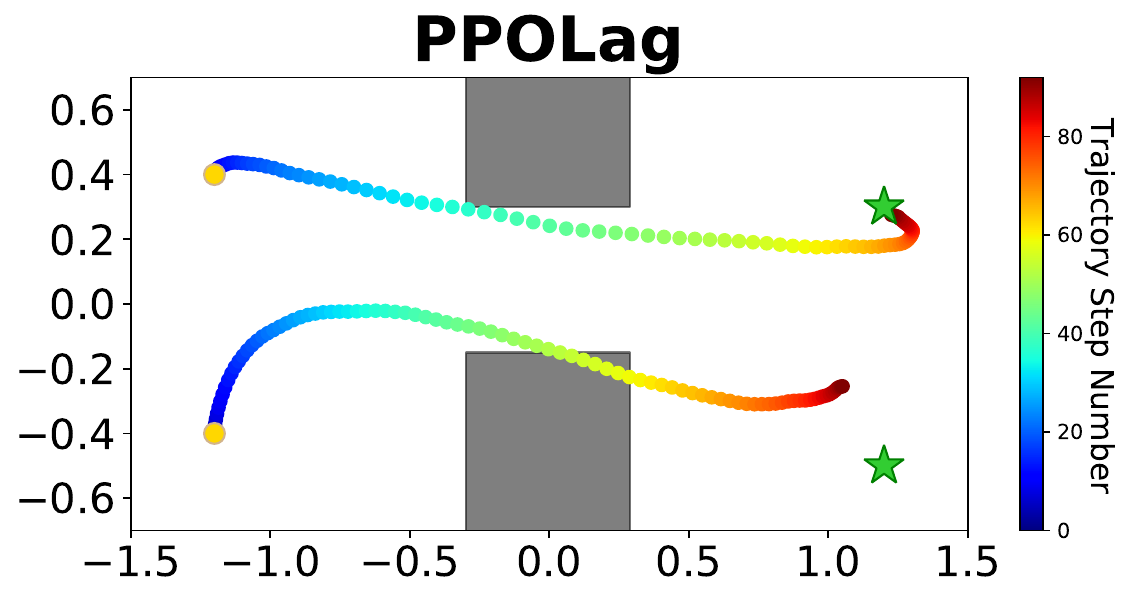}
\includegraphics[width=0.24\linewidth]{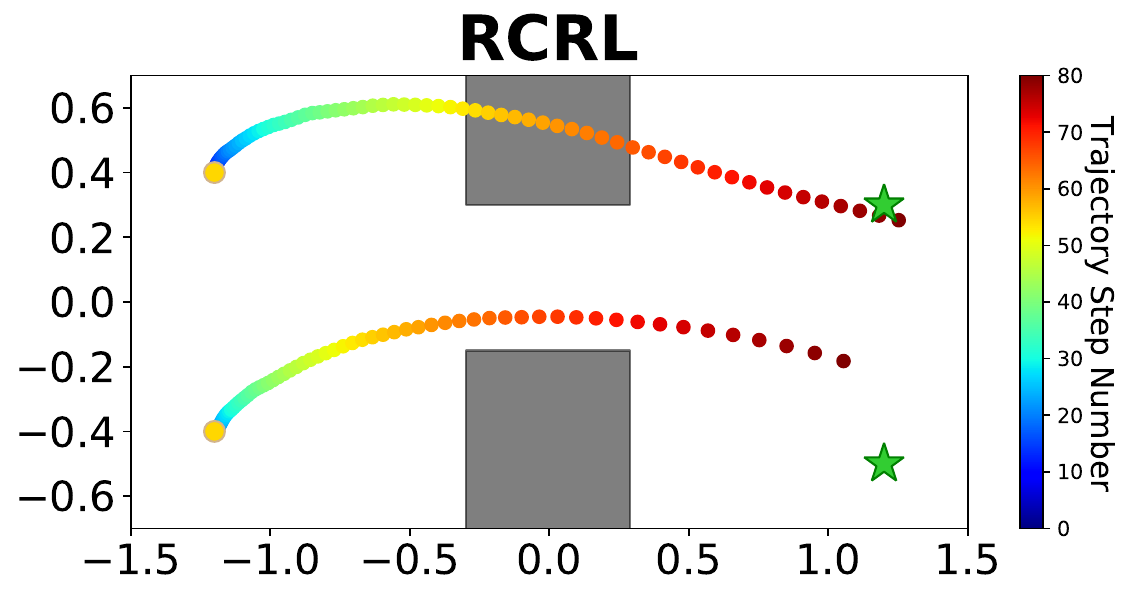}
\includegraphics[width=0.24\linewidth]{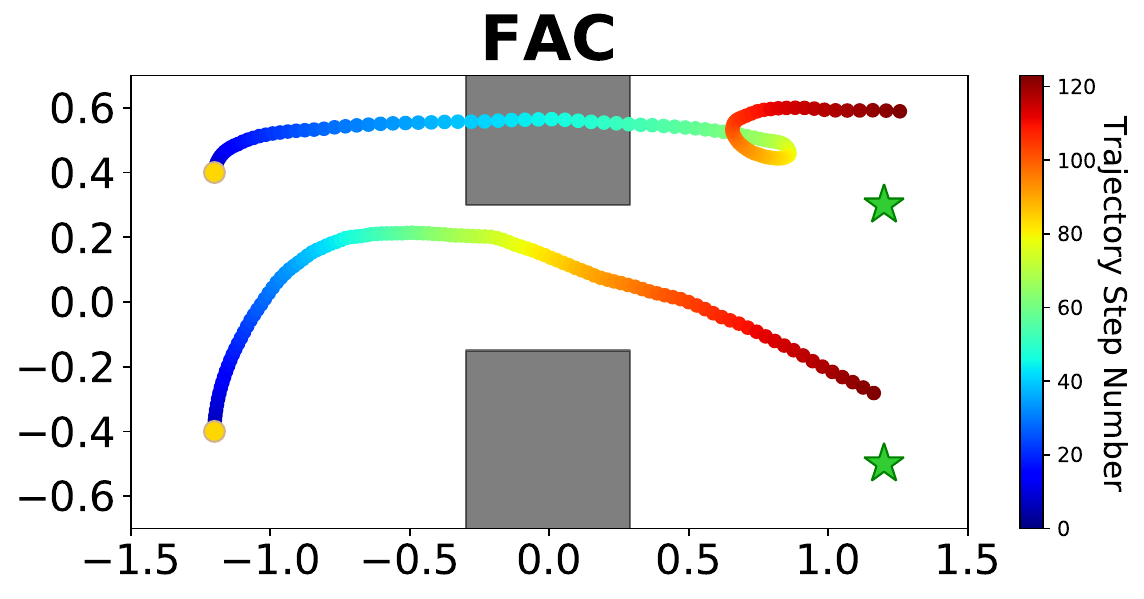}
\vspace*{1mm}
\caption{Comparison of RESPO with baselines trajectories in Hard \& Soft Constraints multi-Drone control. Starting at gold circles, drones must enter the tunnel one at a time and reach green stars. Hard constraints are wall avoidance and ensuring drones are farther than $0.5$ meters from each other. Soft constraint is drones are within $0.8$ meters of each other. Trajectory colors correspond to time. RESPO (on left) successfully controls drones to reach goals while always avoiding walls and usually respecting distance constraints. Other baselines cannot manage multiple constraints: they collide with the wall and have many distance constraint violations.}
\vspace{-15pt}
\label{fig:pReachHS}
\end{figure}

We also demonstrate \textbf{RESPO}'s performance in an environment with multiple hard and soft constraints. The environment requires controlling two drones to pass through a tunnel one at a time while respecting certain distance requirements. The reward is given for quickly reaching the goal positions. The two hard constraints involve \textbf{(H1)} ensuring neither drone collides into the wall and \textbf{(H2)} the distance between the two drones is more than 0.5 to ensure they do not collide. The soft constraint is that the two drones are within 0.8 of each other to ensure real-world communication. It is preferable to prioritize hard constraint \textbf{H1} over hard constraint \textbf{H2}, since colliding with the wall may have more serious consequences to the drones rather than violations of an overly precautious distance constraint.

Our approach, in the leftmost of Figure~\ref{fig:pReachHS}, successfully reaches the goal while avoiding the wall obstacles in all time steps. We are able to prioritize this wall avoidance constraint over the second hard constraint. This can be seen particularly in between the blue to cyan time period where the higher Drone makes way for the lower Drone to pass through but needs to make a drop to make a concave parabolic trajectory to the goal. Nonetheless, the hard constraints are almost always satisfied, thereby producing the behavior of allowing one drone through the tunnel at a time. The soft constraints are satisfied at the beginning and end but are violated, reasonably, in the middle of the episode since only one drone can pass through the tunnel at a time, thereby forcing the other drone into a standby mode.

\subsection{Ablation Studies}
We also perform ablation studies to experimentally confirm the design choices we made based on the theoretically established convergence and optimization framework. We particularly investigate the effects of changing the learning rate of our reachability function as well as changing the optimization framework. We present the results of changing the learning rate for REF in Figure~\ref{fig:LearningRateAblation} while our results for the ablation studies on our optimization framework can be seen in Figure~\ref{fig:AblationOptimization}. 

%\vspace{-15pt}
\begin{figure}[t]

\begin{minipage}{.5\textwidth}
\centering
% \begin{wrapfigure}{L}{0.5\textwidth}
\subfigure{\includegraphics[width=0.48\linewidth]{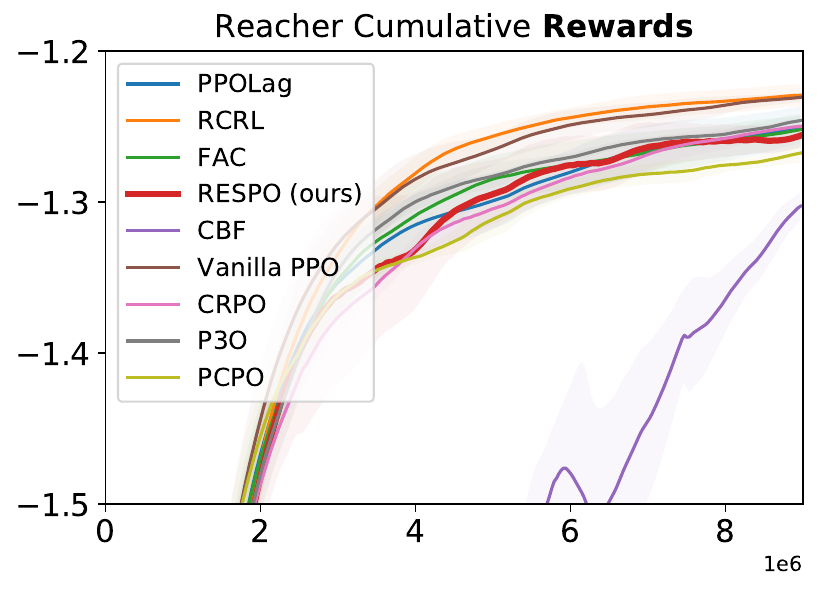}}
\subfigure{\includegraphics[width=0.48\linewidth]{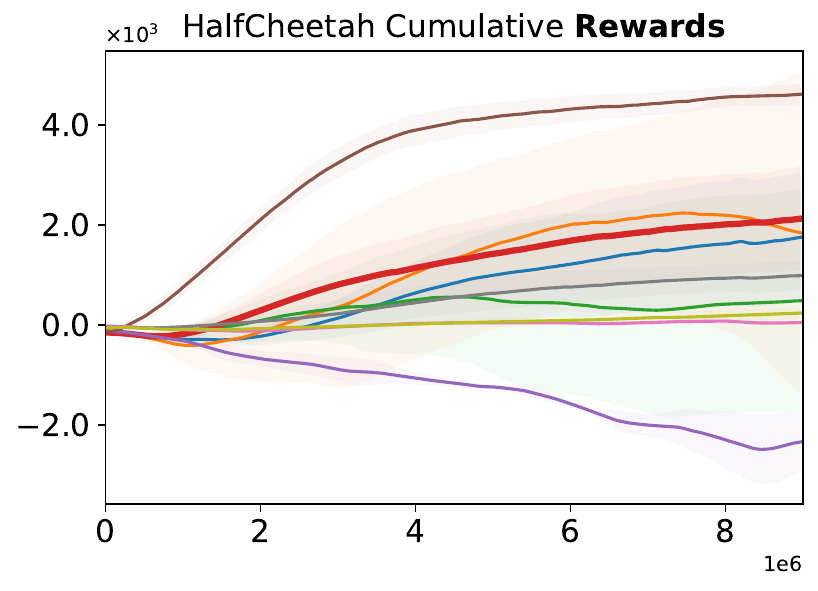}}
\hfill
\subfigure{\includegraphics[width=0.48\linewidth]{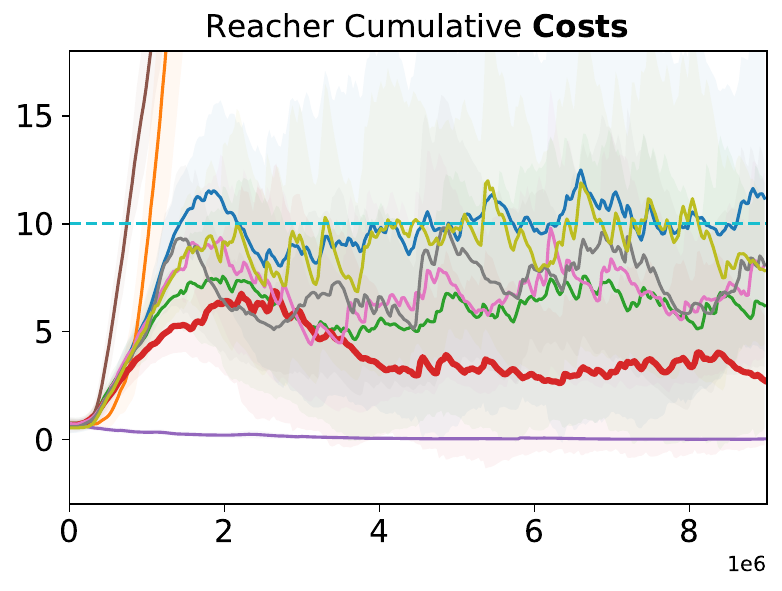}}
\subfigure{\includegraphics[width=0.48\linewidth]{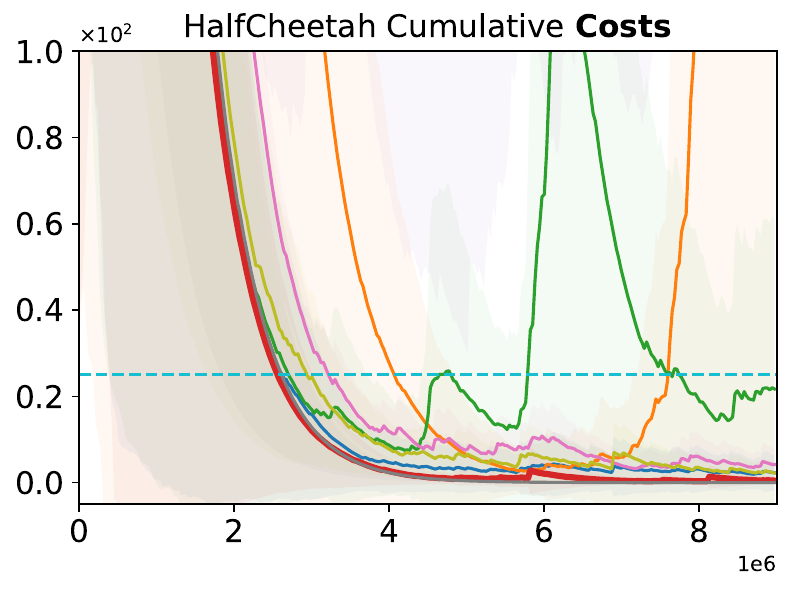}}
\caption{Comparison of RESPO with baselines in MuJoCo. Higher rewards (first row plots) and lower costs (second row plots) are better. In HalfCheetah, RESPO has highest reward among safety baselines, with $0$ violations. In Reacher, RESPO has good rewards, low costs.}%\label{intropic}
\label{fig:Mujoco}
% \end{wrapfigure} 
\end{minipage}
\hfill
\vspace{-10pt}
\centering
\begin{minipage}{.48\textwidth}
\centering
% \begin{wrapfigure}{L}{0.5\textwidth}
\subfigure{\includegraphics[width=0.48\linewidth]{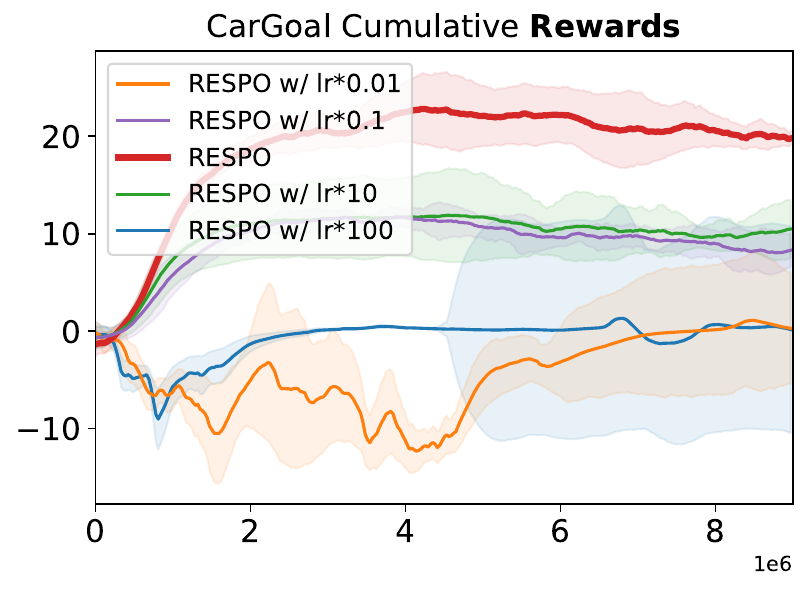}}
\subfigure{\includegraphics[width=0.48\linewidth]{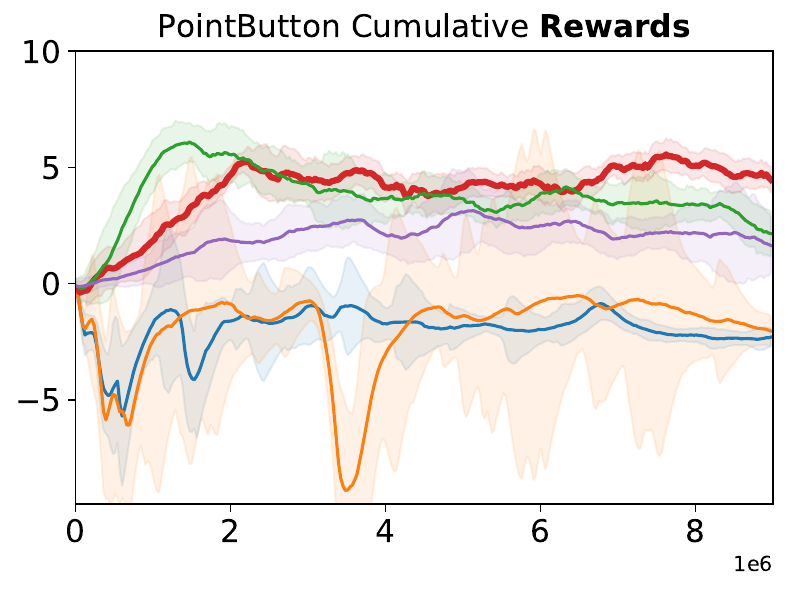}}
\hfill
\subfigure{\includegraphics[width=0.48\linewidth]{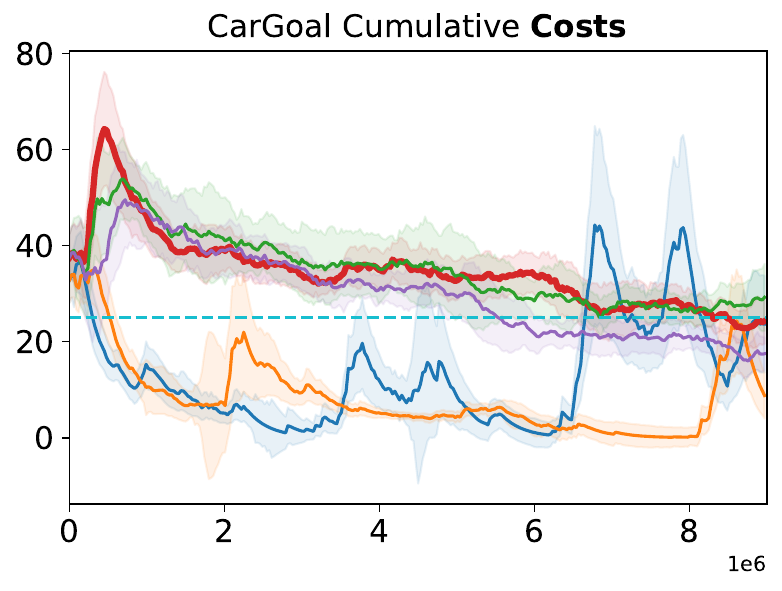}}
\subfigure{\includegraphics[width=0.48\linewidth]{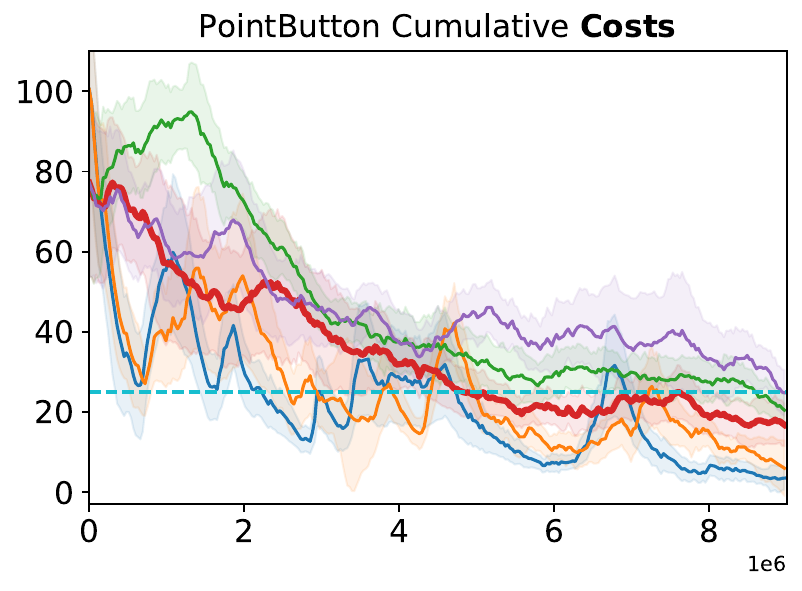}}
\caption{Ablation study on the learning rate of REF. Higher rewards (first row plots) are better; lower costs (second row plots) are better. When changing REF's learning rate to violate timescale assumptions, REF produces suboptimal feasible sets.}
\label{fig:LearningRateAblation}
% \end{wrapfigure} 
\end{minipage}%
\end{figure}
\vspace*{2mm}

In Figure~\ref{fig:LearningRateAblation}, we show the effects of making the learning rate of REF slower and faster than the one we use in accordance with Assumption $1$. From these experiments, changing the learning rate in either direction produces poor reward performance. A fast learning rate makes the REF converge to the likelihood of infeasibility for the current policy, which can be suboptimal. But a very slow learning rate means the function takes too long to converge -- the lagrange multiplier may meanwhile become very large, thus making it too difficult to optimize for reward returns. In both scenarios, the algorithm with modified learning rates produces conservative behavior that sacrifices reward performance.

\begin{figure}%[t]
\centering
\includegraphics[width=0.24\linewidth]{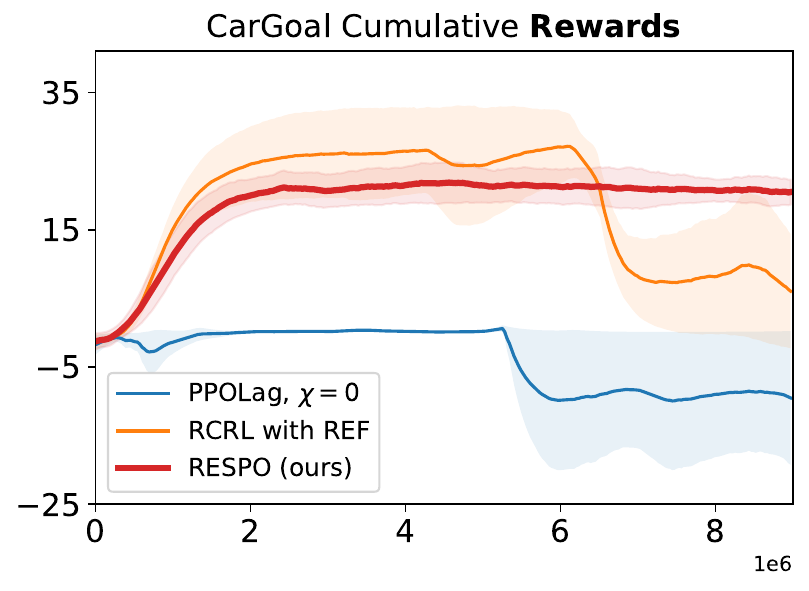}
\includegraphics[width=0.24\linewidth]{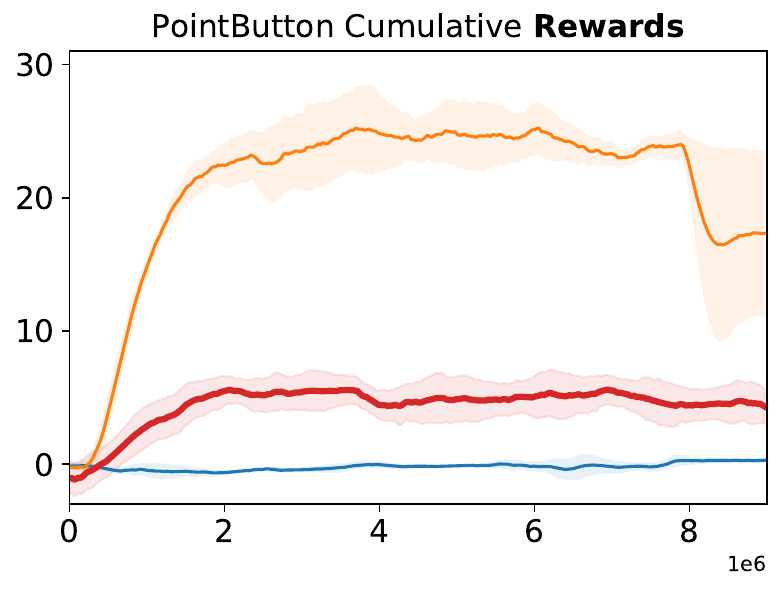}
\includegraphics[width=0.24\linewidth]{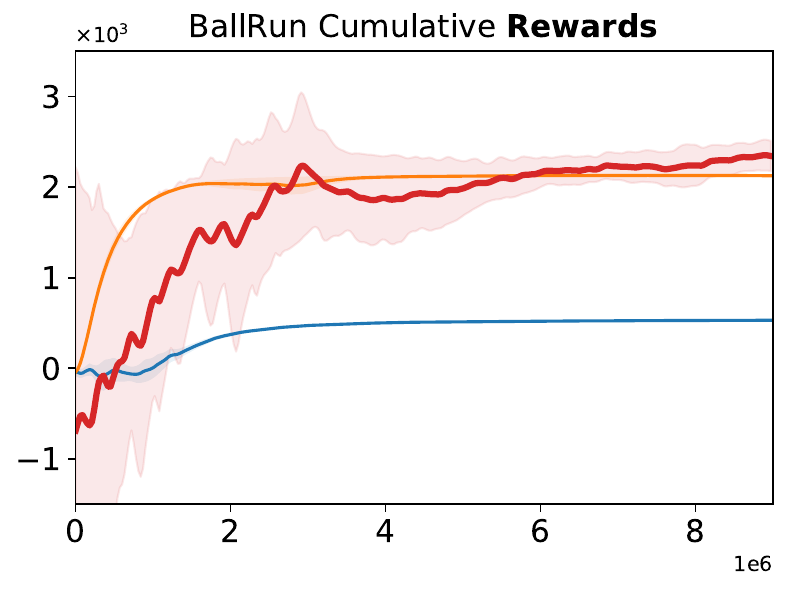}
\includegraphics[width=0.24\linewidth]{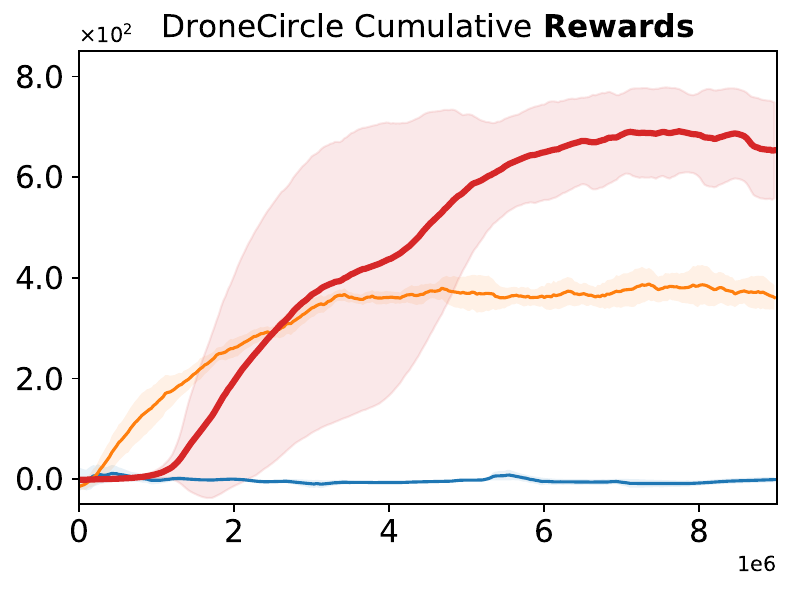}
\includegraphics[width=0.24\linewidth]{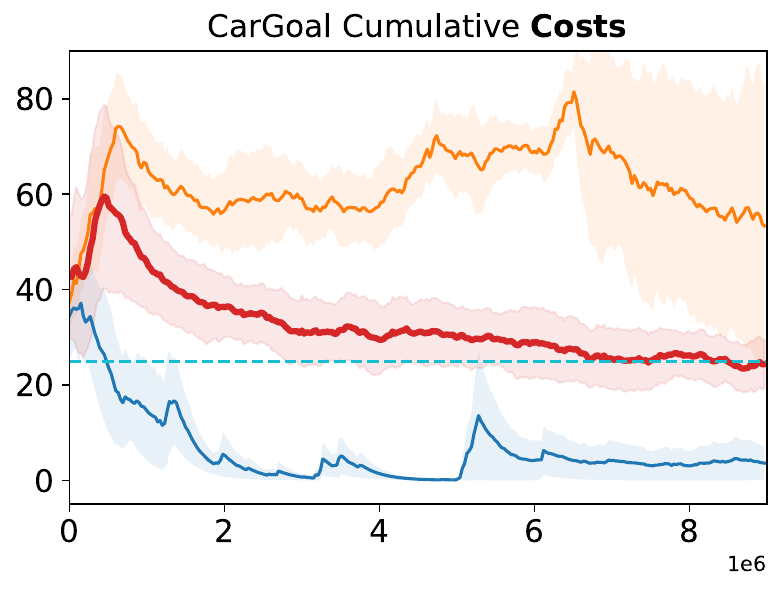}
\includegraphics[width=0.24\linewidth]{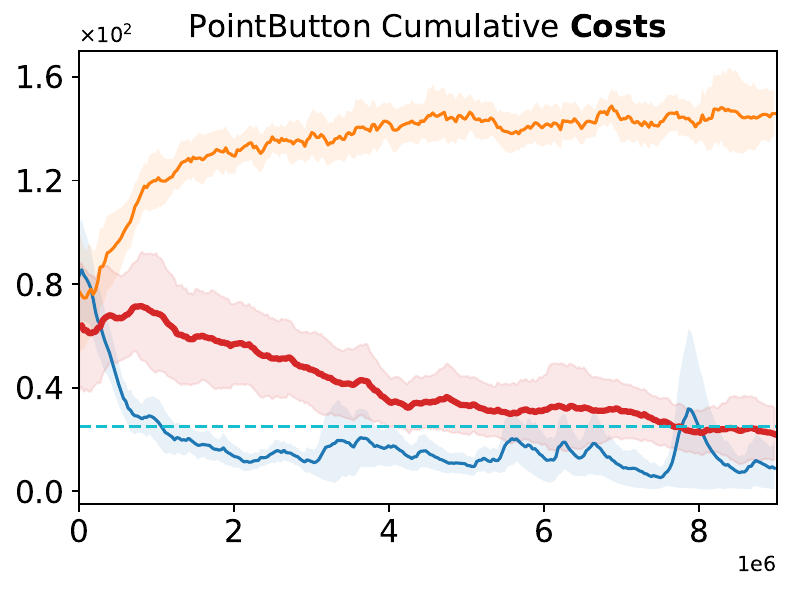}
\includegraphics[width=0.24\linewidth]{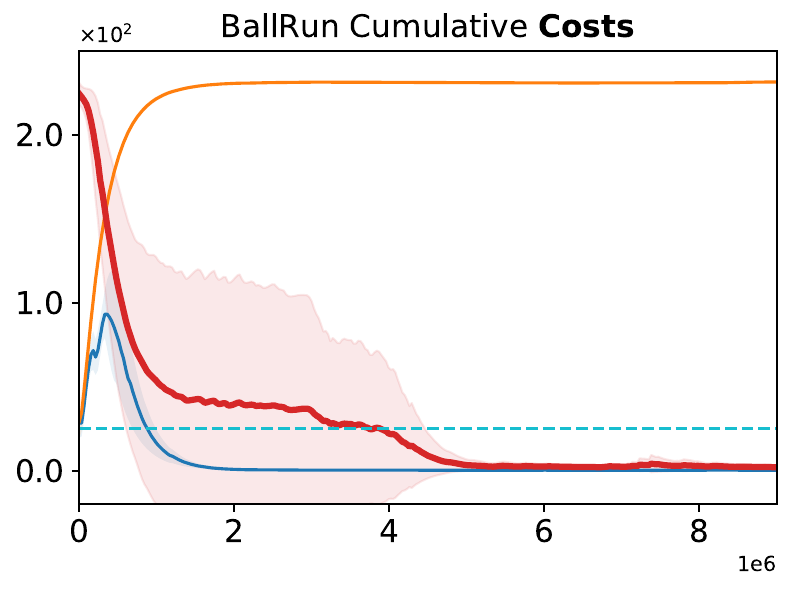}
\includegraphics[width=0.24\linewidth]{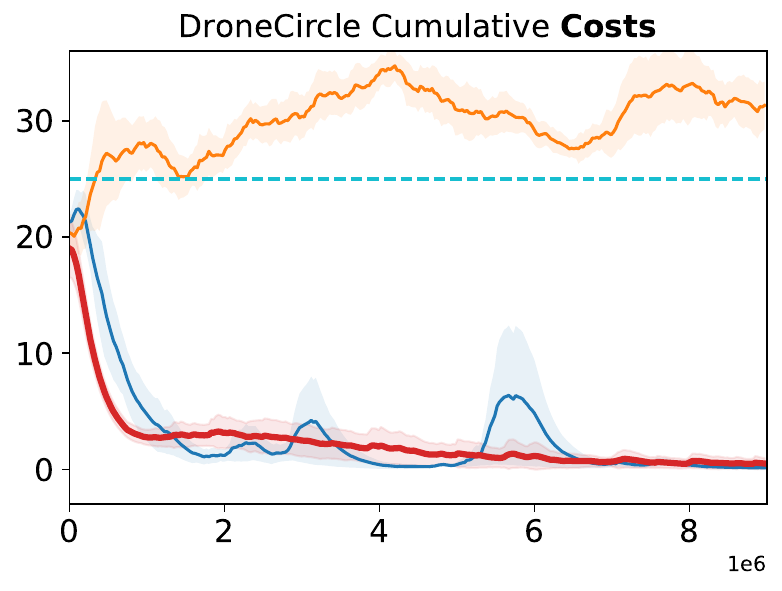}
\caption{Ablation study on optimization framework. Top row plots show performance measured in reward (higher is better). Bottom row plots show cost (lower is better). RESPO (red curve) achieves best balance of maximizing reward and minimizing cost. RCRL framework implemented with our REF without $V^\pi_c$ incurs very high costs. PPOLag in CMDP framework with $\chi=0$ has very low reward performance. So, learning REF and learning $V^\pi_c$ are both crucial components in our design and work in tandem to contribute to RESPO's efficacy.}
\label{fig:AblationOptimization}
\end{figure}

In Figure~\ref{fig:AblationOptimization}, we compare \textbf{RESPO} with \ref{eqn:RCRL} implemented with our REF and \textbf{PPOLag} in the CMDP framework with cost threshold $\chi=0$ to ensure hard constraint satisfaction. The difference between \textbf{RESPO} and the \ref{eqn:RCRL}-based ablation approach is that the ablation still uses $V^\pi_h$ instead of $V^\pi_c$. The ablation aproach's high cumulative cost can be attributed to the limitations of using $V^\pi_h$ -- particularly, the lower sensitivity of $V^\pi_h$ to safety improvement and its lack of guarantees on feasible set (re)entrance. \textbf{PPOLag} with $\chi=0$ produces low violations but also very low reward performance that's close to zero. Naively using $V^\pi_c$ in a hard constraints framework leads to very conservative behavior that sacrifices reward performance. Ultimately, this ablation study experimentally highlights the importance of learning our REF \emph{and} using value function $V^\pi_c$ in our algorithm's design.

\vspace{-5pt}
\section{Discussion and Conclusion}
In summary, we proposed a new optimization formulation and a class of algorithms for safety-constrained reinforcement learning. Our framework optimizes reward performance for states in least-violation policy's feasible state space while maintaining persistent safety as well as providing the safest behavior in other states by ensuring entrance into the feasible set with minimal cumulative discounted costs. Using our proposed reachability estimation function, we prove our algorithm's class of actor-critic methods converge a locally optimal policy for our proposed optimization. We provide extensive experimental results on a diverse suite of environments in Safety Gym, PyBullet, and MuJoCo, and an environment with multiple hard and soft constraints, to demonstrate the effectiveness of our algorithm when compared with several SOTA baselines. 
We leave open various extensions to our work to enable real-world deployment of our algorithm. These include constraining violations during training, guaranteeing safety in single-lifetime reinforcement learning, and ensuring policies don't forget feasible sets as environment tasks change. Our approach of learning the optimal REF to reduce the state space into a low-dimensional likelihood representation to guide training for high-dimensional policies can have applications in other learning problems in answering binary classification or likelihood-based questions about dynamics in high-dimension feature spaces.

\section{Acknowledgements}
This material is based on work supported by NSF Career CCF 2047034, NSF CCF DASS 2217723, ONR YIP N00014-22-1-2292, and Amazon Research Award.

\newpage 
\bibliographystyle{unsrt}
%\bibliography{ref}

\newpage

\appendix

\section{Notation}
\begin{table}[h]
    \centering
    \begin{tabular}{c|c}
         $r$& reward function \\
         $h$ & safety loss function\\
         $V^\pi$ & cumulative discounted reward value function \\
         $V^\pi_c$ & cumulative discounted cost value function \\
         $V^\pi_h$ & reachability value function\\
         $\mathbbm 1_{s\in \mathcal S_v} $ & instantaneous violation indicator function\\
         $\phi ^\pi$ & reachability estimation function (REF) of a given policy\\
         $\phi^*$ & reachability estimation function (REF) of safest policy \\
         $p$ & predicted reachability estimation function (REF) of safest policy\\
         $H_{\max}$ & upper bound of function $h$ \\
         $H_{\min}$ & minimum non-zero value of of function $h$ \\
         $\lambda_{\max}$ & maximum value to clip lagrange multiplier\\
         $R_{\max}$ & upper bound of function $r$ \\
         $\mathcal S_s$ &  safe set\\
         $\mathcal S_v$ &  unsafe set\\
         $\mathcal S_f$ &  feasible set (persistent safe set)\\
         $\E _{s' \sim \pi,P} $ & expectation taken over possible next states\\
         $\E _{\tau \sim \pi,P} $ & expectation taken over possible trajectories\\
         $\E_ {s \sim d_0}$ & expectation taken over initial distribution\\
    \end{tabular}
    \vspace{3mm}
    \caption{Notation used in the paper.}
    \label{tab:notation}
\end{table}

\section{Gradient estimates}
\label{sec:Gradient}
The Q value losses based on the MSE between the Q networks and the respective sampled returns result in the gradients:
\begin{align*}
    \hat{\nabla}_\eta J_{Q}(\eta) = \nabla_\eta Q(s_t,a_t;\eta)\cdot [Q_\eta(s_t, a_t) - (r(s_t, a_t) + \gamma Q(s_{t+1}, a_{t+1};\eta))] \\
    \hat{\nabla}_\kappa J_{Q_c}(\kappa) = \nabla_\kappa Q_c(s_t,a_t;\kappa)\cdot [Q_\kappa(s_t, a_t) - (h(s_t) + \gamma Q_c(s_{t+1}, a_{t+1};\kappa))]
\end{align*}

Similarly the REF gradient update is:
\begin{align*}
    \hat{\nabla}_\eta J_{p}(\xi) = \nabla_\xi p(s_t;\xi)\cdot [p(s_t;\xi) - \max\{\mathbbm 1_{s_t\in S_v}, \gamma p(s_{t+1};\xi)\}]
\end{align*}

From the policy gradient theorem in~\cite{SuttonRL}, we get the policy gradient loss as:

\begin{align*}
    \hat{\nabla}_\theta J_\pi(\theta) = \gamma^t \bigg[&-Q_\eta(s_t,a_t) [1 - p_\xi(s_t)] \\
    &+Q_c(s_t,a_t) [\lambda_\omega (1 - p_\xi(s_t)) + p_\xi(s_t)]\bigg] \nabla_\theta \log\pi_\theta (a_t | s_t)
\end{align*}

and the stochastic gradient of the multiplier is
\begin{align*}
    \hat{\nabla}_\omega J_\lambda(\omega) = Q_c(s_t,a_t;\kappa)(1-p_\xi(s_t)) \nabla_\omega \lambda_\omega
\end{align*}
and $\lambda_\omega$ is clipped to be in range $[0,\lambda_{\max}]$ (in particular, projection operator $\Gamma_\Omega(\lambda_\omega)=\arg\min_{\hat{\lambda}_\omega\in [0,\lambda_{\max}]} || \lambda_\omega - \hat{\lambda}_\omega||^2$).

\section{Proofs}

\subsection{Theorem 1 with Proof}
\begin{theorem}
\label{thm:BellmanREFProof}
    The REF can be reduced to the following recursive Bellman formulation:
    \vspace{-3pt}
    \begin{align*}
    \phi^\pi (s) = \max \{ \mathbbm{1}_{s\in \mathcal S_v} , \E_{s' \sim \pi,P(s)} \phi^\pi(s') \},
    \end{align*}
    
    \vspace{-10pt}
    where $s' \sim \pi,P(s)$ is a sample of the immediate successive state (i.e., $s' \sim P(\cdot| s , a\sim \pi(\cdot | s))$) and the expectation is taken over all possible successive states. 
\end{theorem}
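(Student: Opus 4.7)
The plan is to split into two cases based on whether the starting state $s$ is in $\mathcal{S}_v$ or not, and in the second case use the Markov property together with the tower rule to peel off the first transition.

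First I would handle the easy case: if $s \in \mathcal{S}_v$, then $\mathbbm{1}_{s \in \mathcal{S}_v} = 1$, and since every sampled trajectory $\tau \sim \pi, P(s)$ has $s_0 = s$, the inner quantity $\max_{s_t \in \tau} \mathbbm{1}_{s_t \in \mathcal{S}_v}$ is identically $1$, so $\phi^\pi(s) = 1$. The right-hand side also equals $1$ because $\phi^\pi$ is bounded above by $1$ and the max with $1$ yields $1$. This establishes the identity on $\mathcal{S}_v$.

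Next I would handle the case $s \notin \mathcal{S}_v$, so $\mathbbm{1}_{s \in \mathcal{S}_v} = 0$ and the right-hand side reduces to $\E_{s' \sim \pi, P(s)} \phi^\pi(s')$. The key observation is that on any trajectory $\tau = (s_0, s_1, s_2, \ldots)$ starting from $s \notin \mathcal{S}_v$, the contribution of $s_0$ to the max of indicators is $0$, so
\[
\max_{s_t \in \tau} \mathbbm{1}_{s_t \in \mathcal{S}_v} = \max_{t \geq 1} \mathbbm{1}_{s_t \in \mathcal{S}_v}.
\]
I would then split the sampling of $\tau$ into first sampling $s_1 \sim \pi, P(s)$ and then sampling the tail trajectory $\tau' = (s_1, s_2, \ldots)$. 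By the Markov property, conditional on $s_1 = s'$, the distribution of $\tau'$ is exactly the distribution of a trajectory sampled from $\pi, P(s')$. Applying the tower property of conditional expectations,
\[
\phi^\pi(s) = \E_{s' \sim \pi, P(s)} \bigl[ \E_{\tau' \sim \pi, P(s')} \max_{s_t \in \tau'} \mathbbm{1}_{s_t \in \mathcal{S}_v} \bigr] = \E_{s' \sim \pi, P(s)} \phi^\pi(s'),
\]
which matches the right-hand side of the recursion.

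The main subtlety, and the only step that requires care, is the interchange of $\max$ and $\E$ implicit in the argument: I must not claim $\E \max = \max \E$, which is false in general. The proof avoids this because the $\max$ is taken pointwise inside the expectation on both sides; the Markov step only pulls the expectation over $s_1$ outside the expectation over the tail, which is valid. A secondary point is that the horizon is finite ($T$), so the recursion is well-defined (no measurability issues) and the indicator $\max$ equals the indicator of the event ``the trajectory visits $\mathcal{S}_v$ at some time $t \in \{0, \ldots, T\}$,'' which makes the case analysis transparent.
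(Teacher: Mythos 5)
Your proof is correct and follows essentially the same route as the paper: the paper's chain of equalities pulls $\max\{\mathbbm{1}_{s\in\mathcal S_v},\cdot\}$ outside the expectation by exactly the binary-indicator case distinction you make explicit, and then peels off the first transition via the Markov/tower argument you use. Your case split on $s\in\mathcal S_v$ versus $s\notin\mathcal S_v$ is just a more explicit rendering of the paper's justification for that interchange, so no substantive difference.
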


\begin{proof}
\begin{align*}
    \phi^\pi(s) &:= \E_{\tau \sim \pi ,P(s)} \max_{s_t \in \tau} \mathbbm 1_{s_t^\pi \in S_v} \\
    &= \E_{\tau \sim \pi ,P(s)} \max\{\mathbbm 1_{s \in S_v}, \max_{s_t \in  \tau \backslash \{s\}} \mathbbm 1_{s_t^\pi \in S_v} \} \\
    &= \max\{\mathbbm 1_{s \in S_v}, \E_{\tau \sim \pi ,P(s)} \max_{s_t \in  \tau \backslash \{s\}} \mathbbm 1_{s_t^\pi \in S_v} \} \\
    &= \max\{\mathbbm 1_{s \in S_v}, \E_{s' \sim \pi ,P(s)} \E_{\tau' \sim \pi ,P(s')} \max_{s_t \in  \tau'} \mathbbm 1_{s_t^\pi \in S_v} \} \\
    &= \max\{\mathbbm 1_{s \in S_v}, \E_{s' \sim \pi ,P(s)} \phi^\pi(s') \} \\
\end{align*}
Note that we use the notation $\tau\sim \pi,P(s)$ to indicate a trajectory sampled from the MDP with transition probability $P$ under policy $\pi$ starting from state $s$, and use the notation $s'\sim \pi,P(s)$ to indicate the next immediate state from the MDP with transition probability $P$ under policy $\pi$ starting from state $s$. The third line holds because the indicator function is either $0$ or $1$, so if it's $1$ then $\phi^\pi(s)=\E_{\tau \sim \pi ,P(s)} 1 = 1$ else $\phi^\pi(s)=\E_{\tau \sim \pi ,P(s)} \max_{s_t \in  \tau \backslash \{s\}} \mathbbm 1_{s_t^\pi \in S_v}$.

\end{proof}

\subsection{Proposition 1 with Proof}
\begin{prop}
The cost value function $V_c^\pi (s)$ is zero for state $s$ if and only if the persistent safety is guaranteed for that state under the policy $\pi$. 
\end{prop}

\begin{proof}
    (IF) Assume for a given policy $\pi$, the persistent safety is guaranteed, i.e. $h(s_t|s_0 = 0, \pi) = 0$ holds for all $s_t \in \tau $ for all possible trajectories $\tau$ sampled from the environment with control policy $\pi$. We then have: 
    \begin{align*}
        V^\pi_c(s):=\E_{\tau \sim \pi,P(s)} [\sum\limits_{s_t\in \tau} \gamma^t h(s_t)] = 0.
    \end{align*}
    
    (ONLY IF) Assume for a given policy $\pi$, $V^\pi_c(s) = 0$. Since the image of the safety loss function $h(s)$ is non-negative real, and $V^\pi_c(s)$ is the expectation of the sum of non-negative real values, the only way $V^\pi_c(s) = 0$ is if $h(s_t|s_0 = 0, \pi) = 0$, $\forall s_t \in \tau $ for all possible trajectories $\tau$ sampled from the environment with control policy $\pi$. 
\end{proof}

\subsection{Proposition 2 with Proof}
\begin{prop}
    If $\exists\pi$ that produces trajectory $\tau=\{(s_i),i\in\mathbbm{N}, s_1=s\}$ in deterministic MDP $\mathcal{M}$ starting from state $s$, and $\exists m\in \mathbbm{N}, m<\infty$ such that $s_m\in S^\pi_f$, then $\exists \epsilon>0$ where if discount factor $\gamma\in (1-\epsilon,1)$, then the optimal policy $\pi^*$ of Main paper Equation~\ref{eqn:DeterministicOF} will produce a trajectory $\tau'=\{(s'_j),j\in\mathbbm{N}, s'_1=s\}$, such that $\exists n\in \mathbbm{N}, n<\infty$, $s'_n\in S^{\pi^*}_f$ and $V^{\pi^*}_c(s)=\min_{\pi'} V^{\pi'}_c(s)$.
\end{prop}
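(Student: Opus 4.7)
The plan is to split on whether $\phi^*(s) = 0$ or $\phi^*(s) = 1$, the only two possibilities in the deterministic setting. If $\phi^*(s) = 0$ then $s \in \mathcal S_f$ and the constraint in Equation~\ref{eqn:DeterministicOF} forces $V^{\pi^*}_c(s) = 0$; by Proposition~1 this is equivalent to persistent safety from $s$ under $\pi^*$, so $\phi^{\pi^*}(s) = 0$ and $s = s'_1 \in \mathcal S_f^{\pi^*}$. Moreover $\min_{\pi'} V^{\pi'}_c(s) = 0$, matching $V^{\pi^*}_c(s)$, so the proposition holds trivially with $n = 1$.

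For the substantive case $\phi^*(s) = 1$, the term $V^\pi(s)(1-\phi^*(s))$ vanishes at $s$ and the RESPO constraint is vacuous there (since $s \notin \mathcal S_f$), so the objective restricted to trajectories from $s$ reduces to $\max_\pi -V^\pi_c(s)$. Standard MDP theory (finite, bounded costs, discounted returns) then supplies a stationary deterministic optimal policy achieving $V^{\pi^*}_c(s) = \min_{\pi'} V^{\pi'}_c(s)$, which covers the cost-minimality claim. The assumed policy $\pi$ reaching $s_m \in \mathcal S_f^\pi$ incurs zero cost from $s_m$ onwards, giving the $\gamma$-uniform bound $V^{\pi^*}_c(s) \leq V^\pi_c(s) \leq \sum_{k=0}^{m-1} \gamma^k H_{\max} \leq m H_{\max}$.

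It remains to show $\pi^*$'s trajectory $\tau' = \{s'_j\}$ eventually enters $\mathcal S_f^{\pi^*}$. I would invoke finiteness of the MDP so that $\tau'$ under a stationary deterministic $\pi^*$ is eventually periodic with entry time $T_0$ and period $T$, both bounded by $|\mathcal S|$. If the periodic tail contains no violating state, every tail state $s'_j$ satisfies $\phi^{\pi^*}(s'_j) = 0$ and lies in $\mathcal S_f^{\pi^*}$, giving the desired finite $n$. Otherwise the tail contains at least one violating state, yielding $V^{\pi^*}_c(s) \geq \gamma^{T_0} H_{\min}/(1 - \gamma^T)$, which diverges as $\gamma \to 1$; one chooses $\epsilon > 0$ small enough that $H_{\min}/(1 - \gamma^{|\mathcal S|}) > m H_{\max}$ for all $\gamma \in (1-\epsilon, 1)$, contradicting the uniform upper bound. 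The main obstacle is precisely this eventually-periodic ingredient: it is what converts the qualitative statement ``infinitely many violations'' into a quantitative lower bound on $V^{\pi^*}_c(s)$ that dominates the $\gamma$-free upper bound $m H_{\max}$. Without finite-MDP structure one would need a more delicate uniform-integrability argument to rule out violations spaced increasingly sparsely along time.
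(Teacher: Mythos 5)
Your proof is correct and follows the same skeleton as the paper's: handle the already-feasible case trivially, then for the substantive case compare the discounted cost of entering the feasible set (bounded above, uniformly in $\gamma$, by roughly $mH_{\max}$ using the hypothesized policy $\pi$) against the discounted cost of never entering (which diverges as $\gamma \to 1^-$), and pick $\epsilon$ accordingly. The one genuine difference is how the lower bound on the never-entering cost is obtained. The paper asserts that any trajectory that never enters the feasible set has a \emph{finite} maximum gap $w$ between consecutive violations and bounds the cost below by $H_{\min}\gamma^{w}/(1-\gamma^{w})$; that finiteness claim is stated without justification and is actually the weakest link in the paper's argument (in an infinite state space the gaps could grow without bound). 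You instead invoke finiteness of the MDP so that the trajectory of a stationary deterministic $\pi^*$ is eventually periodic with transient and period bounded by $|\mathcal S|$, which turns ``infinitely many violations'' into a genuinely uniform recurrence and makes the lower bound rigorous — at the explicit cost of the finite-MDP assumption, which the paper uses elsewhere (its convergence analysis) but does not state in this proposition. Two small slips on your side, neither fatal: the first violation in the cycle occurs at some offset $c < T$ into the period, so the bound should carry a factor $\gamma^{T_0+c}$ (your final inequality also drops this power of $\gamma$); since $T_0 + T \le 2|\mathcal S|$ and $\gamma \to 1$, the divergence argument is unaffected. Also note that your appeal to ``standard MDP theory'' for state-wise cost minimality of $\pi^*$ at infeasible states is at the same level of rigor as the paper's own remark preceding its proof, so no gap relative to the paper there.
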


In other words the proposition is stating for some state $s$, if there is a policy that enters its feasible set in a finite number ($m-1$) of steps, then by ensuring discount factor $\gamma$ is close to $1$ we can guarantee that the optimal policy $\pi^*$ of Main paper Equation~\ref{eqn:DeterministicOF} will also enter the feasible set in a finite number of steps with the minimum cumulative discounted sum of the costs. Note that $\pi^*$ will always produce trajectories with the minimum discounted sum of costs whether the state is in the feasible or infeasible set of the policy by virtue of its optimization which constrains $V^\pi_c$.
\begin{proof}
    We consider two cases: (Case 1) $m=1$ and (Case 2) $m>1$.

    \textbf{Case 1 $m=1$:} In this case, there exists a policy $\pi$ in which the the current state $s$ is in the feasible set of that policy. By definition, that means that in a trajectory $\tau$ sampled in the MDP using that policy, starting from state $s$, there are no future violations incurred in $\tau$. Thus $V^\pi_c(s)=0$. Since $\pi^*$ incurs the minimum cumulative violation,  $V^{\pi^*}_c(s)=0$ trivially. Therefore, $s$, the first state of the trajectory, is in the feasible set of $\pi^*$. 
    
    \textbf{Case 2 $m>1$:} Since policy $\pi^*$ produces the minimum cumulative discounted cost for a given state $s$, the core of this proof will be demonstrating that the minimum cumulative discounted cost of \emph{entering} the feasible set (call this value $H_E$) is less than the minimum cumulative discounted cost of \emph{not entering} the feasible set (call this value $H_N$), and therefore $\pi^*$ will choose the route of entering the feasible set.

    The proof will proceed by deriving a sufficient condition for $H_E<H_N$ by establishing bounds on them.

    We place an upper bound on the minimum cumulative discounted cost of entering the feasible set $H_E$. Since $\exists\pi$ that enters the feasible set in $m-1$ steps, entering the feasible set can be at most the highest possible cost that $\pi$ incurs. Since the maximum cost at any state is $H_{\max}$, the upper bound is the discounted sum of $m-1$ steps of violations $H_{\max}$, or 
    \begin{equation*}
        H_E < \frac{H_{\max}(1-\gamma^{m-1})}{(1-\gamma)}
    \end{equation*}

    We place a lower bound on the minimum cumulative discounted cost of not entering the feasible set $H_N$. In this case, say in the sampled trajectory, the maximum gap between any two non-zero violations is $w$. By definition, the trajectory cannot have an infinite sequence of violation-free states since the trajectory never enters the feasible set. Therefore $w$ is finite. Now recall $H_{\min}$ is the lower bound on the non-zero values of $h$. So the minimum cumulative discounted cost of not entering the feasible set must be at least the cost of the trajectory with a violation of $H_{\min}$ at intervals of $w$ steps. That is:
    \begin{equation*}
        \frac{H_{\min}(\gamma^{w})}{(1-\gamma^{w})} < H_N
    \end{equation*}

    Now $H_E < H_N$ will be true if the upper bound of $H_E$ is less than the lower bound of $H_N$. In other words $H_E < H_N$ is true if:
    
    \begin{equation}
        \frac{H_{\max}(1-\gamma^{m-1})}{(1-\gamma)} < \frac{H_{\min}(\gamma^{w})}{(1-\gamma^{w})}
    \end{equation}
    Rearranging, we get:
    \begin{equation}
    \label{eqn:Ineq1}
        \frac{H_{\max}}{H_{\min}} < \frac{(1-\gamma)\cdot (\gamma^{w})}{(1-\gamma^{m-1})\cdot(1-\gamma^{w})} 
    \end{equation}
    
    Let's define the RHS of the Inequality~\ref{eqn:Ineq1} as the function $\upsilon(\gamma)$. Consider $\gamma\in(0,1)$. It is not difficult to demonstrate that $\upsilon(\gamma)$ in this domain range is a continuous function and that left directional limit $\lim_{\gamma \to 1^{-}} \upsilon(\gamma)=\infty$. This suggests that there is an open interval of values for $\gamma$ (whose supremum is $1$) for which $H_{\max}/H_{\min} < \upsilon(\gamma)$ and so $H_E < H_N$. So we establish that $\exists \epsilon>0$ such that for $\gamma\in (1-\epsilon,1)$, we satisfy the sufficient condition $H_E < H_N$ so that the optimal policy will enter its feasible set.

    Thus, we prove that if there is a policy entering its feasible set from state $s$, then there is a range of values for $\gamma$ that are close enough to $1$ ensuring that the optimal policy of Main paper Equation~\ref{eqn:DeterministicOF} will enter its feasible set in a finite number of steps with minimum discounted sum of costs.

\end{proof}

\subsection{Theorem 2 with Proof}
\label{sec:ConvergenceThm}

\begin{theorem}
    Given Assumptions \textbf{A1}-\textbf{A3} in Main paper, the policy updates in Algorithm~\ref{alg:Algorithm} 
    will almost surely converge to a locally optimal policy for our proposed optimization in Equation~\ref{eqn:StochasticOF}.  
\end{theorem}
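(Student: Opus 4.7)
The plan is to invoke the multi-timescale stochastic approximation framework of Borkar, exploiting Assumption A1 which imposes strict separation of learning rates for the four iterates: critics $(Q,Q_c)$, policy $\pi_\theta$, REF $p_\xi$, and multiplier $\lambda_\omega$. On each timescale, all slower iterates are treated as quasi-static constants while all faster iterates are viewed as having already converged to their asymptotic values seen as functions of the slower ones. This yields four nested ODE limits whose composition characterizes the overall trajectory; the projections $\Gamma_\Theta$ and $\Gamma_\Omega$ keep the iterates in compact sets, and A3 provides the Lipschitz regularity required by the Kushner--Clark lemma at every stage.

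First I would handle the fastest timescale. With $\pi_\theta$, $p_\xi$, and $\lambda_\omega$ frozen, the Bellman updates in lines 4--5 of Algorithm~\ref{alg:Algorithm} are standard TD(0) recursions with bounded Markovian noise, and their ODE limits are linear $\gamma$-contractions with unique fixed points $Q^{\pi_\theta}$ and $Q^{\pi_\theta}_c$. Moving to the policy timescale, the critics may be replaced by their quasi-static limits, making the update in lines 6--7 an unbiased stochastic gradient of $L(\pi_\theta,\lambda_\omega)$ (as written in Equation~\ref{eqn:L_loss}) with weights $p_\xi(s)$ and $1-p_\xi(s)$. Standard projected-SA arguments (cf.~\cite{borkar2009stochastic,chow2019lyapunov}) then give $\theta_k \to \Theta^*(\xi_k,\omega_k)$ almost surely, where $\Theta^*(\xi,\omega)$ is the set of locally asymptotically stable equilibria of the policy-gradient ODE for the frozen $(\xi,\omega)$.

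On the REF timescale, the critics and policy sit at their quasi-static limits $\theta^*(\xi,\omega)$, and the update in line~8 of Algorithm~\ref{alg:Algorithm} is a stochastic approximation of the Bellman operator from Theorem~\ref{thm:BellmanREF}, which is a sup-norm $\gamma$-contraction for the fixed policy $\pi_{\theta^*(\xi,\omega)}$. Its ODE therefore has a unique fixed point $\xi^*(\omega)$ satisfying the self-consistency $p_{\xi^*(\omega)} = \phi^{\pi_{\theta^*(\xi^*(\omega),\omega)}}$. Finally, on the slowest timescale, A2 (strict feasibility) gives Slater's condition on the feasible sub-problem, so combined with Proposition~1 the multiplier update in line~9 is standard dual ascent that drives $(\theta^*,\xi^*,\omega)$ to a local KKT point of \ref{eqn:StochasticOF}, where the $(1-p(s))$-weighting of the constraint gradient recovers the probabilistic constraint ``$V^\pi_c(s)=0$ w.p.\ $1-\phi^*(s)$'' once $p$ is identified with $\phi^*$.

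The main obstacle will be the REF layer: because $p_\xi$ both enters the Lagrangian weights used by the policy update and is defined as a property of the resulting policy, showing the induced operator $\xi \mapsto T^{\pi_{\theta^*(\xi,\omega)}}$ has a unique locally asymptotically stable fixed point requires establishing continuity (and, on a neighborhood of equilibria, a contraction-like property) of $\xi \mapsto \theta^*(\xi,\omega)$. This continuity is inherited from A3 and the implicit-function behavior of local optima of $L$ in its parameters, but it is the single step that departs from a purely mechanical application of Borkar's theorem. A secondary technicality is ruling out pathological oscillation between the feasible-regime objective $-V^\pi$ and the infeasible-regime objective $V^\pi_c$ as $p_\xi(s)$ crosses intermediate values; this is resolved by the slower schedule of $\xi$ relative to $\theta$, which keeps the weights effectively constant while the policy update settles.
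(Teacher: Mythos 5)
Your overall architecture coincides with the paper's proof: four nested timescales in the same order (critics, policy, REF, multiplier), $\gamma$-contraction arguments for the critic and REF Bellman operators (the latter from Theorem~\ref{thm:BellmanREF}), projected stochastic approximation with martingale-difference noise and a Lyapunov function for the policy and multiplier iterates via Borkar's Theorem, and a final saddle-point/KKT step identifying the limit as a local optimum. In that sense the route is the same, not a different one.

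However, there is a genuine gap at the REF layer, and it is not the obstacle you flag. Your analysis of the $\xi$-timescale only yields the self-consistency $p_{\xi^*}=\phi^{\pi_{\theta^*(\xi^*,\omega)}}$, i.e., the learned $p$ is the REF \emph{of the policy being learned}. But the objective and constraint in Equation~\ref{eqn:StochasticOF} are stated with $\phi^*$, the REF of a \emph{safest} policy (Definition~\ref{def:optimal REF}). You invoke exactly this identification in your last step (``once $p$ is identified with $\phi^*$''), yet nothing in your argument supplies it, and without it the conclusion fails: a fixed point of the coupled system with $\phi^{\pi}$ in place of $\phi^*$ can be degenerate (e.g., an unsafe policy whose own REF labels most initial states infeasible, so the weight $1-p(s)$ switches off the constraint and the Lagrangian in Equation~\ref{eqn:L_loss} no longer penalizes the violations that $\phi^*$ would). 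The paper closes this hole in its Step 3: since $p^{\diamond}(s)/(1-p^{\diamond}(s))\geq 0$ adds a nonnegative extra weight on $Q_c$, the converged policy is at least as safe as a local optimum of the plain $\lambda$-penalized objective, and because $\lambda_\omega$ lives on the slowest timescale and is clipped at $\lambda_{\max}$, letting $\lambda_\omega\to\lambda_{\max}\to\infty$ drives those optima toward $\arg\min_\pi \E Q_c^\pi$, so the feasible set of the learned REF approaches that of a safest policy (see also the appendix remark bounding $\lambda_{\max}$). Your proposal needs this (or an equivalent) argument before the dual-ascent step can be read as enforcing ``$V^\pi_c(s)=0$ w.p.\ $1-\phi^*(s)$.'' Conversely, the difficulty you single out -- continuity and uniqueness of the fixed point of $\xi\mapsto\theta^*(\xi,\omega)$ -- is essentially dissolved by the timescale separation itself (the REF update sees a converged policy and its Bellman operator is a sup-norm contraction for that fixed policy), which is how the paper treats it.
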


We first provide an intuitive explanation behind why our REF learns to converge to the safest policy's REF, then a proof overview, and then the full proof.

\subsubsection{Intuition behind REF convergence}
The approach can be explained by considering what happens in the individual regions of space. Consider a deterministic environment for simplicity. As seen in Figure~\ref{fig:REFintuitive}, there are two subsets of the initial state space: a safest policy's "true" feasible set $\mathcal{S}_{hI}$ and REF predicted feasible set $\mathcal{S}_{pI}$, and they create $4$ regions in the initial state space $\mathcal{S}_I$: 
$\mathcal{W}=\overline{\mathcal{S}_{hI}}\cap\overline{ \mathcal{S}_{pI}}$, $\mathcal{X}=\overline{\mathcal{S}_{hI}}\cap \mathcal{S}_{pI}$, $\mathcal{Y}=\mathcal{S}_{hI}\cap\overline{ \mathcal{S}_{pI}}$, $\mathcal{Z}=\mathcal{S}_{hI}\cap \mathcal{S}_{pI}$. Consider a point during training when the lagrange multiplier $\lambda$ is sufficiently large. For states in $\mathcal{W}$, the set of correctly classified infeasible states, the algorithm will simply minimize cumulative violations $V^{\pi_\theta}_c(s)$, and thereby remain as safe as possible since the policy and critics learning rates are faster than that of REF. $\mathcal{X}$, which is the set of infeasible states that are misclassified, is very small if we ensure the policy and REF are trained at much faster time scales than the multiplier and so when the agent starts in true infeasible states, it will by definition reach violations and therefore be labeled as infeasible. In $\mathcal{Y}$, the set of truly feasible states that are misclassified, the algorithm also minimizes cumulative violations, which by the definition of feasibility should be $0$. It will then have no violations and enter the correctly predicted feasible set $\mathcal{Z}$. And when starting in states in $\mathcal{Z}$, the algorithm will optimize the lagrangian, and since the multiplier $\lambda$ is sufficiently large, it will converge to a policy that optimizes for reward while ensuring safety, i.e. no future violations, and therefore the state will stay predictably feasible in $\mathcal{Z}$. In this manner, REF's predicted feasible set will converge to the optimal feasible set, and the agent will be safe and have optimal performance in the feasible set and be the safest behavior outside the feasible set. Thereby, the algorithm finds a locally optimal solution to the proposed optimization formulation.

\begin{figure}
\centering
\includegraphics[width=0.5\linewidth]{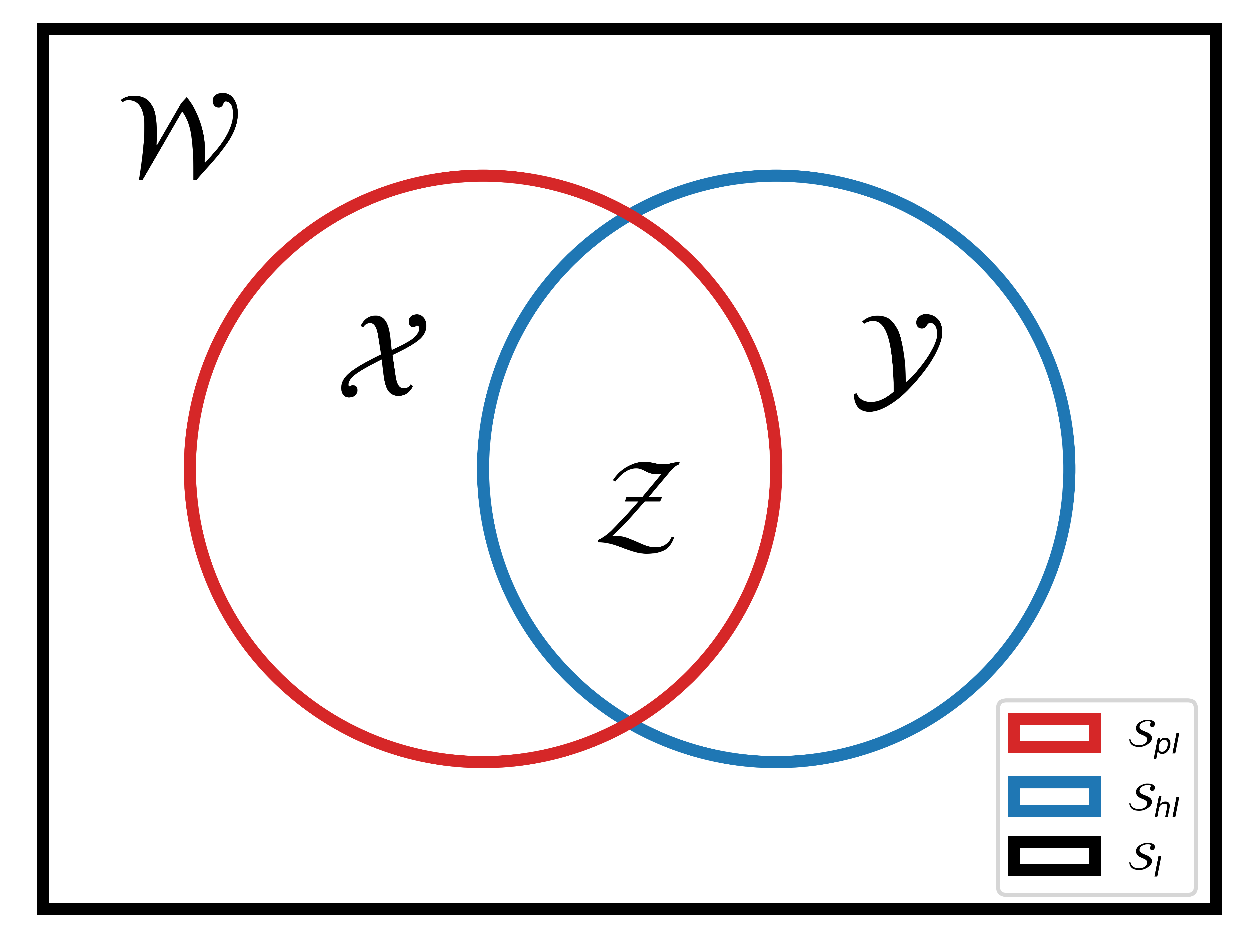}
\caption{The predicted feasible set converges to a safest policy's feasible set since the misclassified regions $\mathcal X$ and $\mathcal Y$ are corrected over time.}
\label{fig:REFintuitive}
\end{figure}

\subsubsection{Proof Overview}
We show our algorithm convergence to the optimal policy by utilizing the proof framework of multi-time scale presented in~\cite{chow2017risk, borkar2009stochastic, chow2019lyapunov, yu2022reachability}. Specifically, we have $4$ time scales for (1) the critics, (2) policy, (3) REF function, and (4) lagrange multiplier, listed in order from fastest to slowest. The overview of each timescale proof step is as follows:
\begin{enumerate}
    \item[1] We demonstrate the almost sure convergence of the critics to the corresponding fixed point optimal critic functions of the policy.
    \item[2] Using multi-timescale theory, we demonstrate the policy almost surely converges to a stationary point of a continuous time system, which we show has a Lyapunov function certifying its locally asymptotic stability at the stationary point.
    \item[3] We demonstrate the almost sure convergence of the REF function to the REF of the policy that is safe insofar as the lagrange multiplier is sufficiently large.
    \item[4] We demonstrate the almost sure convergence of the lagrange multiplier to a stationary point similar to the proof in the policy timecale. 
\end{enumerate}
Finally, we demonstrate that the stationary points for the policy and lagrange multiplier form a saddle point, and so by local saddle point theorem we almost surely achieve the locally optimal policy of our proposed optimization.

\subsubsection{Proof Details}
\begin{proof}
\textbf{Step 1} (convergence of the critics $V_\eta$ and $V_\kappa$ updates): From the multi-time scale assumption, we know that $\eta$ and $\kappa$ will convergence on a faster time scale than the other parameters $\theta$, $\xi$, and $\omega$. Therefore, we can leverage Lemma $1$ of Chapter $6$ of~\cite{borkar2009stochastic} to analyze the convergence properties while updating $\eta_k$ and $\kappa_k$ by treating $\theta$, $\xi$, and $\omega$ as fixed parameters $\theta_k$, $\xi_k$, and $\omega_k$. In other words, the policy, REF, and lagrange multiplier are fixed while computing $Q^{\pi_{\theta_k}}(s,a)$ and $Q_c^{\pi_{\theta_k}}(s,a)$. With the Finite MDP assumption and policy evaluation convergence results of~\cite{SuttonRL}, and assuming sufficiently expressive function approximator (i.e. wide enough neural networks) to ensure convergence to global mininum, we can use the fact that the bellman operators $\mathcal{B}$ and $\mathcal{B}_c$ which are defined as
\begin{align*}
    \mathcal{B}[Q](s,a) = r(s,a)+\gamma\E_{s',a'\sim \pi,P(s)}[Q(s', a')] \\
    \mathcal{B}
    [Q_c](s,a) = h(s)+\gamma\E_{s',a'\sim \pi,P(s)}[Q_c(s', a')]
\end{align*}
are $\gamma$-contraction mappings, and therefore as $k$ approaches $\infty$, we can be sure that $Q(s,a;\eta_k)\rightarrow Q(s,a;\eta^*)=Q^{\pi_{\theta_k}}(s,a)$ and $Q_c(s,a;\kappa_k)\rightarrow Q_c(s,a;\kappa^*)=Q_c^{\pi_{\theta_k}}(s,a)$. So since $\eta_k$ and $\kappa_k$ converge to $\eta^*$ and $\kappa^*$, we prove convergence of the critics in Time scale 1.

\textbf{Step 2} (convergence of the policy $\pi_\theta$ update): Because $\xi$ and $\omega$ updated on slower time scales than $\theta$, we can again use Lemma $1$ of Chapter $6$ of~\cite{borkar2009stochastic} and treat these parameters are fixed at $\xi_k$ and $\omega_k$ respectively when updating $\theta_k$. Additionally in Time scale 2, we have $||Q(s,a;\eta_k)-Q(s,a;\eta^*)||\rightarrow 0$ and $||Q_c(s,a;\kappa_k)-Q_c(s,a;\kappa^*)||\rightarrow 0$ almost surely. Now the update of the policy $\theta$ using the gradient from Equation \ref{eqn:L_loss} is:
\begin{align*}
    \theta_{k+1} &=\Gamma_\Theta [ \theta_k - \zeta_2(k)( \nabla_\theta L(\theta, \xi_k, \omega_k)|_{\theta=\theta_k})] \\
    &=\Gamma_\Theta [\theta_k - \zeta_2(k)[\gamma^t [-Q_\eta(s_t,a_t) [1 - p_{\xi_k}(s_t)] \\
    &+ Q_c(s_t,a_t) [\lambda_\omega (1 - p_{\xi_k}(s_t)) + p_{\xi_k}(s_t)]]\nabla_\theta \log \pi(a_t | s_t; \theta) |_{\theta=\theta_k}]]\\
    &=\Gamma_\Theta [ \theta_k - \zeta_2(k) (\nabla_\theta L(\theta, \xi_k, \omega_k)|_{\theta=\theta_k, \eta=\eta^*, \kappa=\kappa^*} + \delta\theta_{k+1} + \delta\theta_\epsilon)]
\end{align*}
where
\begin{align*}
    \delta \theta_{k+1} = \sum_{s_i,a_i} \biggl[d_0(s_0)P^{\pi_{\theta_k}}(s_i,a_i|s_0) \gamma^i [-Q_\eta(s_i,a_i) [1 - p_{\xi_k}(s_i)] &\\
    + Q_c(s_i,a_i) [\lambda_\omega (1 - p_{\xi_k}(s_i)) + p_{\xi_k}(s_i)]]\nabla_\theta \log \pi(a_i | s_i; \theta) |_{\theta=\theta_k}\biggr] &\\ 
    - \gamma^t [-Q_\eta(s_t,a_t) [1 - p_{\xi_k}(s_t)] + Q_c(s_t,a_t) [\lambda_\omega (1 - p_{\xi_k}(s_t)) + p_{\xi_k}(s_t)]] &\\
    \cdot\nabla_\theta \log \pi(a_t | s_t; \theta) |_{\theta=\theta_k} &
\end{align*}
and
\begin{align*}
    \delta \theta_\epsilon = \sum_{s_i,a_i} d_0(s_0)P^{\pi_{\theta_k}}(s_i,a_i|s_0)\biggl[\\
    - \gamma^i [-Q(s_i,a_i;\eta_k) [1 - p_{\xi_k}(s_i)] + Q_c(s_i,a_i; \kappa_k) [\lambda_\omega (1 - p_{\xi_k}(s_i)) + p_{\xi_k}(s_i)]] &\\ 
    \cdot\nabla_\theta \log \pi(a_i | s_i; \theta) |_{\theta=\theta_k} &\\
    + \gamma^i [-Q^{\pi_{\theta_k}}(s_i,a_i) [1 - p_{\xi_k}(s_i)] + Q^{\pi_{\theta_k}}_c(s_i,a_i) [\lambda_\omega (1 - p_{\xi_k}(s_i)) + p_{\xi_k}(s_i)]] &\\
    \cdot\nabla_\theta \log \pi(a_i | s_i; \theta) |_{\theta=\theta_k}
    \biggr] &
\end{align*}

\emph{Lemma 1}: We can first demonstrate that $\delta \theta_{k+1}$ is square integrable. In particular,
\begin{align*}
    &\E[||\delta \theta_{k+1}||^2 | \mathcal{F}_{\theta,k}]\\
    &\leq 2 || \nabla_\theta \log \pi(a|s; \theta)|_{\theta=\theta_k} \mathbbm{1}_{\pi(a|s;\theta_k) > 0}||^2_\infty \cdot \bigg(||Q(s,a;\eta_k)||^2_\infty \cdot ||1 - p_{\xi_k}(s)||^2_\infty \\
    &+ ||Q_c(s,a;\kappa_k)||^2_\infty \cdot \biggl[||\lambda_\omega||^2_\infty \cdot ||1 - p_{\xi_k}(s)||^2_\infty + ||p_{\xi_k}(s)||^2_\infty\biggr]\bigg) \\
    &\leq 2 \frac{|| \nabla_\theta \pi(a|s; \theta)|_{\theta=\theta_k}||^2_\infty}{\min\{ \pi(a|s;\theta_k) | \pi(a|s;\theta_k) > 0\}} \cdot \bigg(||Q(s,a;\eta_k)||^2_\infty \cdot ||1 - p_{\xi_k}(s)||^2_\infty \\
    &+ ||Q_c(s,a;\kappa_k)||^2_\infty \cdot \biggl[||\lambda_\omega||^2_\infty \cdot ||1 - p_{\xi_k}(s)||^2_\infty + ||p_{\xi_k}(s)||^2_\infty\biggr]\bigg) 
\end{align*}
Note that $\mathcal{F}_{\theta,k}=\sigma(\theta_m, \delta \theta_m, m\leq k)$ is the filtration for $\theta_k$ generated by different independent trajectories~\cite{chow2017risk}.
Also note that the indicator function is used because the expectation of $||\delta \theta_{k+1}||^2$ is taken with respect to $P^{\pi_{\theta_k}}$ and $P^{\pi_{\theta_k}}(s,a|s_0)=0$ if $\pi(a|s;\theta_k)=0$.
From the Assumptions on Lipschitz continuity and Finite MDPs reward and costs, we can bound the values of the functions and the gradients of functions. Specifically
\begin{align*}
    || \nabla_\theta \pi(a|s; \theta)|_{\theta=\theta_k}||^2_\infty &\leq K_1(1 + ||\theta_k||^2_\infty),\\
    ||Q(s,a;\eta_k)||^2_\infty &\leq \frac{R_{\max}}{1-\gamma}, \\
    ||Q_h(s,a;\kappa_k)||^2_\infty &\leq \frac{H_{\max}}{1-\gamma}, \\
    ||\lambda_\omega||^2_\infty &\leq \lambda_{\max}, \\
    ||1 - p_{\xi_k}(s)||^2_\infty &\leq 1, \\
    ||p_{\xi_k}(s)||^2_\infty &\leq 1
\end{align*}
where $K_1$ is a Lipschitz constant. Furthermore, note that because we are sampling, $\pi(a|s;\theta_k)$ will take on only a finite number of values, so its nonzero values will be bounded away from zero. Thus we can say
\begin{align*}
    \frac{1}{\min\{ \pi(a|s;\theta_k) | \pi(a|s;\theta_k) > 0\}} \leq K_2
\end{align*}
for some large enough $K_2$. Thus using the bounds from these conditions, we can demonstrate
\begin{align*}
    \E[||\delta \theta_{k+1}||^2 | \mathcal{F}_{\theta,k}] \leq 2\cdot K_1(1 + ||\theta_k||^2_\infty)\cdot K_2(\frac{R_{\max}}{1-\gamma}\cdot 1 + \frac{H_{\max}}{1-\gamma}\cdot(\lambda_{\max}\cdot 1 + 1)) < \infty
\end{align*}
Therefore $\delta \theta_{k+1}$ is square integrable.

\emph{Lemma 2}: Secondly, we can demonstrate $\delta \theta_\epsilon\rightarrow 0$.
\begin{align*}
    &\delta \theta_\epsilon = \sum_{s_i,a_i} d_0(s_0)P^{\pi_{\theta_k}}(s_i,a_i|s_0)\biggl[\gamma^i \bigl[(Q(s_i,a_i;\eta_k)-Q^{\pi_{\theta_k}}(s_i)) [1 - p_{\xi_k}(s_i)] \\
    &+ (-Q_c(s_i,a_i; \kappa_k)+Q^{\pi_{\theta_k}}_c(s_i,a_i)) [\lambda_\omega (1 - p_{\xi_k}(s_i)) + p_{\xi_k}(s_i)]\bigr]\nabla_\theta \log \pi(a_i | s_i; \theta) |_{\theta=\theta_k}\biggr]\\
    &\leq \sum_{s_i,a_i} d_0(s_0)P^{\pi_{\theta_k}}(s_i,a_i|s_0)\biggl[\gamma^i \bigl[(Q(s_i,a_i;\eta_k)-Q(s_i,a_i;\eta^*)) [1 - p_{\xi_k}(s_i)] \\
    &+ (-Q_c(s_i,a_i; \kappa_k)+Q_c(s_i,a_i; \kappa^*)) [\lambda_\omega (1 - p_{\xi_k}(s_i)) + p_{\xi_k}(s_i)]\bigr]\nabla_\theta \log \pi(a_i | s_i; \theta) |_{\theta=\theta_k}\biggr]\\
    &\leq \sum_{s_i,a_i} d_0(s_0)P^{\pi_{\theta_k}}(s_i,a_i|s_0)\biggl[\gamma^i \bigl[||Q(s_i,a_i;\eta_k)-Q(s_i,a_i;\eta^*)|| [1 - p_{\xi_k}(s_i)] \\
    &+ ||-Q_c(s_i,a_i; \kappa_k)+Q_c(s_i,a_i; \kappa^*)|| [\lambda_\omega (1 - p_{\xi_k}(s_i)) + p_{\xi_k}(s_i)]\bigr]\nabla_\theta \log \pi(a_i | s_i; \theta) |_{\theta=\theta_k}\biggr]
\end{align*}
And because we have $||Q(s,a;\eta_k)-Q(s,a;\eta^*)||\rightarrow 0$ and $||Q_c(s,a;\kappa_k)-Q_c(s,a;\kappa^*)||\rightarrow 0$ almost surely, we can therefore say $\delta\theta_\epsilon\rightarrow 0$.

\emph{Lemma 3}: Finally, since $\hat{\nabla}_\theta J_\pi (\theta)|_{\theta=\theta_k}$ is a sample of $\nabla_\theta L(\theta, \xi_k, \omega_k)|_{\theta=\theta_k}$ based on the history of sampled trajectories, we conclude that $\E[\delta \theta_{k+1} | \mathcal{F}_{\theta,k}]=0$.

From the $3$ above lemmas, the policy $\theta$ update is a stochastic approximation of a continuous system $\theta(t)$ defined by~\cite{borkar2009stochastic}
\begin{align}
\label{eqn:ThetaODE}
    \dot{\theta} = \Upsilon_\Theta[-\nabla_\theta L(\theta, \xi, \omega)]
\end{align}
in which
\begin{align*}
    \Upsilon_\Theta[M(\theta] \overset{\Delta}{=} \lim\limits_{0<\psi \to 0}\frac{\Gamma_\Theta(\theta + \psi M(\theta)) - \Gamma_\Theta(\theta)}{\psi}
\end{align*}
or in other words the left directional derivative of $\Gamma_\Theta(\theta)$ in the direction of $M(\theta)$. Using the left directional derivative $\Upsilon_\Theta[-\nabla_\theta L(\theta, \xi, \omega)]$ in the gradient descent algorithm for learning the policy $\pi_\theta$ ensures the gradient will point in the descent direction along the boundary of $\Theta$ when the $\theta$ update hits its boundary. Using Step 2 in Appendix A.2 from~\cite{chow2017risk}, we have that $dL(\theta,\xi,\omega)/dt=-\nabla_\theta L(\theta,\xi,\omega)^T\cdot\Upsilon_\Theta[-\nabla_\theta L(\theta, \xi, \omega)] \leq 0$ and the value is non-zero if $||\Upsilon_\Theta[-\nabla_\theta L(\theta, \xi, \omega)]|| \neq 0$. Now consider the continuous system $\theta(t)$. For some fixed $\xi$ and $\omega$, define a Lyapunov function
\begin{align*}
    \mathcal{L}_{\xi, \omega}(\theta) = L(\theta,\xi,\omega) - L(\theta^*, \xi, \omega)
\end{align*}
where $\theta^*$ is a local minimum point. Then there exists a ball centered at $\theta^*$ with a radius $\rho$ such that $\forall \theta\in \mathfrak{B}_{\theta^*}(\rho)=\{\theta | ||\theta - \theta^*|| \leq \rho  \}$, $\mathcal{L}_{\xi, \omega}(\theta)$ is a locally positive definite function, that is  $\mathcal{L}_{\xi, \omega}(\theta) \geq 0$. Using Proposition 1.1.1 from~\cite{bertsekas1997nonlinear}, we can show that $\Upsilon_\Theta[-\nabla_\theta L(\theta, \xi, \omega)]|_{\theta=\theta^*}=0$ meaning $\theta^*$ is a stationary point. Since $dL(\theta,\xi,\omega)/dt\leq 0$, through Lyapunov theory for asymptotically stable systems presented in Chapter 4 of~\cite{khalil2002}, we can use the above arguments to demonstrate that with any initial conditions of $\theta(0)\in \mathfrak{B}_{\theta^*}(\rho)$, the continuous state trajectory of $\theta(t)$ converges to $\theta^*$. Particularly, $L(\theta^*,\xi,\omega) \leq L(\theta(t), \xi, \omega) \leq L(\theta(0), \xi, \omega)$ for all $t>0$.

Using these aforementioned properties, as well as the facts that 1) $\nabla_\theta L(\theta, \xi, \omega)$ is a Lipschitz function (using Proposition $17$ from~\cite{chow2017risk}), 2) the step-sizes of Assumption on steps sizes, 3) $\delta \theta_{k+1}$ is a square integrable Martingale difference sequence and $\delta\theta_\epsilon$ is a vanishing error almost surely, and 4) $\theta_k \in \Theta, \forall k$ implying that $\sup_k  ||\theta_k||<\infty$ almost surely, we can invoke Theorem $2$ of chapter $6$ in~\cite{borkar2009stochastic} to demonstrate the sequence $\{\theta_k\}, \theta_k\in \Theta$ converges almost surely to the solution of the ODE defined by Equation~\ref{eqn:ThetaODE}, which additionally converges almost surely to the local minimum $\theta^*\in \Theta$.

\textbf{Step 3} (convergence of REF $p_\xi$ updates): Since $\omega$ is updated on a slower time scale that $\xi$, we can again treat $\omega$ as a fixed parameter at $\omega_k$ when updating $\xi$. Furthermore, in Time scale $3$, we know that the policy has converged to a local minimum, particularly $||\theta_k-\theta^*(\xi_k,\omega_k)|| = 0$. Now the bellman operator for REF is defined by
\begin{align*}
    \mathcal{B}_p[p](s) = \max\{\mathbbm{1}_{s\in S_v}, \gamma \E_{s'\sim \pi, P(s)}[p(s')]\}.
\end{align*}
We demonstrate this is a $\gamma$ contraction mapping as follows:
\begin{align*}
    &|\mathcal{B}_p[p](s) - \mathcal{B}_p[\hat{p}](s) | \\
    &= |\max\{\mathbbm{1}_{s\in S_v},  \gamma \E_{s'\sim \pi, P(s)}[p(s')]\} - \max\{\mathbbm{1}_{s\in S_v},  \gamma \E_{s'\sim \pi, P(s)}[\hat{p}(s')]\}| \\
    &\leq |\gamma \E_{s'\sim \pi, P(s)}[p(s')] - \gamma \E_{s'\sim \pi, P(s)}[\hat{p}(s')]| \\
    &= \gamma |\E_{s'\sim \pi, P(s)}[p(s') -\hat{p}(s')]| \\
    &\leq \gamma \sup_{s}|p(s) -\hat{p}(s)|=
    \gamma ||p-\hat{p}||_\infty
\end{align*}
So we can say that $p(s;\xi_k)$ will converge to $p(s;\xi^*)$ as $k\rightarrow\infty$ under the same assumptions of the Finite MDP and function approximator expressiveness in Step 1. Therefore, $\pi_{\theta_k}$ will also converge to $\pi^\diamond = \pi_{\theta^*(\xi^*,\omega_k)}$ as $k\rightarrow\infty$. And because $\pi_\theta$ is the sampling policy used to compute $p$, $p(s;\xi^*)=p^{\pi_{\theta^*(\xi^*,\omega_k)}}(s;\xi^*)=p^\diamond (s)$.

Notice that $\pi^\diamond$ is a locally minimum optimal policy for the following optimization (recall $\lambda_\omega$ is treated as constant in this timescale):
\begin{align*}
    \min_{\pi} \E_{s\sim d_0} \E_{a\sim \pi(\cdot | s)}\bigg[-Q^{\pi}(s,a)\cdot [1 - p^{\diamond}(s)] + Q^{\pi}_c(s,a)\cdot [(1 - p^{\diamond}(s))\lambda_\omega + p^{\diamond}(s)]\bigg]   
\end{align*}
and therefore also locally minimum optimal policy for optimization:
\begin{align*}
    \min_{\pi}\E_{s\sim d_0}\E_{a\sim \pi(\cdot | s)}\bigg[-Q^{\pi}(s,a) + Q^{\pi}_c(s,a)\cdot [\lambda_\omega + \frac{p^{\diamond}(s)}{(1 - p^{\diamond}(s))}]\bigg] \text{, if } p^{\diamond}(s) > 0\\
    \E_{s\sim d_0}\min_{\pi}\E_{a\sim \pi(\cdot | s)}\bigg[Q^{\pi}_c(s,a)\bigg] \text{, if } p^{\diamond}(s) = 0
\end{align*}

Since $\frac{p^{\diamond}(s)}{(1 - p^{\diamond}(s))} \geq 0$, and the $Q$ functions are always nonnegative, we can know that $\pi^\diamond$ is at least as safe as (i.e., its expected cumulative cost is at most that of) a locally optimal policy for the optimization:
\begin{align}
\label{eqn:NoReachOpt}
    \min_{\pi} \E_{s\sim d_0}\E_{a\sim \pi(\cdot | s)}\bigg[ -Q^{\pi}(s,a) + Q^{\pi}_c(s,a)\lambda_\omega\bigg]
\end{align}
As $\lambda_\omega$ approaches $\lambda_{\max}$, which in turn approaches $\infty$, the local minimum optimal policies of Equation~\ref{eqn:NoReachOpt} approach those of the optimization $\pi^\bigtriangleup = \arg\min_\pi \E_{s\sim d_0}\E_{a\sim \pi(\cdot | s)} Q^{\pi}_c(s,a)\lambda_\omega=\arg\min_\pi \E_{s\sim d_0}\E_{a\sim \pi(\cdot | s)} Q^{\pi}_c(s,a)$. Therefore, the feasible set of the REF $p^{\diamond}$ will approach that of the REF $p^{\pi^\bigtriangleup}$.

\textbf{Step 4} (convergence of lagrange multiplier $\lambda_\omega$ update): Since $\lambda_\omega$ is on the slowest time scale, we have that $||\theta_k - \theta^*(\omega)||=0$, $||\xi_k - \xi^*(\omega)||=0$, and $||Q_c(s,a;\kappa_k) - Q^{\pi_{\theta_k}}_c(s,a)||=0$ almost surely. Furthermore, due to the continuity of $\nabla_\omega L(\theta, \xi, \omega)$, we have that $||\nabla_\omega L(\theta, \xi, \omega)|_{\theta=\theta_k, \xi=\xi_k, \omega=\omega_k} - \nabla_\omega L(\theta, \xi, \omega)|_{\theta=\theta^*(\omega_k), \xi=\xi^*(\omega_k), \omega=\omega_k}||=0$ almost surely. The update of the multiplier using the gradient for Equation is:
\begin{align*}
    \omega_{k+1} &=\Gamma_\Omega [ \omega_k + \zeta_4(k)( \nabla_\omega L(\theta, \xi, \omega)|_{\theta=\theta_k, \xi=\xi_k, \omega=\omega_k})] \\
    &=\Gamma_\Omega [ \omega_k + \zeta_4(k)( Q_c(s_t, a_t; \kappa_k)[1-p(s_t; \xi_k)]\nabla_\omega \lambda_\omega |_{\omega=\omega_k})] \\
    &=\Gamma_\Omega [ \omega_k + \zeta_4(k)( \nabla_\omega L(\theta, \xi, \omega)|_{\theta=\theta^*(\omega_k), \xi=\xi^*(\omega_k), \omega=\omega_k} + \delta \omega_{k+1})]
\end{align*}
where 
\begin{align*}
    \delta \omega_{k+1} &= -\nabla_\omega L(\theta, \xi, \omega)|_{\theta=\theta^*(\omega_k), \xi=\xi^*(\omega_k), \omega=\omega_k} + Q_c(s_t, a_t; \kappa_k)[1-p(s_t; \xi_k)]\nabla_\omega \lambda_\omega |_{\omega=\omega_k} \\
    &= -\sum_{s_i,a_i} d_0(s_0)P^{\pi_{\theta_k}}(s_i,a_i|s_0)[Q^{\pi_{\theta^*}}_c(s_i,a_i)[1-p_{\xi^*}(s_i)]\nabla_\omega \lambda_\omega |_{\omega=\omega_k} ] \\
    &+ Q_c(s_t, a_t; \kappa_k)[1-p(s_t; \xi_k)]\nabla_\omega \lambda_\omega |_{\omega=\omega_k} \\ 
    &=-\sum_{s_i,a_i} d_0(s_0)P^{\pi_{\theta_k}}(s_i,a_i|s_0)[Q^{\pi_{\theta^*}}_c(s_i,a_i)[1-p_{\xi^*}(s_i)]\nabla_\omega \lambda_\omega |_{\omega=\omega_k} ] \\
    &+ [Q_c(s_t, a_t; \kappa_k)[1-p(s_t; \xi_k)] -  Q^{\pi_{\theta_k}}_c(s_t, a_t)[1-p(s_t; \xi_k)] +\\
    &Q^{\pi_{\theta_k}}_c(s_t, a_t)[1-p(s_t; \xi_k)] - Q^{\pi_{\theta_k}}_c(s_t, a_t)[1-p^\diamond(s_t)] + \\
    &Q^{\pi_{\theta_k}}_c(s_t, a_t)[1-p^\diamond(s_t)] ]\nabla_\omega \lambda_\omega |_{\omega=\omega_k} \\
    &=-\sum_{s_i,a_i} d_0(s_0)P^{\pi_{\theta_k}}(s_i,a_i|s_0)[Q^{\pi_{\theta^*}}_c(s_i,a_i)[1-p_{\xi^*}(s_i)]\nabla_\omega \lambda_\omega |_{\omega=\omega_k} ] \\
    &+ [(Q_c(s_t, a_t; \kappa_k)-Q^{\pi_{\theta_k}}_c(s_t, a_t))[1-p(s_t; \xi_k)]  +\\
    &Q^{\pi_{\theta_k}}_c(s_t, a_t)[p^\diamond(s_t)-p(s_t; \xi_k)] + \\
    &Q^{\pi_{\theta_k}}_c(s_t, a_t)[1-p^\diamond(s_t)]] \nabla_\omega \lambda_\omega |_{\omega=\omega_k}]
\end{align*}

Now, just as in the $\theta$ update convergence, we can demonstrate the following lemmas: 

\emph{Lemma 4}: $\delta \omega_{k+1}$ is square integrable since
\begin{align*}
    \E[||\delta\omega_{k+1}||^2| \mathcal{F}_{\omega,k}] \leq 2\cdot \frac{H_{\max}}{1-\gamma}\cdot 1 \cdot K_3(1 + ||\omega_k||^2_\infty) < \infty
\end{align*}
for some large Lipschitz constant $K_3$. Note that $\mathcal{F}_{\omega,k}=\sigma(\omega_m, \delta \omega_m, m\leq k)$ is the filtration for $\omega_k$ generated by different independent trajectories~\cite{chow2017risk}.

\emph{Lemma 5}: Because $||Q_c(s_t, a_t; \kappa_k)-Q^{\pi_{\theta_k}}_c(s_t, a_t)||_\infty \rightarrow 0$ and $||p^\diamond(s_t)-p(s_t; \xi_k)||_\infty \rightarrow 0$ and $Q^{\pi_{\theta_k}}_c(s_t, a_t)[1-p_{\xi^*}(s_t)] \nabla_\omega \lambda_\omega |_{\omega=\omega_k}$ is a sample of $Q^{\pi_{\theta^*}}_c(s_i,a_i)[1-p_{\xi^*}(s_i)]\nabla_\omega \lambda_\omega |_{\omega=\omega_k}$, we conclude that $\E[\delta \omega_{k+1} | \mathcal{F}_{\omega, k}]=0$ almost surely.

Thus, the lagrange multiplier $\omega$ update is a stochastic approximation of a continuous system $\omega(t)$ defined by~\cite{borkar2009stochastic}
\begin{align}
\label{eqn:OmegaODE}
    \dot{\omega} = \Upsilon_\Omega[-\nabla_\omega L(\theta, \xi, \omega)|_{\theta=\theta^*(\omega),\xi=\xi^*(\omega)}]
\end{align}
with Martingale difference error of $\delta \omega_k$ and where $\Upsilon_\Omega$ is the left direction derivative defined similar to that in Time scale 2 of the convergence of $\theta$ update. Using Step 2 in Appendix A.2 from~\cite{chow2017risk}, we have that $dL(\theta^*(\omega),\xi^*(\omega),\omega)/dt=\nabla_\omega {L(\theta,\xi,\omega)|_{\theta=\theta^*(\omega),\xi=\xi^*(\omega)}}^T\cdot \Upsilon_\Omega[\nabla_\Omega L(\theta, \xi, \omega)|_{\theta=\theta^*(\omega),\xi=\xi^*(\omega)}] \geq 0$ and the value is non-zero if $||\Upsilon_\Omega[\nabla_\omega L(\theta, \xi, \omega)|_{\theta=\theta^*(\omega),\xi=\xi^*(\omega)}]|| \neq 0$. 

For a local maximum point $\omega^*$, define a Lyapunov function as
\begin{align*}
    \mathcal{L}(\omega) = L(\theta^*(\omega),\xi^*(\omega),\omega^*) - L(\theta^*(\omega),\xi^*(\omega),\omega)
\end{align*}
Then there exists a ball centered at $\omega^*$ with a radius $\rho'$ such that $\forall \omega\in \mathfrak{B}_{\omega^*}(\rho')=\{\omega | ||\omega - \omega^*|| \leq \rho'  \}$, $\mathcal{L}(\omega)$ is a locally positive definite function, that is  $\mathcal{L}(\omega) \geq 0$. Also, $d\mathcal{L}(\omega(t))/dt = -dL(\theta^*(\omega),\xi^*(\omega),\omega)/dt\leq 0$ and is equal only when $\Upsilon_\Omega[\nabla_\omega L(\theta, \xi, \omega)|_{\theta=\theta^*(\omega),\xi=\xi^*(\omega)}] = 0$, so therefore $\omega^*$ is a stationary point. By leveraging Lyapunov theory for asymptotically stable systems presented in Chapter 4 of~\cite{khalil2002} we can demonstrate that for any initial conditions of $\omega \in \mathfrak{B}_{\omega^*}(\rho')$, the continuous state trajectory of $\omega(t)$ converges to the locally maximum point $\omega^*$.

Using these aforementioned properties, as well as the facts that 1) $\nabla_\omega L(\theta^*(\omega), \xi^*(\omega), \omega)$ is a Lipschitz function, 2) the step-sizes of Assumption on steps sizes, 3) $\{ \omega_{k+1} \}$ is a stochastic approximation of $\omega(t)$ with a Martingale difference error, and 4) convex and compact properties in projections used, we can use Theorem $2$ of chapter $6$ in~\cite{borkar2009stochastic} to demonstrate the sequence $\{\omega_k\}$ converges almost surely to a locally maximum point $\omega^*$ almost surely, that is $L(\theta^*(\omega),\xi^*(\omega),\omega^*) \geq L(\theta^*(\omega),\xi^*(\omega),\omega)$.

From Time scales $2$ and $3$ we have that $L(\theta^*(\omega),\xi^*(\omega),\omega) \leq L(\theta, \xi, \omega)$ while from Time scale $4$ we have that $L(\theta^*(\omega),\xi^*(\omega),\omega^*) \geq L(\theta^*(\omega),\xi^*(\omega),\omega)$. Thus, $L(\theta^*(\omega),\xi^*(\omega),\omega)  \leq L(\theta^*(\omega),\xi^*(\omega),\omega^*) \leq L(\theta, \xi, \omega^*)$. Therefore, $(\theta^*,\xi^*,\omega^*)$ is a local saddle point of $(\theta,\xi,\omega)$. Invoking the saddle point theorem of Proposition 5.1.6 in~\cite{bertsekas1997nonlinear}, we can conclude that $\pi(\cdot|\cdot; \theta^*)$ is a locally optimal policy for our proposed optimization formulation.
\end{proof}

\subsubsection{Remark on Bounding Lagrange Multiplier}
We can say our algorithm learns an REF that is closer to an optimally safe policy's REF as we take $\lambda_{\max}\rightarrow\infty$. Nonetheless, we want to put a bound on the $\lambda_{\max}$. This $\lambda_{\max}$ must be large enough so that choosing a policy that can reduce the expected cost returns by some non-zero amount is prioritized over increasing the reward returns. So any change in the reward critic terms must be less than any change in the cost critic term. If $H_{\Delta}$ is the minimum non-zero difference between any two cost values, and $P_{\min}$ is the minimum sampled non-zero likelihood of reaching a particular state and a point in the sample trajectory, then we can bound the maximum change in the reward returns and the maximum change on the weighted cost returns:
\begin{align*}
    \Delta \E_{s\sim d_0}[ V(s)] &\leq \frac{R_{\max}}{1-\gamma} \\
    \gamma^T\cdot H_{\Delta}\cdot P_{\min}\cdot (\lambda + \frac{\phi(s)}{(1 - \phi(s))}) &\leq \Delta \E_{s\sim d_0} [V_c(s)\cdot (\lambda + \frac{\phi(s)}{(1 - \phi(s))})]
\end{align*}

So we can find the bound for $\lambda_{\max}$:
\begin{align*}
    \Delta \E_{s\sim d_0} [V(s)\cdot (1-\phi(s))] &< \Delta \E_{s\sim d_0} [V_c(s)\cdot (\lambda\cdot (1 - \phi(s)) + \phi(s)) ]\\
    \Delta \E_{s\sim d_0} [V(s)] &< \Delta \E_{s\sim d_0} [V_c(s)\cdot (\lambda + \frac{\phi(s)}{(1 - \phi(s))}) ]\\
    \frac{R_{\max}}{1-\gamma} &< \gamma^T\cdot H_{\Delta}\cdot P_{\min}\cdot (\lambda+\frac{\phi(s)}{(1 - \phi(s))})\\
    \frac{R_{\max}}{(1-\gamma)\cdot\gamma^T\cdot H_{\Delta}\cdot P_{\min}} &<\lambda+\frac{\phi(s)}{(1 - \phi(s))} \\
    \frac{R_{\max}}{(1-\gamma)\cdot\gamma^T\cdot H_{\Delta}\cdot P_{\min}} - \frac{\phi(s)}{(1-\phi(s))} &<\lambda
\end{align*}

The second line holds since we are simply rearranging the comparative weightages of the reward and cost returns. Now $- \frac{\phi(s)}{(1-\phi(s))}\leq 0$ (recall that if $\phi(s)=1$, then $\lambda$ is irrelevant in the lagrangian optimization). Thus, if $\lambda>\frac{R_{\max}}{(1-\gamma)\cdot\gamma^T\cdot H_{\Delta}\cdot P_{\min}}$ then minimizing the cost returns is prioritized over maximizing reward returns.

\section{Complete Experiment Details and Analysis}

\subsection{Baselines}
We compare our algorithm~\textbf{RESPO} with $7$ other safety RL baselines, which can be divided to CMDP class and hard constraints class, and unconstrained Vanilla PPO for reference.

\underline{CMDP Approaches}

\emph{Proximal Policy Optimization-Lagrangian.} \textbf{PPOLag} is a primal-dual method using Proximal Policy Optimization~\cite{PPO} based off of the implementation found in~\cite{tessler}. The lagrange multiplier is a scalar learnable parameter.

\emph{Constraint-Rectified Policy Optimization.} \textbf{CRPO}~\cite{xu2021crpo} is a primal approach that switches between optimizing for rewards and minimizing constraint violations depending on whether the constraints are violated.

\emph{Penalized Proximal Policy Optimization.} \textbf{P3O}~\cite{Zhang2022PenalizedPP} is another primal approach based on applying the technique of clipping the surrogate objectives found in PPO~\cite{PPO} to CMDPs.

\emph{Projection-Based Constrained Policy Optimization.} \textbf{PCPO}~\cite{YangProj} is a trust-region approach that takes a step in policy parameter space toward optimizing for reward and then projects this policy to the constraint set satisfying the CMDP expected cost constraints.

\underline{Hard Constraints Approaches}

\emph{Reachability Constrained Reinforcement Learning.} \textbf{RCRL}~\cite{yu2022reachability} is a primal-dual approach where the constraint is on the reachability value function and the lagrange multiplier is represented by a neural network parameterized by state.

\emph{Control Barrier Function.} This \textbf{CBF}-based approach is inspired by the various energy-based certification approaches~\cite{dawson2023safe, Chang2021,qin2022quantifying,cbfsurvey,choi2021robust,pocarEYY}. This is implemented as a primal-dual approach where the control barrier-based constraint $\dot{h}(s) + \nu\cdot h(s)\leq 0$ is to ensure stabilization toward the safe set.

\emph{Feasible Actor Critic.} \textbf{FAC}~\cite{ma2021feasible} is another primal-dual approach similar to \textbf{RCRL} (i.e. it uses the NN representation of the lagrange multiplier parameterized by state) except that the constraint in \textbf{FAC} is based on the cumulative discount sum of costs in lieu of the reachability value function. It is important to note that \textbf{FAC} is originally meant for the CMDP framework (with some positive cost threshold), but we adapt it to hard constraints by making the cost threshold $\chi=0$. We do this to make a better comparison between an algorithm that relies on using the lagrange multiplier represented as a NN to learn feasibility with our approach of using our proposed REF function to learn the feasibility likelihood -- both approaches enforce a hard constraint on the cumulative discounted costs.

\subsection{Benchmarks}
We compare the algorithms on a diverse suite of environments including those from the Safety Gym, PyBullet, and MuJoCo suites and a multi-constraint, multi-drone environment.

\emph{Safety Gym.} In Safety Gym~\cite{ray2019benchmarking}, we examine CarGoal and PointButton which have $72$D and $76$D observation spaces that include lidar, accelerometer, gyro, magnetometer, velocimeter, joint angles, and joint velocities sensors. In the CarGoal environment, the car agent has a $72$D observation space and is supposed to reach a goal region while avoiding both hazardous spaces and contact with fragile objects. In PointButton, the point agent has a $76$D observation space and must press a series of specified goal buttons while avoiding 1) quickly moving objects, 2) hazardous spaces, 3) hitting the wrong buttons.

\emph{Safety PyBullet.} In Safety PyBullet~\cite{gronauer2022bullet}, we evaluate in BallRun and DroneCircle environments. In the BallRun environment, the ball agent must move as fast as possible under the constraint of a speed limit, and it must be within some boundaries. In DroneCircle, the agent is based on the AscTec Hummingbird quadrotor and is rewarded for moving clockwise in a circle of a fixed radius with the constraint of remaining within a safety zone demarcated by two boundaries. Note that we use this environment to evaluate our algorithm and the baselines in a stochastic setting. We ensure the MDP is stochastic by adding a $5\%$ gaussian noise to the transitions per step. 

\emph{Safety MuJoCo.} Furthermore, we compare the algorithms in with complex dynamics in MuJoCo. Specifically, we look at HalfCheetah and Reacher safety problems. In Safety HalfCheetah, the agent must move as quickly as possible in the forward direction without moving left of $x=-3$. However, unlike the standard HalfCheetah environment, the reward is based on the absolute value of the distance traveled. In this paradigm, it is easier for the agent to learn to quickly run backward rather than forward without any directional constraints. In the Safety Reacher environment, the robotic arm must reach a certain point while avoiding an unsafe region.

\emph{Multi-Drone environment.} We also compare in an environment with multiple hard and soft constraints. The environment requires controlling two drones to pass through a tunnel one at a time while respecting certain distance requirements. The reward is given for quickly reaching the goal positions. The two hard constraints involve \textbf{(H1)} ensuring neither drone collides into the wall and \textbf{(H2)} the distance between the two drones is more than 0.5 to ensure they do not collide. The soft constraint is that the two drones are within 0.8 of each other to ensure real-world communication. It is preferable to prioritize hard constraint \textbf{H1} over hard constraint \textbf{H2}, since colliding with the wall may have more serious consequences to the drones rather than violations of an overly precautious distance constraint -- as we will show, our algorithm \textbf{RESPO} can perform this prioritization in its optimization.

\newpage
\subsection{Hyperparameters/Other Details}
\begin{table}[h]
    \centering
    \begin{tabular}{c|c}
    Hyperparameters for Safe RL Algorithms & Values \\\hline
    \textbf{On-policy parameters} &\\
    Network Architecture & MLP \\
    Units per Hidden Layer & $256$ \\
    Numbers of Hidden Layers & $2$ \\
    Hidden Layer Activation Function & tanh \\
    Actor/Critic Output Layer Activation Function & linear \\
    Lagrange multiplier Output Layer Activation Function & softplus \\
    Optimizer & Adam \\
    Discount factor $\gamma$ & $0.99$ \\
    GAE lambda parameter & $0.97$ \\
    Clip Ratio & $0.2$ \\
    Target KL divergence & $0.1$ \\
    Total Env Interactions & $9e6$ \\
    Reward/Cost Critic Learning rate & Linear Decay $1e{-3}\rightarrow 0$\\
    Actor Learning rate & Linear Decay $3e{-4}\rightarrow 0$\\
    Lagrange Multiplier Learning rate & Linear Decay $5e{-5}\rightarrow 0$\\
    Number Seeds per algorithm per experiment & $5$\\\hline
    
    \textbf{RESPO specific parameters} &\\
    REF Output Layer Activation Function & sigmoid \\
    REF Learning rate & $1e{-4}\rightarrow 0$\\\hline

    \textbf{CBF specific parameters} &\\
    $\nu$ & $0.2$\\\hline
    
    \textbf{RCRL/FAC Note} &\\
    Lagrange Multiplier & $2$-Layer, MLP\\
     & (other algs just use scalar parameter) \\\hline
    
    \end{tabular}
    \vspace*{1mm}
    \caption{Hyperparameter Settings Details}
    \label{appendix_table:parameter_settings}
\end{table}

To ensure a fair comparison, the primal-dual based approaches and unconstrained Vanilla PPO were implemented based off of the same code base~\cite{ppolag}. The other three approaches were implemented based on~\cite{ji2023omnisafe} with the similar corresponding hyperparameters as the primal-dual approaches. We run our experiments on Intel(R) Core(TM) i7-8700 CPU @ 3.20GHz with $6$ cores. For Safety Gym, PyBullet, MuJoCo, and the multi-drone environments, each algorithm, per seed, per environment, takes $\sim4$ hours to train.

\newpage
\subsection{Double Integrator}
\label{section:DoubleIntegrator}

\begin{figure}[!h]
\centering
\includegraphics[width=0.6\linewidth]{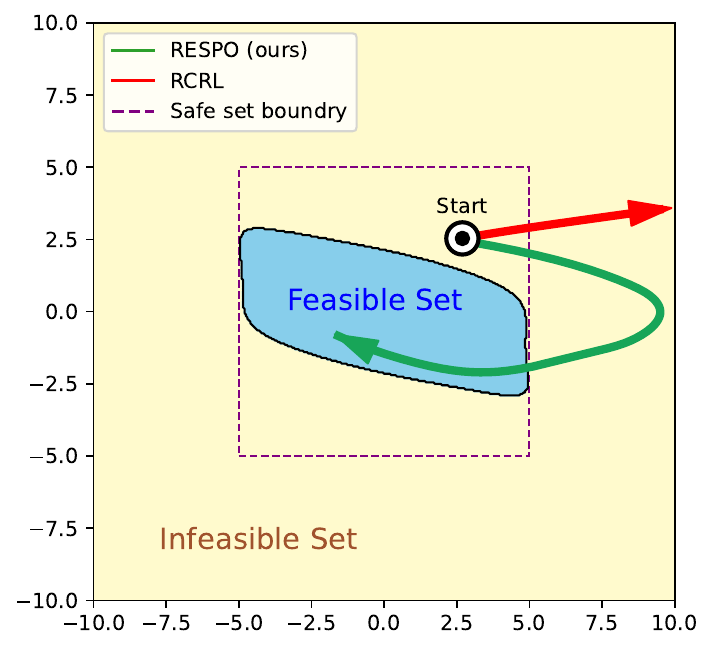}
\caption{Comparison of the trajectories in the Double Integrator Environment of an agent controlled by policies obtained by RCRL (in red) and our proposed algorithm RESPO (in green) when starting from the infeasible set (but still within the safe set). Notice how our approach actively enters the feasible set (blue region), while RCRL fails to do so. The level set demarcating the feasible/infeasible set boundary is in black. The safe set (i.e. the set of states that have no violations) is the region within the dashed purple square. The infeasible set is in yellow.}
\label{fig:DoubleIntegratorFirst}
\end{figure}

We use the Double Integrator environment as a motivating example to demonstrate how performing constrained optimization using solely reachability-based value functions as in \textbf{RCRL} can produce nonoptimal behavior when the agent is outside the feasiblity set. Double Integrator has a 2 dimensional observation space $[x_1, x_2]$, 1 dimension action space $a\in[-0.5,0.5]$, system dynamics is $\dot{s} = [x_2, a]$, and constraint as $||s||_\infty \leq 5$. Particularly, we make the cost as $1$ if $||s||_\infty > 5$, and $0$ otherwise to emphasize the importance of capturing the frequency of violation during training.

We train an \textbf{RCRL} controller and \textbf{RESPO} controller in this environment, and the results are visualized in Figure~\ref{fig:DoubleIntegratorFirst}. The color scheme indicates the learned reachability value across the state space while the black line demarcates the border of the zero level set. We present the behavior of the trajectories of \textbf{RCRL} and \textbf{RESPO}. Because the \textbf{RCRL} optimizes for reachability value function when outside the feasible set, it simply minimizes the maximum violation, which as can be seen does not result in the agent reentering the feasible set since it is uniformly equal to or near $1$ in the infeasible set. This is since it permits many violations of magnitude same or less than that of the maximum violation. On the other hand, \textbf{RESPO} optimizes for cumulative damage by considering total sum of costs, thereby re-entering the feasible set.

\newpage
\subsection{Safety Gym Environments}
\label{section:AbSGE}

\begin{figure}[!h]
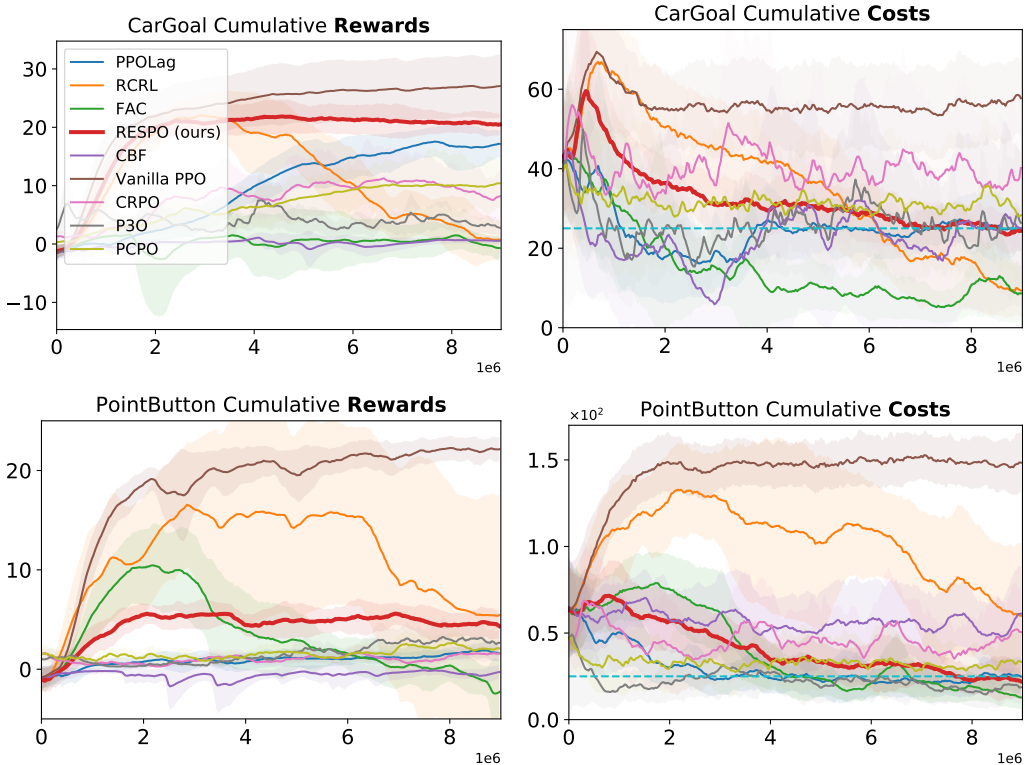

\centering
\includegraphics[width=0.49\linewidth]{MainExperiments/CarGoal_reward.pdf}
\includegraphics[width=0.49\linewidth]{MainExperiments/CarGoal_cost.pdf}
\includegraphics[width=0.49\linewidth]{MainExperiments/PointButton_reward.pdf}
\includegraphics[width=0.49\linewidth]{MainExperiments/PointButton_cost.pdf}
\caption{Closer look at comparison of algorithms in Safety Gym CarGoal and PointButton Environments.}
\label{fig:SafetyGymImages}
\end{figure}

\textbf{RESPO}: In the CarGoal Environment, our approach achieves the best performance among the safe RL algorithms while being within the acceptable range of cost violations. It is important to note that our algorithm \emph{has no access to information} on the cost threshold. In PointButton, \textbf{RESPO} achieves very high reward performance among the safety baselines while maintaining among the lowest violations.

\textbf{PPOLag}: In both Safety Gym environments, \textbf{PPOLag} maintains relatively high performance, albeit less than our approach. Nonetheless, it always converges to the cost threshold amount of violations for the respective environments. 

\textbf{RCRL}: \textbf{RCRL} has either high reward and high violations or low reward and low violations. It learns a very conservative behavior in CarGoal environment where the violations go down but the reward performance can also be seen to be sacrificed during training. For PointButton, \textbf{RCRL} achieves slightly higher reward performance but has over $3\times$ the number of violations as \textbf{RESPO}.

\textbf{FAC}: Using a NN to represent the lagrange multiplier in order to capture the feasible sets seems to produce very conservative behavior that sacrifices performance. In both CarGoal and PointButton the reward performance and cost violations are very low. This can be explained because the average observed lagrange multiplier across the states quickly grows, even becoming $9\times$ that of scalar learnable lagrange multiplier in \textbf{RESPO}.

\textbf{CBF}: The \textbf{CBF} approach has low reward performance in both the Safety Gym benchmarks and its cost violations are quite high.

\textbf{CRPO}, \textbf{P3O}, \& \textbf{PCPO}: These CMDP-based primal approaches have mediocre reward performance but, with the exception of CRPO, achieve violations within the cost threshold. CRPO, however, has high cost violations in both CarGoal and PointButton.

\textbf{Vanilla PPO:} This unconstrained algorithm consistently has high rewards and high costs, so maximizing rewards does not improve costs in these environments.

\newpage
\subsection{Safety PyBullet Environments}
\label{section:AbSPE}
%DroneCircle BallRun

\begin{figure}[!h]
\centering
\includegraphics[width=0.49\linewidth]{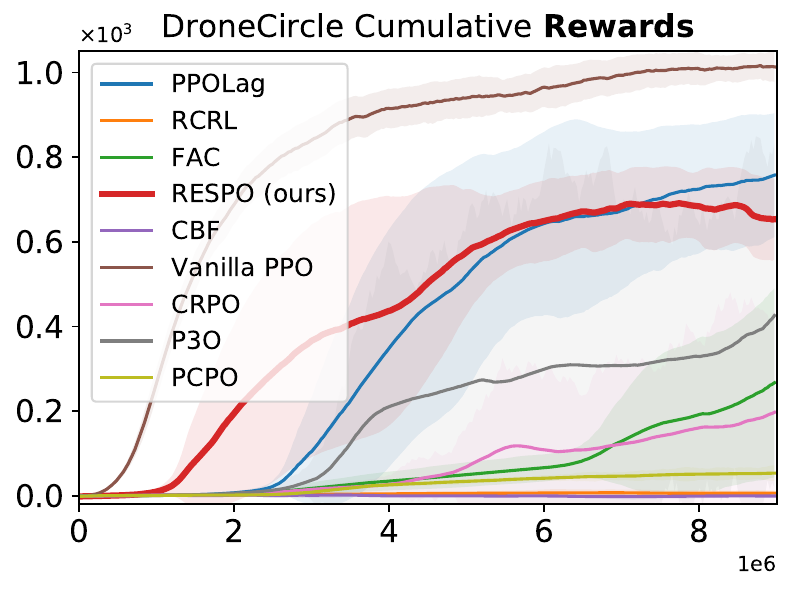}
\includegraphics[width=0.49\linewidth]{MainExperiments/DroneCircle_cost.pdf}
\includegraphics[width=0.49\linewidth]{MainExperiments/BallRun_reward.pdf}
\includegraphics[width=0.49\linewidth]{MainExperiments/BallRun_cost.pdf}
\caption{Closer look at comparison of algorithms in Safety PyBullet DroneCircle and BallRun Environments.}
\label{fig:SafetyPyBulletImages}
\end{figure}

\textbf{RESPO}: In the both BallRun and DroneCircle, our approach achieves the highest or among the highest reward performance compared to the other safety baselines. Furthermore, \textbf{RESPO} converges to almost $0$ constraint violations for both environments. 

\textbf{PPOLag}: In DroneCircle, \textbf{PPOLag} has the best reward performance. Although its costs violations are around the cost threshold, it is much higher than \textbf{RESPO}. On the other hand, in BallRun, PPOLag has very low reward performance.

\textbf{RCRL}: We again see \textbf{RCRL} take on behavior with extremes -- it has low reward and cost violations in Drone Circle and has high reward and cost violations in BallRun. Constraining the maximum violation with the reachability value function as \textbf{RCRL} does seems to provide poor safety in an environment with non-tangible constraints (i.e. a speed limit in BallRun).

\textbf{FAC}: While in BallRun, we see \textbf{FAC} have the same low reward and low violations behavior, DroneCircle shows an instance where \textbf{FAC} can achieve decently high rewards while maintaining low violations.

\textbf{CBF}: In DroneCircle, the \textbf{CBF} approach has low rewards and relatively low violations; in BallRun, it has a bit higher rewards compared to all the low performance algorithms with very low violations.

\textbf{CRPO}, \textbf{P3O}, \& \textbf{PCPO}: These CMDP-based primal approaches have mediocre reward performance in DroneCircle and very low performance in BallRun. Nonetheless they achieve violations within the cost threshold.

\textbf{Vanilla PPO:} This unconstrained algorithm consistently has high rewards and high costs (sometimes out of the scope of the plots), so maximizing rewards does not improve costs in these environments.

\newpage
\subsection{Safety MuJoCo Environments}
\label{section:AbSME}
%DroneCircle BallRun

\begin{figure}[!h]
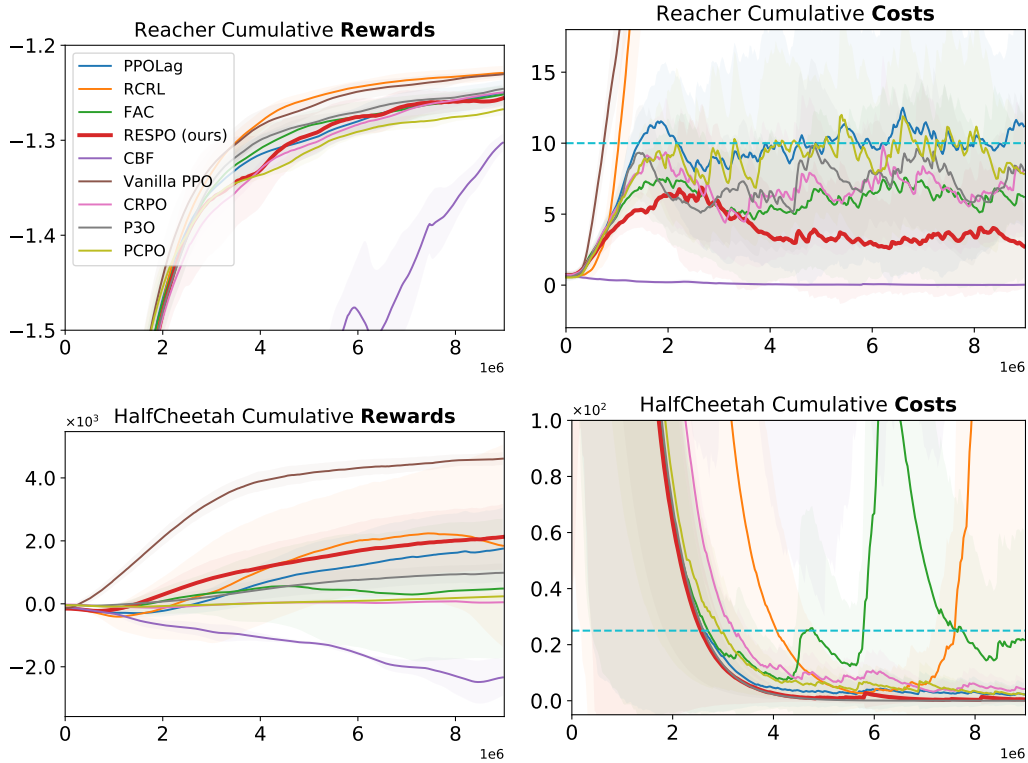

\centering
\includegraphics[width=0.49\linewidth]{MainExperiments/Reacher_reward.pdf}
\includegraphics[width=0.49\linewidth]{MainExperiments/Reacher_cost.pdf}
\includegraphics[width=0.49\linewidth]{MainExperiments/HalfCheetah_reward.pdf}
\includegraphics[width=0.49\linewidth]{MainExperiments/HalfCheetah_cost.pdf}
\caption{Closer look at comparison of algorithms in Safety MuJoCo Reacher and HalfCheetah Environments.}
\label{fig:SafetyMuJoCoImages}
\end{figure}

\textbf{Note on HalfCheetah}: The rewards for HalfCheetah are based on the absolute distance traveled in each step. Without the cost metric to constrain backward travel, it is easy to learn to run backward, as is the behavior learned in unconstrained PPO.

\textbf{RESPO}: Our approach achieves the highest reward performance among the safety baselines in HalfCheetah and decent reward performance in Reacher. Interestingly, \textbf{RESPO} also has $0$ constraint violations in HalfCheetah and the second lowest constraint violations in Reacher.

\textbf{PPOLag}: The performance in Reacher for \textbf{PPOLag} is similar as in most of the previous environments: decently high rewrad, cost near the threshold. However, for HalfCheetah, interesting \textbf{PPOLag} learns to maintain the violations well below the cost threshold.

\textbf{RCRL}: We see yet again \textbf{RCRL} has high reward follow by very high constraint violations.

\textbf{FAC}: This approach has decent reward performance in Reacher and low reward performance in HalfCheetah. However, interestingly, \textbf{FAC} has high cost violations though below the cost threshold.

\textbf{CBF}: In Reacher, the \textbf{CBF} approach has conservative behavior with both low reward and low cost violations. But in HalfCheetah, it has very low reward performance and very high cost violations (not seen in the plot since its an order of magnitude larger than the visible range).

\textbf{CRPO}, \textbf{P3O}, \& \textbf{PCPO}: These CMDP-based primal approaches have decent reward performance in Reacher while maintaining violations within cost threshold. In HalfCheetah, however, they achieve low performance and low cost violations.

\textbf{Vanilla PPO:} This unconstrained algorithm consistently has high rewards and high costs (sometimes out of the scope of the plots), so maximizing rewards does not improve costs in these environments.

\newpage
\subsection{Hard and Soft Constraints}
\label{sec:AppHSConstraints}

\begin{figure}[!h]
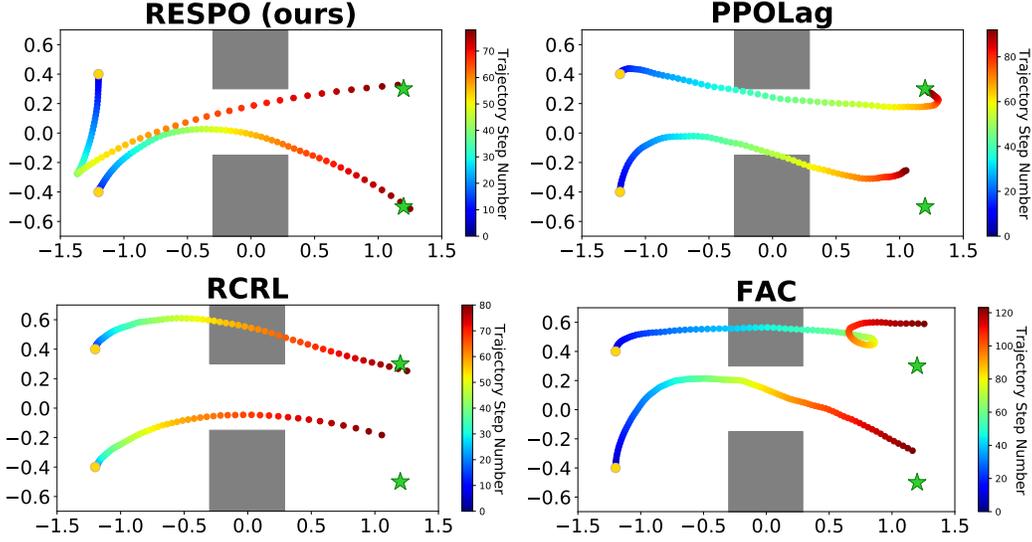

\centering
\includegraphics[width=0.49\linewidth]{HardSoftConstraints/pReach_hs.pdf}
\includegraphics[width=0.49\linewidth]{HardSoftConstraints/ppo_hs.pdf}
\includegraphics[width=0.49\linewidth]{HardSoftConstraints/rcrl_hs.pdf}
\includegraphics[width=0.49\linewidth]{HardSoftConstraints/fac_hs.pdf}
\vspace*{1mm}
\caption{Closer look at comparison of RESPO with baselines trajectories in hard \& soft Constraints multi-Drone control. Starting at gold circles, drones must enter the tunnel one at a time and reach green star while avoiding the wall and satisfying distance constraints. The colors indicate time along the trajectories.}
\label{fig:AppHS}
\end{figure}

\textbf{RESPO}: We manage multiple hard and soft Constraints by extending our framework to optimize the following Lagrangian:

\begin{equation}
\begin{split}
\label{eqn:L_loss_HS}
\min_\pi \max_\lambda \bigg( L(\pi, \lambda) = 
\E_{s\sim d_0}\bigg[\bigl[ [{\color{red} -V^\pi(s)} + {\color{blue} \lambda_{sc}\cdot (V^\pi_{sc}(s) - \chi)} + {\color{brown} \lambda_{hc2}\cdot V^\pi_{hc2}(s)]\cdot (1-p_{hc2}(s)) }\\
{\color{brown} + V^\pi_{hc2}(s)\cdot p_{hc2}(s) }+ {\color{teal} \lambda_{hc1}\cdot V^\pi_{hc1}(s)\bigr]\cdot (1-p_{hc1}(s)) +  V^\pi_{hc1}(s)\cdot p_{hc1}(s)} \bigg]\bigg)
\end{split}
\end{equation}

The subscripts $hc1$ indicates the first hard constraint (i.e. wall avoidance), $hc2$ indicates the second hard constraint (i.e. drone cannot be too close), and $sc$ indicates soft constraint (i.e. drone cannot be too far) -- they are all based on discounted sum of costs. Recall $V^\pi(s)$ is reward returns. We color coded the corresponding parts of the optimization. Notice how we learn a different REF for each hard constraint. Also notice that the feasible set of the first hard constraint $p_{hc1}$ is placed in a manner so as to ensure prioritization of the first hard constraint. As can be seen in the top left plot of Figure~\ref{fig:AppHS}, our approach successfully reaches the goals and avoids the walls. To enable mobility of the top drone to pass through the tunnel with wall collision, the second hard constraint is violated temporarily in the blue to cyan time period. Furthermore to allow the bottom drone to pass through the tunnel, the soft constraint is violated during the green to orange time period. Nonetheless, \textbf{RESPO} successfully manages the constraints and reward performance via Equation~\ref{eqn:L_loss_HS} optimization.

\textbf{PPOLag, RCRL, FAC}:
The optimization formulation for these approaches is as follows:
\begin{equation}
\begin{split}
\label{eqn:L_loss_HS_Baselines}    
\min_\pi \max_\lambda \bigg( L(\pi, \lambda) =  {\color{red} -V^\pi(s) }+ {\color{blue} \lambda_{sc}\cdot (V^\pi_{sc}(s) - \chi)} + {\color{brown}\lambda_{hc2}\cdot V^\pi_{hc2}(s)} + {\color{teal} \lambda_{hc1}\cdot V^\pi_{hc1}(s)}\bigg)
\end{split}
\end{equation}
For \textbf{PPOLag} and \textbf{FAC}, all the constraint value functions are discount sum of cost. For \textbf{RCRL}, $V^\pi_{sc}(s)$ is based on discount sum of costs but $V^\pi_{hc1}(s)$ and $V^\pi_{hc2}(s)$ are based on the reachability value function. Furthermore, in \textbf{PPOLag}, all the lagrange multipliers are learnable scalar parameters. In \textbf{FAC} and \textbf{RCRL} all the lagrange multipliers are NN representations parameterized by state. These formulations are not able to provide a framework for the prioritization of the constraint satisfaction -- all the constraints are treated the same, weighted only on the learned lagrange multipliers. As can be seen in the other three images in Figure~\ref{fig:AppHS}, the algorithms cannot manage the multiple constraints, and invariably collide with the wall.

\newpage
\subsection{Ablation -- Learning Rate}
\label{sec:AppAblationLR}

\begin{figure}[!h]
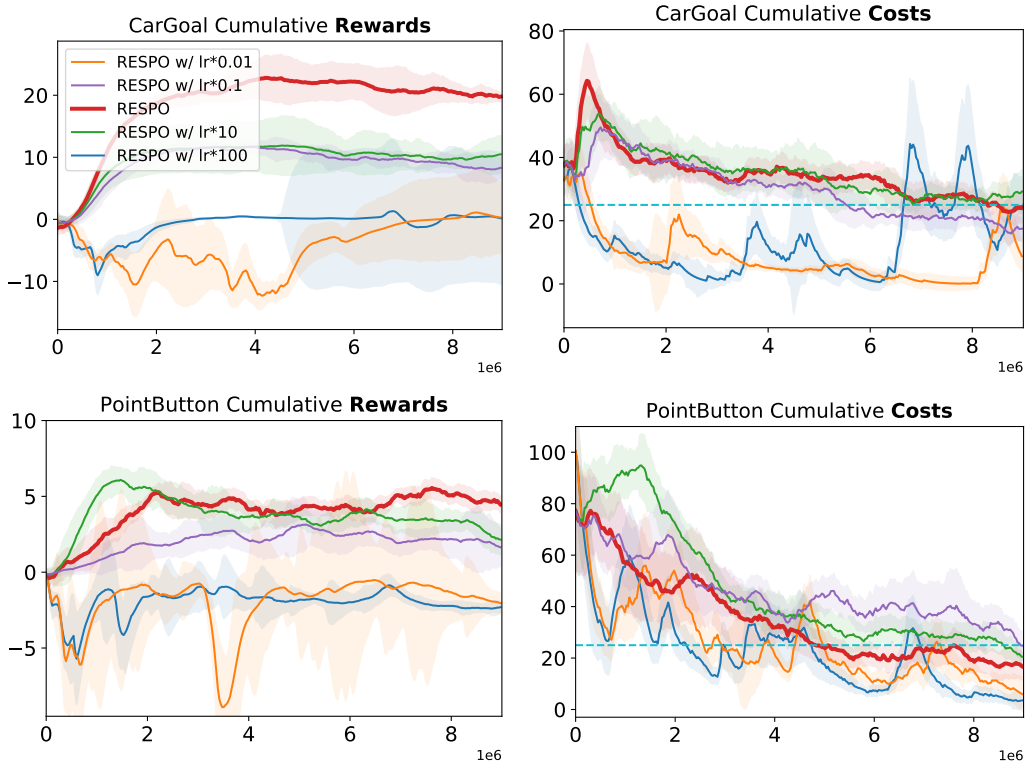

\centering
\includegraphics[width=0.49\linewidth]{ablation_images/CarGoal_reward_LR.pdf}
\includegraphics[width=0.49\linewidth]{ablation_images/CarGoal_cost_LR.pdf}
\includegraphics[width=0.49\linewidth]{ablation_images/PointButton_reward_LR.pdf}
\includegraphics[width=0.49\linewidth]{ablation_images/PointButton_cost_LR.pdf}
\caption{Closer look at Ablation study on the learning rate of REF.}
\label{fig:AppAbLR}
\end{figure}

We performance Ablation study of varying the learning rate of the REF function to verify the importance of the multi-timescale assumption. Particular, we compare our algorithm's approach of placing the learning rate of the REF between the policy and lagrange multiplier with making the REF's learning rate in various orders of magnitudes slower and faster. Our approach with the learning rate satisfying the multi-timescale assumption experimentally appears to still have the best balance of reward optimization and constraint satisfaction. Particularly when we change the learning rate by one order of magnitude (i.e. $\times10$ or $\times0.1$), we see the reward performance reduce by around half and while the cost violations generally don't change. But when we change the learning rates by another order of magnitude, there reward performance effective becomes zero and the cost violations generally reduce further. By increasing the learning rate of the REF function, we can no longer guarantee that the REF convergences to near the optimally safe REF value. Instead, it becomes the REF of the policy in question. So instead, the optimization can learn to ``hack" the REF function to obtain a policy (and lagrange multiplier) that is not a local optimal for the optimization formulation. On the other hand, when the learning rate is too slow, the lagrange multiplier quickly explodes, thereby creating a very conservative solution -- notice the similarity of the orange line in Figure~\ref{fig:AppAbLR} with learning rate $0.01$ times that of standard in training behavior with \textbf{PPOLag} where $\chi=0$ in the ablation study on optimization.

\newpage
\subsection{Ablation -- Optimization}
\label{sec:AppAblationOpt}

\begin{figure}[!h]
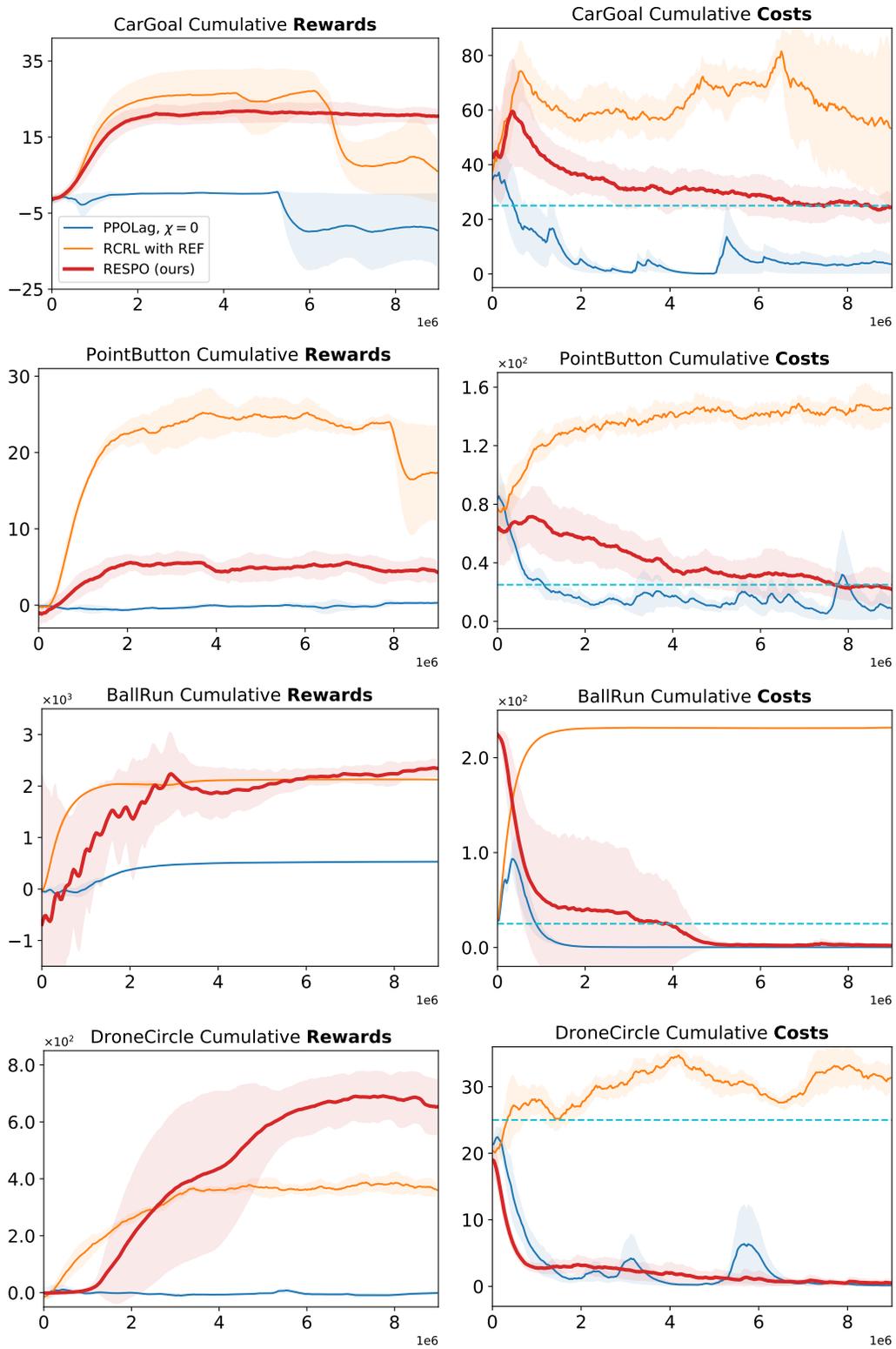

\centering
\includegraphics[width=0.49\linewidth]{ablation_images/CarGoal_reward_ablation.pdf}
\includegraphics[width=0.49\linewidth]{ablation_images/CarGoal_cost_ablation.pdf}
\includegraphics[width=0.49\linewidth]{ablation_images/PointButton_reward_ablation.pdf}
\includegraphics[width=0.49\linewidth]{ablation_images/PointButton_cost_ablation.pdf}
\includegraphics[width=0.49\linewidth]{ablation_images/BallRun_reward_ablation.pdf}
\includegraphics[width=0.49\linewidth]{ablation_images/BallRun_cost_ablation.pdf}
\includegraphics[width=0.49\linewidth]{ablation_images/DroneCircle_reward_ablation.pdf}
\includegraphics[width=0.49\linewidth]{ablation_images/DroneCircle_cost_ablation.pdf}
\caption{Closer look at Ablation study on hard constraints optimization frameworks.}
\label{fig:AppAbOpt}
\end{figure}

In this ablation study, we examine the various optimization frameworks within the context of hard constraints. Particularly, we compare our \textbf{RESPO} framework with \textbf{RCRL} and \textbf{CMDP}. However, for \textbf{RCRL} we implement using our REF function method while still keeping the reachability value function. For \textbf{CMDP}, we make the cost threshold $\chi=0$. These comparisons answer important questions about our design choices -- specifically is it sufficient to simply to just use the REF component or to just learn the cost returns alone? From this ablation study, we propose that though we have provided theoretical support for adding each of these design components individually, in practice they are both required together in our algorithm. In \textbf{RCRL} implemented with our REF, we generally see decently high rewards but the cost violations are always very large. This highlights the problems of the reachability function again -- if the agent starts or ever wanders into the infeasible set, there is no guarantee of (re)entrance into the feasible set. So the agent can indefinitely remain in the infeasible set, thereby incurring potentially an unlimited number of constraint violations. In \textbf{PPOLag} with $\chi=0$, both the reward performance and constraint violations are very low. By using such hard constraints versions of these purely learning-based methods, even when using the cumulative discounted cost rather than reachability value function, the reward performance is very low because the lagrange multiplier becomes too large quickly and thereby overshadows the reward returns in the optimization. Ultimately, both the REF approach and the usage of the cumulative discounted costs are important components of our algorithm \textbf{RESPO} that encourage a good balance between the reward performance and safety constraint satisfaction in such stochastic settings.

\end{document}